%% file: main.tex
\documentclass{article}

\usepackage{microtype}
\usepackage{graphicx}
\usepackage{subcaption}
\usepackage{booktabs} %
\usepackage{multirow}
\usepackage{xcolor}
\usepackage{dsfont}
\usepackage{amsmath}
\usepackage{enumitem}
\usepackage{bbm}
\usepackage{makecell}
\usepackage{nicefrac}
\usepackage{caption}
\usepackage{placeins}
\DeclareMathOperator*{\argmin}{arg\,min}
\DeclareMathOperator*{\argmax}{arg\,max}

\usepackage{hyperref}

\usepackage[accepted]{icml2026}

\usepackage{amsmath}
\usepackage{amssymb}
\usepackage{mathtools}
\usepackage{amsthm}
\usepackage{nicefrac}

\usepackage[table]{xcolor} %

\usepackage[capitalize,noabbrev]{cleveref}

\theoremstyle{plain}
\newtheorem{theorem}{Theorem}[section]
\newtheorem{proposition}[theorem]{Proposition}
\newtheorem{lemma}[theorem]{Lemma}

\theoremstyle{definition}

\theoremstyle{remark}

\icmltitlerunning{Semi-Supervised Alignment of Unimodal Vision and Language Models via Optimal Transport}

\begin{document}
\include{commands}

\twocolumn[
  \icmltitle{SOTAlign: Semi-Supervised Alignment of Unimodal \\ Vision and Language Models via Optimal Transport}

  \icmlsetsymbol{equal}{*}

  \begin{icmlauthorlist}
    \icmlauthor{Simon Roschmann}{equal,hm,tum,mcml,mdsi}
    \icmlauthor{Paul Krzakala}{equal,telecom,ecole}
    \icmlauthor{Sonia Mazelet}{ecole}
    \icmlauthor{Quentin Bouniot}{hm,tum,mcml,mdsi,telecom}
    \icmlauthor{Zeynep Akata}{hm,tum,mcml,mdsi}
  \end{icmlauthorlist}

  \icmlaffiliation{hm}{Helmholtz Munich}
  \icmlaffiliation{tum}{Technical University of Munich}
  \icmlaffiliation{mcml}{Munich Center for Machine Learning}
  \icmlaffiliation{mdsi}{Munich Data Science Institute}
  \icmlaffiliation{telecom}{Télécom Paris}
  \icmlaffiliation{ecole}{École Polytechnique}

  \icmlcorrespondingauthor{Simon Roschmann}{simon.roschmann@tum.de}

  \icmlkeywords{Machine Learning, ICML}

  \vskip 0.3in
]

\printAffiliationsAndNotice{\icmlEqualContribution}

\begin{abstract}
The Platonic Representation Hypothesis posits that neural networks trained on different modalities converge toward a shared statistical model of the world. Recent work exploits this convergence by aligning frozen pretrained vision and language models with lightweight alignment layers, but typically relies on contrastive losses and millions of paired samples.
In this work, we ask whether meaningful alignment can be achieved with substantially less supervision. We introduce a semi-supervised setting in which pretrained unimodal encoders are aligned using a small number of image–text pairs together with large amounts of unpaired data.
To address this challenge, we propose SOTAlign, a two-stage framework that first recovers a coarse shared geometry from limited paired data using a linear teacher, and then refines the alignment on unpaired samples via an optimal-transport-based divergence that transfers relational structure without overconstraining the target space.
\mbox{SOTAlign} effectively leverages unpaired images and text, learning robust joint embeddings across datasets and encoder pairs, and significantly outperforming supervised and semi-supervised baselines.
Code is available at \href{https://github.com/ExplainableML/SOTAlign}{https://github.com/ExplainableML/SOTAlign}.
\end{abstract}

\input{content/intro}

\input{content/method}

\input{content/linear_alignment}

\input{content/divergences}

\input{content/experiments/header}

\input{content/experiments/ablations}
\input{content/experiments/robustness}
\input{content/experiments/evaluation}
\input{content/conclusion}

\newpage

\section*{Acknowledgments}
This work was partially funded by the ERC (853489 - DEXIM) and the Alfried Krupp von Bohlen und Halbach Foundation, which we thank for their generous support. This work was supported by Hi! PARIS and ANR/France 2030 program (ANR-23-IACL-0005), by the French National Research Agency (ANR) through the France 2030 program under the MacLeOD project (ANR-25-PEIA-0005), and Hi! PARIS Synergy Chair ATLAS. It also received funding from the European Union’s Horizon Europe research and innovation programme under grant agreement 101120237 (ELIAS). Finally, it received funding from the Fondation de l’École polytechnique. We are grateful to R\'emi Flamary, Florence d'Alch\'e-Buc, and Charlotte Laclau for their helpful comments.

\section*{Impact Statement}
This work aims to advance research in machine learning, particularly the study of multimodal representation alignment. While improved alignment methods may have broad downstream applications, we do not identify any specific societal impacts that require explicit discussion here.
\bibliography{main}
\bibliographystyle{icml2026}

\newpage
\appendix
\onecolumn
\input{content/appendix/appendix_experimental_settings}
\input{content/appendix/appendix_additional_experiments}
\input{content/appendix/appendix_mathematical_details}

\end{document}

%% file: commands.tex
\newcommand{\mypar}[1]{\textbf{#1}\:}

\newif\ifshowcomments
\showcommentsfalse    %

\ifshowcomments
  \newcommand{\paul}[1]{\textcolor{red}{[Paul: #1]}} %
\else
  \newcommand{\paul}[1]{} %
\fi

\newcommand{\Div}[2]{\operatorname{DIV}\!\left(#1\,\middle|\middle|\,#2\right)}
\newcommand{\DivSM}[2]{\operatorname{DIV}_{sm}\!\left(#1\,\middle|\middle|\,#2\right)}
\newcommand{\KL}[2]{\operatorname{KL}\!\left(#1\,\middle||\,#2\right)}
\newcommand{\OT}{\operatorname{OT}}
\DeclarePairedDelimiter{\norm}{\lVert}{\rVert}
\newcommand{\R}[1]{\mathbb{R}^{#1}}
\newcommand{\one}{\mathbbm{1}}
\newcommand{\flatten}[1]{\text{flatten}(#1)}

\definecolor{brickred}{RGB}{150,50,45}
\makeatletter
\renewcommand{\algorithmiccomment}[1]{\hfill\textcolor{brickred}{\footnotesize\#~#1}}
\makeatother

%% file: content/intro.tex
\section{Introduction}

\begin{figure}
\centering
\includegraphics[width=0.95\linewidth]{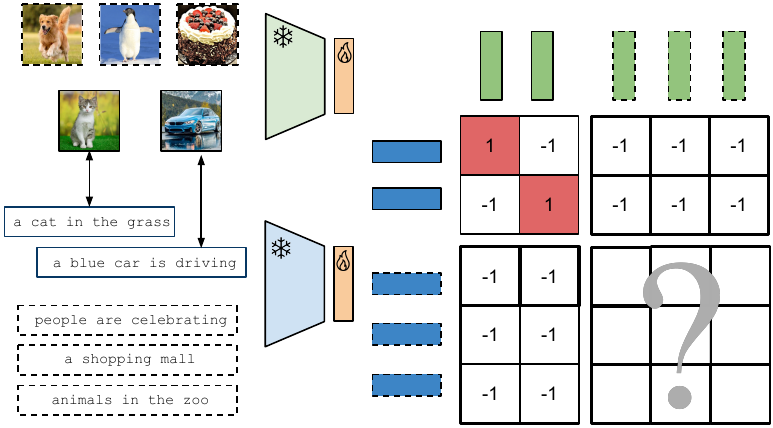}
\caption{\emph{Semi-supervised vision-language alignment.} We tackle the alignment of frozen unimodal encoders where paired data (red blocks) are scarce but unpaired data are abundant. The key challenge is: how to define a training signal for unpaired data when ground-truth cross-modal correspondences are missing?}
\label{fig:teaser}
\end{figure}

Vision-language models (VLMs) learn a shared embedding space for images and text, enabling zero-shot transfer to unseen concepts and domains. Since CLIP \citep{radford2021clip} and ALIGN \citep{jia2021align}, the dominant paradigm has relied on large-scale contrastive training on paired image-text data, with performance improving predictably as supervision scales. While effective, this approach requires hundreds of millions of paired samples \citep{cherti2023reproducible}, making VLMs costly to train and difficult to adapt when paired data are limited. This issue arises in many critical applications, such as specialized scientific, medical, or industrial domains, where collecting large-scale annotations is expensive, time-consuming, or infeasible.

In this paper, we investigate vision–language alignment beyond large-scale supervision, asking whether meaningful alignment can be achieved from pretrained encoders using only a small number of paired samples together with abundant unimodal data. We posit that, under the Platonic Representation Hypothesis \citep{huh2024platonic}, unimodal models should already encode compatible semantic structures, making such alignment possible with minimal supervision.

\mypar{SOTAlign Overview.} We focus on training lightweight alignment layers on top of pretrained unimodal encoders. In this setting, we first demonstrate that meaningful cross-modal alignment can be recovered from very \emph{few paired samples} using simple linear methods, providing empirical support for the Platonic Representation Hypothesis. Then, we introduce SOTAlign (Semi-supervised Optimal Transport-based Alignment), a simple approach that enables to further refine such alignment by leveraging \emph{large unimodal datasets}, achieving state-of-the-art results in this \emph{semi-supervised} setting. SOTAlign relies on KLOT, an \emph{optimal-transport-based divergence} that enables to transfer the initial geometric structure of the linear teacher while allowing sufficient flexibility to avoid underfitting. Critically, we derive the \emph{explicit gradient} of KLOT divergence, removing the memory bottlenecks that have limited the scalability of previous optimal-transport-based alignment methods. In the experimental section, we implement strong supervised and semi-supervised baselines and demonstrate the superior performances of SOTAlign across a wide range of downstream tasks. We carefully explore the robustness of SOTAlign to a variety of factors such as the number of paired samples, the number of unimodal samples, and the pretrained unimodal models.
Critically, we also show that SOTAlign can leverage samples from multiple sources simultaneously, 
for example combining unimodal images from ImageNet and captions from CC12M to improve performances on COCO despite a significant distribution shift. 

In short, we make the following contributions:

\begin{itemize}[itemsep=1.6pt, topsep=1.6pt]
    \item We show that meaningful vision--language alignment can be recovered from very few paired samples using simple linear methods.
    \item We introduce SOTAlign, a semi-supervised approach that leverages unpaired unimodal data to achieve state-of-the-art alignment.
    \item We propose KLOT, a novel optimal-transport-based divergence, and fully address the memory bottlenecks that plagued prior OT-based methods.
    \item We validate the robustness of SOTAlign through extensive experiments across tasks, datasets, and encoders.
\end{itemize}

\section{Related Work}

\mypar{Vision--Language Models.} VLMs learn a joint embedding space for images and text, enabling zero-shot transfer across downstream tasks. CLIP \citep{radford2021clip} established this paradigm through large-scale contrastive pretraining on 400 million image--text pairs.
Subsequent work, including ALIGN \citep{jia2021align}, SigLIP \citep{zhai2023siglip}, and SigLIP 2 \citep{tschannen2025siglip2}, focused on scaling data and refining contrastive objectives to improve performance.
These efforts are consistent with empirical scaling laws observed for CLIP-style models \citep{cherti2023reproducible}, but also highlight a central limitation: achieving state-of-the-art performance \emph{requires millions or billions of paired samples}, which is impractical in many settings and modalities.

More recently, OT-CLIP \citep{shi2024otclip} proposed an Optimal Transport (OT) interpretation of the InfoNCE objective \citep{oord2018infonce}, viewing contrastive learning as inverse OT with a fixed identity transport plan. We adopt this perspective in the present work, but extend it beyond fully supervised settings by allowing target transport plans that are not restricted to the identity. Moreover, we derive an explicit expression for the gradient of the resulting objective (Theorem~\ref{th:gradient}), removing the memory bottlenecks that have limited OT-based approaches to small batch sizes.

\mypar{The Platonic Representation Hypothesis.} \citet{huh2024platonic} posit that neural networks trained on different modalities, architectures, or objectives tend to converge toward compatible latent representations that reflect shared underlying structure in the data. %
In the context of vision-language models, this perspective suggests that pretrained unimodal image and text encoders may already produce semantically aligned representations, even in the absence of explicit cross-modal training. 

This observation motivates an alternative approach to VLM construction, in which the pretrained encoders are kept frozen and only lightweight alignment layers are learned to reconcile their representation spaces. Several recent works adopt this paradigm, demonstrating that strong vision--language performance can be achieved without training multimodal models from scratch \citep{vouitsis2024fusemix, zhang2025sail, maniparambil2025freezealign, huang2025llm2clip}. UOT-RCL \cite{han2025uotrcl} additionally leveraged OT to make the alignment of frozen encoders robust to noisy labels. Our work follows this line of research, but focuses on regimes where paired supervision is severely limited.

\mypar{Low-Supervision Alignment.} A growing body of work has explored alignment under weak, limited, or absent supervision. 
In unimodal settings, \citet{jha2025harnessing} show that text embeddings can be aligned across representation spaces without paired data. Extending this idea to cross-modal alignment, \citet{maniparambil2024vision} and \citet{schnaus2025s} demonstrate that vision–language representations can also be matched without supervision, but rely on quadratic assignment problem solvers that scale only to a few hundred samples, limiting applicability.

Closer to our setting, S-CLIP \citep{mo2023sclip} introduces a semi-supervised framework in which optimal transport defines target similarities between unpaired images and paired captions, with promising results for domain adaptation of CLIP. In contrast, we define target similarities even between unpaired images and unpaired captions, enabling effective use of large-scale unimodal data on both sides (Figure~\ref{fig:teaser}).
SUE \citep{yacobi2025sue} also considers semi-supervised vision–language alignment, but is limited to a single dataset and a single downstream task. Our work generalizes this setting across tasks, datasets, and encoder combinations.
Finally, STRUCTURE \citep{groger2025structure} augments InfoNCE with a regularization term encouraging preservation of unimodal geometry. While evaluated in supervised settings, this idea could in principle leverage unpaired data and is therefore included as a baseline in our experiments.

%% file: content/method.tex
\section{Methodology}

\begin{figure*}[t]
\centering
\includegraphics[width=0.92\linewidth]{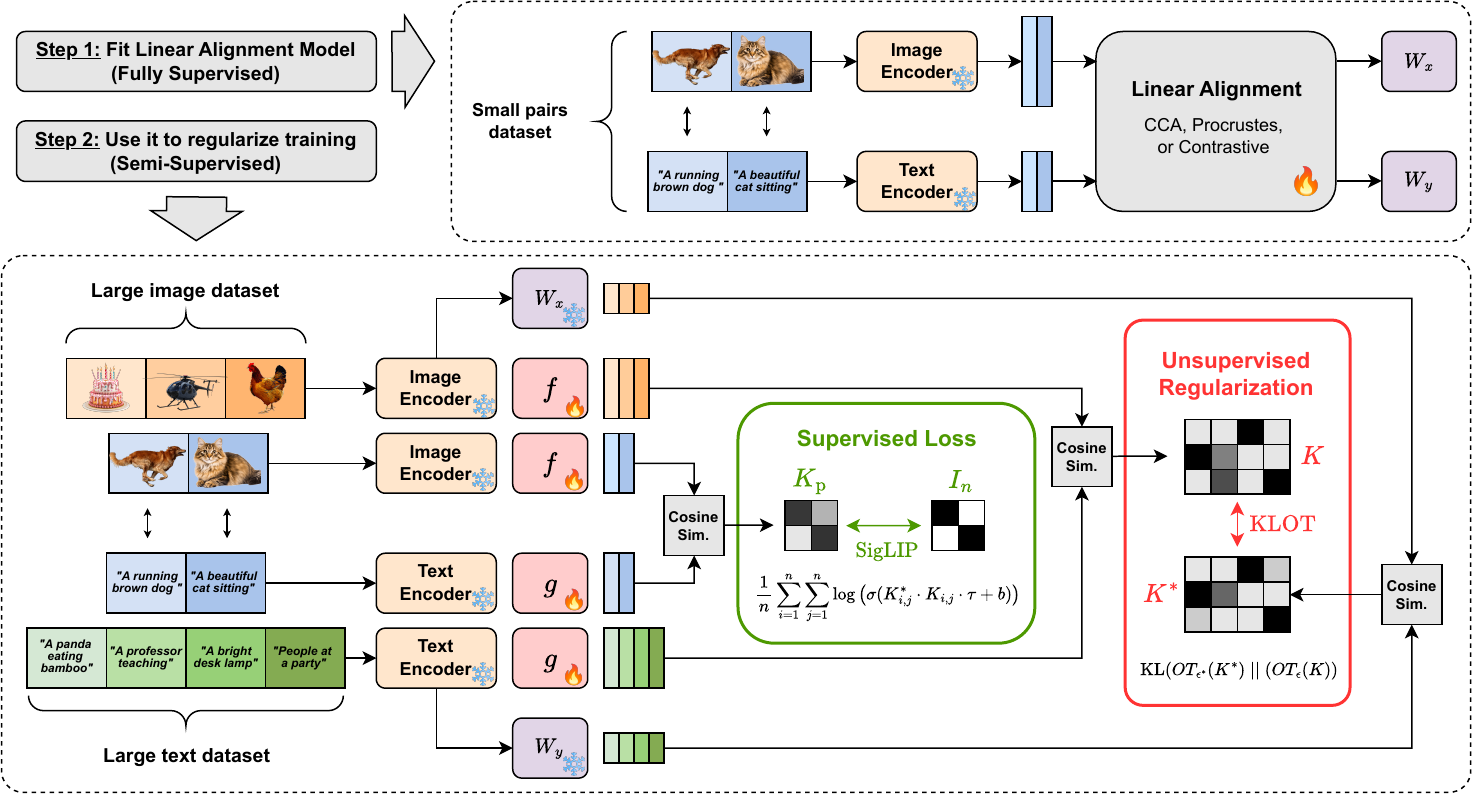}
\caption{SOTAlign is a two-step method for the alignment of pretrained unimodal image and text encoders. First, we fit a linear alignment model only using the limited amount of available image-text pairs. Then, we use this linear model as a teacher to regularize the training of alignment layers $f$ and $g$ for a joint embedding space leveraging unimodal (unpaired) data.}
\label{fig:main}
\vspace{-0.1cm}
\end{figure*}

\mypar{Notations.} For $u,v \in \mathbb{R}^d$, we denote the cosine similarity by
$k(u,v) = \frac{\langle u, v \rangle}{\norm*{u} \norm*{v} }$.
We stack batches of $n$ vectors as matrices in $\mathbb{R}^{n\times d}$.
Given $U \in \mathbb{R}^{n\times d}$ and $V \in \mathbb{R}^{m\times d}$, we define the affinity matrix
$K[U,V] \in \mathbb{R}^{n\times m}$ with entries
$$K[U,V]_{i,j} = k(U_i,V_j).$$
We define the (row-wise) Softmax normalization as 
\[
\operatorname{Softmax}_{\varepsilon}(K)_{i,j}
= \frac{\exp\!\left(\nicefrac{K_{i,j}}{\varepsilon}\right)}
       {\sum_{k=1}^n \exp\!\left(\nicefrac{K_{i,k}}{\varepsilon}\right)} .
\]

\subsection{Problem Formulation}

We denote $d_x$ (resp. $d_y$) the latent dimension of the pretrained vision (resp. language) encoder. We consider the problem of learning %
alignment layers $f_{\theta_1}:\R{d_x}\rightarrow \R{d}$ and $g_{\theta_2}:\R{d_y}\rightarrow \R{d}$ that encode vision and language into a shared space of dimension $d$, parametrized by $\theta=(\theta_1,\theta_2)$.

In this setting, the training objective is often formulated as minimizing the divergence between the geometry of the shared space and a target geometry. Formally, given a dataset of image and language embeddings $X \in \R{n \times d_x}$ and $Y \in \R{n \times d_y}$, the goal is to minimize
\begin{equation}    
\begin{aligned}
      &\mathcal{L}(\theta;X,Y)= \\ 
      & \Div{K[f_{\theta_1}(X),\,g_{\theta_2}(Y)]}{K^*[X,Y]}, 
\label{eq:main}
\end{aligned}
\end{equation}
where $K^*$ denotes the ``target geometry'' and $\operatorname{DIV}$ is some divergence between two affinity matrices. In the fully supervised setting, the dataset is made of pairs, i.e., $Y_i$ is the caption of $X_i$ and the target similarity is set to the identity 
\begin{equation}
K^*[X,Y] = I_n.
\end{equation}
For instance, the InfoNCE loss \cite{oord2018infonce}
\begin{equation}\label{eq:infonce}
    -\frac{1}{n} \sum_{i=1}^n 
    \log \frac{\exp\!\left(K\!\left[f(X), g(Y)\right]_{i,i}\right)}
    {\sum_{j=1}^n \exp\!\left(K\!\left[f(X), g(Y)\right]_{i,j}\right)}
\end{equation}
is a special case of \eqref{eq:main} for $K^* = I_n$ and
$\Div{K}{K^*} = \lim_{\varepsilon \to 0} \KL{\operatorname{Softmax}_{\varepsilon}(K^*)}{\operatorname{Softmax}_{1}(K)}$.

Thus, the main challenge to extend these approaches to unsupervised data is to introduce a target $K^*[X,Y]$ that is defined even if we \emph{don't} assume that $X$ and $Y$ are pairs.

\subsection{Semi-Supervised Setting}

We consider a semi-supervised setting with three types of data.
First, we observe a \emph{small set of paired samples} $(A,B)$, where
$A \in \mathbb{R}^{n_p \times d_x}$ and $B \in \mathbb{R}^{n_p \times d_y}$,
and each row of $A$ is aligned with the corresponding row of $B$.
In addition, we have access to large collections of unpaired data:
\emph{unlabeled images} $X \in \mathbb{R}^{n_x \times d_x}$ and
\emph{unlabeled text} $Y \in \mathbb{R}^{n_y \times d_y}$. Finally, we assume that the number of supervised pairs is limited, i.e.,
$
n_p \ll n_x$ and $n_p \ll n_y .
$
This setting is motivated by two considerations. First, it allows us to study how far supervision can be reduced while still enabling the recovery of a meaningful alignment between modalities, providing a direct test of the Platonic Representation Hypothesis. Second, such a regime reflects many practical scenarios in multimodal learning, where collecting paired data is expensive or infeasible and only a small number of aligned samples is available.\looseness=-1

\begin{algorithm}[t]
\caption{SOTAlign Training}
\label{alg:fewpair_alignment}
\begin{algorithmic}[1]
\REQUIRE $(A,B), X ,Y$

\STATE $(W_x,Wy) \leftarrow \text{LinearAlignment}(A, B)$ %
\STATE Initialize encoders $f$ and $g$
\FOR{$i = 1, \dots, T$}
    \STATE Sample $X_b \sim X$ \COMMENT{Sample batch of image embeddings}
    \STATE Sample $Y_b \sim Y$ \COMMENT{Sample batch of text embeddings}
    \STATE $K^* \leftarrow \text{cosine}(X_bW_x^\top, Y_bW_y^\top)$ 
    \STATE $K\leftarrow \text{cosine}(f(X_b), g(Y_b))$ 
    \STATE $K_p \leftarrow \text{cosine}(f(A), g(B))$ 

    \STATE $\mathcal{L} \leftarrow \operatorname{SigLIP}(K_p, I_{n_p}) + \alpha \operatorname{KLOT}(K, K^*)$
    \STATE Update $f, g$ using $\nabla \mathcal{L}$
\ENDFOR
\end{algorithmic}
\end{algorithm}

\subsection{SOTAlign}

We address this setting with a two step approach:

First, we fit a a simple \emph{linear alignment model} with the supervised pairs $(A,B)$. We denote $W_x \in \R{d' \times d_x}$ and $W_y \in \R{d' \times d_y}$ the linear projections produced by this model. An important finding of this work is that such linear models already yield \emph{surprisingly strong alignment}. We dedicate \cref{sec:linear_alignment} to this fundamental component of our method. 

Then we use this linear model to regularize the training of the final alignment layers $f_{\theta_1}$ and $g_{\theta_2}$ in a pseudo-labeling fashion. More precisely, we constrain the geometry of the learned shared space to stay close to that produced by the linear teacher. This regularization writes as
\begin{equation}    
\begin{aligned}
      & \Omega(\theta;X,Y)= \\ 
      & \Div{K\left[f_{\theta_1}(X),g_{\theta_2}(Y)] \right)}{K[XW_x^\top,YW_y^\top]}, 
\label{eq:reg}
\end{aligned}
\end{equation}
where the choice of the divergence $\operatorname{DIV}$ is the second core component of the method and is discussed in \cref{sec:div}.

Finally, our training loss is 
\begin{equation}
\label{eq: training loss}
    \mathcal{L}_\alpha(\theta;A,B,X,Y) = \mathcal{L}(\theta;A,B) + \alpha \Omega(\theta;X,Y),
\end{equation}
where $\alpha$ controls the strength of the regularization.

%% file: content/linear_alignment.tex
\section{Linear Alignment Model \label{sec:linear_alignment}}

The first core component of the proposed method is the linear alignment model. This model is trained with the limited amount of pairs available and then used as a teacher to regularize the training of the full semi-supervised model. As highlighted in the experimental section, such linear models achieve surprisingly strong alignment performances.
We now discuss a selection of suitable candidates.\looseness=-1

\mypar{Procrustes Alignment.} In the Orthogonal Procrustes problem, one tries to align two point clouds by looking for the orthogonal transformation that minimizes the RMSE \cite{schonemann1966generalized}. We slightly adapt its formulation to our setting by looking for two orthogonal transformations that map the pairs $(A,B)$ to a shared space. Formally, 
\begin{equation}
\label{eq:procruste}
\begin{aligned}
    (W_x, W_y)
    &= \arg\max_{P, Q} \, \langle A P^\top, B Q^\top \rangle \\
    &\text{s.t.} \quad
    P P^\top = Q Q^\top = I_{d'}.
\end{aligned}
\end{equation}
This formulation assumes that the data is first centered and normalized which is omitted here for the sake of simplicity. 

\mypar{Canonical Correlation Analysis.} In statistics, CCA is used to find a space in which two random variable are maximally correlated with each other \cite{mardia2024multivariate}. Adapted to our setting, the two random variables are the text and image embeddings and CCA writes as 
\begin{equation}
\label{eq:CCA}
\begin{aligned}
    &(W_x, W_y)
    = \arg\max_{P, Q} \, \langle A P^\top, B Q^\top \rangle \\
    \text{s.t.} \quad
    &(A P^\top)^\top (A P^\top)= (B Q^\top)^\top (B Q^\top) = I_{d'}.
\end{aligned}
\end{equation}
The main difference from Procrustes is that the orthogonality constraint is applied to the shared space directions instead of the transformation itself. The solutions to Equation \eqref{eq:procruste} and \eqref{eq:CCA} are provided in Appendix \ref{appendix:linear_alignement_models}.

\mypar{Contrastive Learning.} Finally, perhaps the most natural choice is to consider a linear projection trained with a classical contrastive learning approach, formally
\begin{equation}
    (W_x, W_y)
    = \arg\min_{P, Q} \, \Div{K[AP^\top,BQ^\top]}{I_{n_p}},
\end{equation}
where $\operatorname{DIV}$ is the InfoNCE loss defined in equation \eqref{eq:infonce} or an alternative such as the SigLIP loss \cite{zhai2023siglip}.

%% file: content/divergences.tex
\section{Choice of Divergence \label{sec:div}} 

The second core component of our method is the choice of a divergence
$\Div{K}{K^\ast}$, where $K, K^\ast \in \mathbb{R}^{n \times n}$ denote,
respectively, the affinity matrix induced by the trainable shared representation
and the target affinity matrix. This section examines several possible choices
for this divergence and discusses their respective strengths and limitations.

\mypar{Centered Kernel Alignment.} One of the most popular ways to compare affinity/kernel matrices is Centered Kernel Alignment (CKA) \cite{cristianini2001kernel}. Denoting $H = I_n - \one\one^\top$, CKA writes as
\begin{equation}
    \text{CKA}(K,K^*) = \frac{\langle K H, HK^* \rangle}{\sqrt{\langle K H, HK \rangle \langle K^* H, HK^* \rangle}} .
\end{equation}
CKA admits a linear-time computation in the batch size $n$ when implemented via kernel factorizations (Proposition~\ref{th:cka}), which constitutes a non-negligible practical property for alignment methods operating with large batches \citep{zhang2025sail}. However, CKA also suffers from known limitations \citep{davari2022reliability} and, more importantly in our setting, enforces a strong constraint of the form $K \approx K^\ast$. This can be overly restrictive when $K^\ast$ is only intended as a regularizing signal provided by a linear teacher, rather than an exact target geometry.

\mypar{Generalized InfoNCE.} As highlighted above, it might be beneficial to use a regularization that does not enforce $K$ to be exactly aligned with $K^*$. To this end, one can consider the generalized InfoNCE loss \cite{shi2024otclip} as it only enforces that $\argmax_{j} K_{i,j} \approx \argmax_{j} K^*_{i,j}$. This is achieved by first applying a Softmax on the affinity matrices before comparing them, i.e., 
\begin{equation}
\begin{aligned}
 &\operatorname{InfoNCE}(K \mid\mid K^*) = \\
 &\KL{\operatorname{Softmax}_{\epsilon^*}(K^*)}{\operatorname{Softmax}_{\epsilon}(K)}.
\end{aligned}
\end{equation}
The classical version is recovered for $\varepsilon^*\!\to\!0$ and $K^*=I_n$.\looseness=-1

\mypar{KLOT.} Given the previous interpretation of InfoNCE, we consider a natural extension that seeks the preservation of the entire Optimal Transport (OT) plan instead of only the nearest neighbor. Introducing $\Pi_n
= \bigl\{\, P \in \mathbb{R}_+^{n \times n}
\;\big|\;
P \mathbf{1} = \mathbf{1},
\; P^\top \mathbf{1} = \mathbf{1}
\,\bigr\},$  the OT plan is defined as
\begin{equation}
    OT_\epsilon(K)  = \argmin_{P\in \Pi_n} - \langle P, K \rangle + \epsilon H(P),
\end{equation}
where $H(P) = \langle P, \log P \rangle$ is the negative entropy.  Note that it is a natural extension of Softmax as $\operatorname{Softmax}_{\epsilon}(K) = \argmin_{P \mathbf{1} = \mathbf{1}} - \langle P, K \rangle + \epsilon H(P)$ and, similarly to Softmax, setting $\epsilon=0$ recovers a strict one-to-one mapping. More details about OT are available in Appendix \ref{appendix_ot}.

Then, we define the KLOT divergence as\label{eq:KL_ot}
\begin{equation}
\operatorname{KLOT}(K \mid\mid K^*) = 
\KL{OT_{\epsilon^*}(K^*)}{OT_{\epsilon}(K)}.
\end{equation}
This formulation is similar to that proposed by \citet{van2023snekhorn} for the purpose of dimensionality reduction and generalizes \citep{shi2024otclip} beyond $K^*=I_n$.

The main limitation of KLOT \eqref{eq:KL_ot} is that $OT_{\epsilon}(K)$ does not admit a closed-form solution. While the Sinkhorn algorithm \cite{cuturi2013sinkhorn} provides fast convergence on GPUs, computing the gradient $\nabla_K OT_{\epsilon}(K)$ remains challenging.
Existing approaches rely either on backpropagating through the Sinkhorn iterations by unrolling the algorithm \cite{genevay2018learning}, which induces a severe memory bottleneck, or on implicit differentiation techniques \cite{eisenberger2022unified}, which significantly increase time complexity.
We \emph{fully address} this limitation by deriving an explicit expression for the gradient (Theorem~\ref{th:gradient}). 
\begin{theorem}\label{th:gradient} 
The gradient of the KLOT divergence is,
\begin{equation}
\nabla_{K} \operatorname{KLOT}(K \mid\mid K^*) = \frac{\text{OT}_{\epsilon}(K) - \text{OT}_{\epsilon^*}(K^*)}{\epsilon}.
\end{equation}
Proof is provided in Appendix \ref{appendix_ot}.  We also refer to this blog post \cite{inverse_ot_unrolling}, which draws an insightful connection to the Monge Gap regularizer \citep{uscidda2023monge}.
\end{theorem}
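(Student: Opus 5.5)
The plan is to avoid differentiating through the Sinkhorn fixed point altogether. Instead, I will rewrite $\KL{OT_{\epsilon^*}(K^*)}{OT_{\epsilon}(K)}$ as a term that is linear in $K$ plus the optimal value of the entropic OT problem, and then differentiate that optimal value with an envelope (Danskin) argument. Write $P^* = OT_{\epsilon^*}(K^*)$, which does not depend on $K$, and $P(K) = OT_{\epsilon}(K)$.

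\textbf{Step 1 (first-order conditions).} The objective $-\langle P, K\rangle + \epsilon H(P)$ is strictly convex on $\Pi_n$, so $P(K)$ is unique. Its entries are positive, because the entropy gradient blows up at the boundary. The KKT conditions with multipliers $f, g \in \mathbb{R}^n$ for the two marginal constraints give the Gibbs form
\[
P(K)_{i,j} = \exp\!\left(\frac{K_{i,j} + f_i + g_j}{\epsilon}\right),
\qquad\text{equivalently}\qquad
\epsilon \log P(K) = K + f\one^\top + \one g^\top .
\]
Here I am absorbing the constant coming from $\nabla H$ into $f$. The pair $(f,g)$ is unique only up to the shift $(f + c\one, g - c\one)$. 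Every quantity used below depends on $f,g$ only through $\langle f,\one\rangle + \langle g,\one\rangle$, which is invariant under this shift, so the non-uniqueness is harmless.

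\textbf{Step 2 (rewrite the divergence).} Expand
\[
\KL{P^*}{P(K)} = \langle P^*, \log P^*\rangle - \langle P^*, \log P(K)\rangle ,
\]
where any generalized-KL mass terms are constant because both plans have total mass $n$. Substituting the Gibbs form and using $P^*\one = P^{*\top}\one = \one$ gives
\[
\langle P^*, \log P(K)\rangle = \frac{1}{\epsilon}\Bigl(\langle P^*, K\rangle + \langle f,\one\rangle + \langle g,\one\rangle\Bigr).
\]
Next, let $F(K) = \min_{P\in\Pi_n} -\langle P,K\rangle + \epsilon H(P)$ denote the optimal value. Evaluating it at $P(K)$ and using that $P(K)$ has unit marginals yields
\[
F(K) = -\langle P(K),K\rangle + \langle P(K), K + f\one^\top + \one g^\top\rangle = \langle f,\one\rangle + \langle g,\one\rangle .
\]
Combining the two displays,
\[
\operatorname{KLOT}(K\mid\mid K^*) = \mathrm{const} - \frac{1}{\epsilon}\langle P^*, K\rangle - \frac{1}{\epsilon} F(K).
\]

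\textbf{Step 3 (envelope theorem).} $F$ is a minimum over the compact set $\Pi_n$ of functions that are affine in $K$, and the minimizer $P(K)$ is unique. Danskin's theorem therefore gives that $F$ is differentiable with $\nabla_K F(K) = -P(K)$. Substituting into Step 2 gives $\nabla_K \operatorname{KLOT} = \bigl(P(K) - P^*\bigr)/\epsilon$, which is the claimed formula.

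\textbf{Main obstacle.} The delicate point is justifying Step 3 rigorously. Danskin's theorem requires uniqueness of the minimizer, which follows from strict convexity of $H$, and continuity of $K \mapsto P(K)$. As a fallback, I would apply the implicit function theorem to the Sinkhorn fixed-point equations for $(f,g)$ with the gauge fixed, say $f_n = 0$. That yields smoothness of $P(K)$, and the envelope identity then follows from the first-order optimality of $P(K)$.
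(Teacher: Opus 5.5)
Your proposal is correct and follows the paper's proof essentially step for step: the Gibbs/dual-potential form of $\mathrm{OT}_\epsilon(K)$ turns $\langle P^*, \log \mathrm{OT}_\epsilon(K)\rangle$ into $\frac{1}{\epsilon}\bigl(\langle P^*, K\rangle + F(K)\bigr)$, which is the paper's Lemma, and the envelope theorem with $\nabla_K F(K) = -\mathrm{OT}_\epsilon(K)$ then gives the formula. Your version is slightly more careful than the paper's, since you explicitly handle the gauge freedom of the potentials and justify the envelope step through Danskin's theorem with a unique minimizer.
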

As illustrated in Figure~\ref{fig:memory_sinhkorn}, our approach removes the memory bottleneck inherent to Sinkhorn unrolling and can be up to $50\times$ faster than implicit differentiation (Figure~\ref{fig:time_sinhkorn}). This is a general result that could potentially apply to a range of OT-based methods using similar objectives, including recent approaches in model alignment and contrastive learning \citep{van2023snekhorn, mo2023sclip, shi2024otclip}.

\begin{figure}
    \centering
    \includegraphics[width=\linewidth]{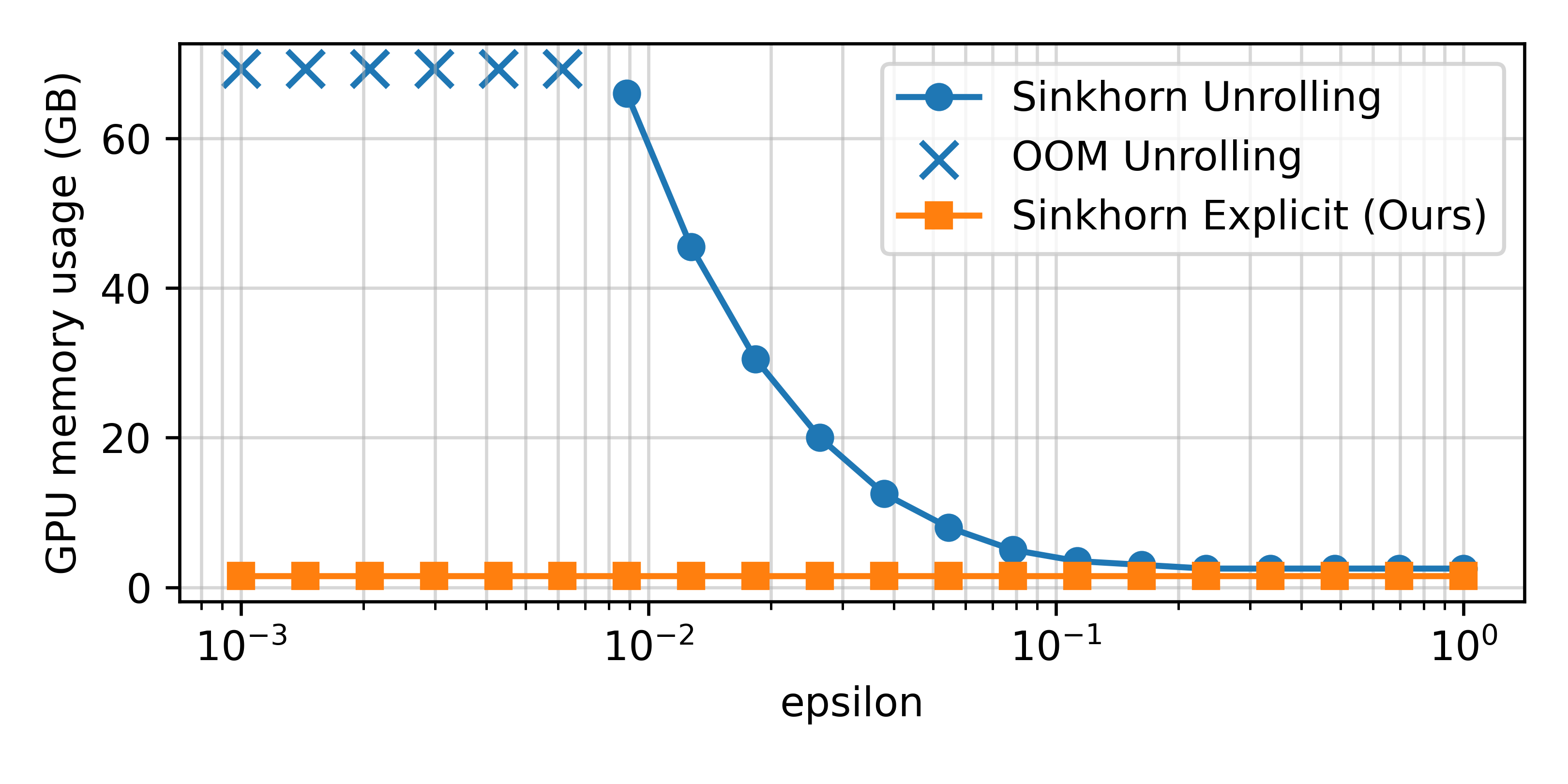}\\
    \caption{GPU memory usage for a batchsize $n=10$k when computing the gradient of the OT-based divergence with naive solver unrolling (blue) and the provided explicit gradient formula (orange). Additional results are reported in Appendix \ref{appendix_sinkhorn}.}
    \label{fig:memory_sinhkorn}
\end{figure}

%% file: content/experiments/header.tex
\section{Experiments}

\mypar{Experimental Setting.}
We train all models using a maximum batch size of 32k, composed of up to 10k paired samples and completed with unpaired images and text. 
We use the LION optimizer \citep{chen2023symbolic} with a cosine annealing learning-rate schedule, a maximum learning rate of $10^{-4}$, and a weight decay of $10^{-5}$, and train for 2000 iterations. For the supervised component of the loss, we employ the SigLIP objective, initializing the logit scale to $20$ and the logit bias to $-10$, with both parameters learned during training. Unless otherwise specified, we use DINOv3 ViT-L \citep{simeoni2025dinov3} and NV-Embed-v2 \citep{lee2025nvembed} as the pretrained vision and language encoders, respectively. By default, experiments are conducted with 10k paired samples and, when applicable, up to 1M \emph{unpaired} images and texts drawn from CC3M \citep{sharma2018cc3m}. %
We vary the weight of the regularization in \cref{eq: training loss} over $\alpha \in \{10^{-3}, 10^{-4}, 10^{-5}\}$ and select it based on retrieval performance on CC3M, accounting for different ratios of supervised to unsupervised data.
We show in Section~\ref{sec:supervision_regime} that comparable results can be obtained with alternative settings.
By default, the metric that we report is the average of the text-to-image (T2I) and image-to-text (I2T) retrieval (Recall@1) performance on the COCO validation set which we denote MeanR@1.\looseness=-1

%% file: content/experiments/ablations.tex
\subsection{Design Choices\label{sec:ablation}}

\mypar{Linear Methods.} SOTAlign employs a linear teacher which can be fit using the methods described in Section~\ref{sec:linear_alignment}. We report the standalone zero-shot retrieval performance of these linear models when trained on only 10k image-text pairs from CC3M in the first row of Table~\ref{tab:divergence_comparison}. Notably, even the closed form models (CCA and Procrustes) reach MeanR@1 scores larger than 21\%. The linear contrastive approach (SigLIP loss, SAIL baseline) reaches 24.2\% and will serve as the main supervised baseline.

\mypar{Divergences.} To align the teacher's affinity matrix with the affinity matrix in the learnable shared space, SOTAlign can utilize any of the three divergences detailed in Section \ref{sec:div}. Table \ref{tab:divergence_comparison} displays all combinations of linear methods and divergences. Critically, we observe that the proposed OT-based divergence KLOT systematically outperforms the classical alternatives. The best results is achieved for CCA and KLOT and we select this setting for the next experiments.

\mypar{Entropic Regularization.}
Choosing the entropic regularization parameters $\epsilon$ and $\epsilon^*$ is a classical challenge when using the Sinkhorn algorithm. Fortunately, we identify two simple guidelines that avoid expensive grid search. First, we want the teacher to produce a sharp signal, so $\epsilon^*$ should be set to the smallest value that ensures Sinkhorn convergence within a fixed iteration budget. For the budget of 100 iterations used in this work, this yields $\epsilon^* = 0.005$ (Figure~\ref{fig:sinkhorn_convergence}). Second, we set $\epsilon > \epsilon^*$ so that the student distribution has higher entropy than the teacher, encouraging it to sharpen toward the teacher's geometry. In the experiments, we set $\epsilon / \epsilon^* = 10$ as we find that this simple choice  yields consistently optimal results across supervision levels (Figure~\ref{fig:epsilon_ratio}).

\begin{table}[t]
\centering
\caption{Comparison of linear methods across different divergences. Zero-shot retrieval on COCO (MeanR@1). ``None'' indicates the standalone performances of the linear method.}
\label{tab:divergence_comparison}
\small
\begin{tabular}{lccc}
\toprule
& \multicolumn{3}{c}{\textbf{Linear Method}} \\
\cmidrule(lr){2-4}
\textbf{Divergence} & Procrustes & CCA & Contrastive \\
\midrule
None & 21.1 & 21.5 & 24.2 \\
\midrule
CKA & 23.5 & 23.5 & 24.2 \\
InfoNCE & 23.9 & 24.1 & 26.5 \\
KLOT & \textbf{30.0 }& \textbf{30.3} & \textbf{28.5} \\
\bottomrule
\end{tabular}
\end{table}

\begin{figure}
\centering
\includegraphics[width=\linewidth]{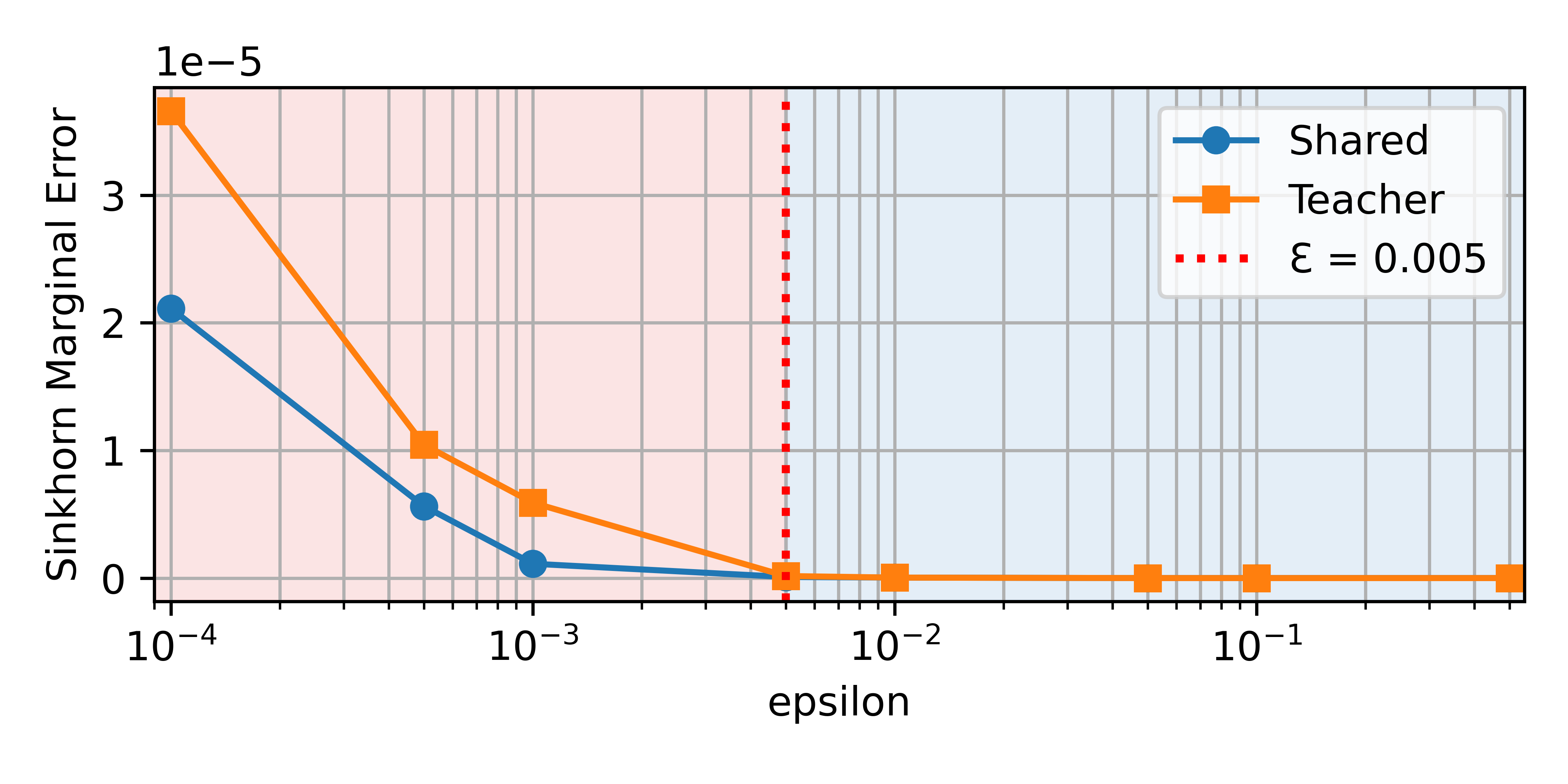}
\caption{We plot Sinkhorn convergence criterion  $||P \mathbf{1} - \mathbf{1}||_1$ after $100$ iterations against $\epsilon$. The cost matrix of size $32$k used for this plot is extracted from a SOTAlign training batch.}
\label{fig:sinkhorn_convergence}
\end{figure}

\begin{figure*}[t!]
\centering
\includegraphics[width=0.98\columnwidth]{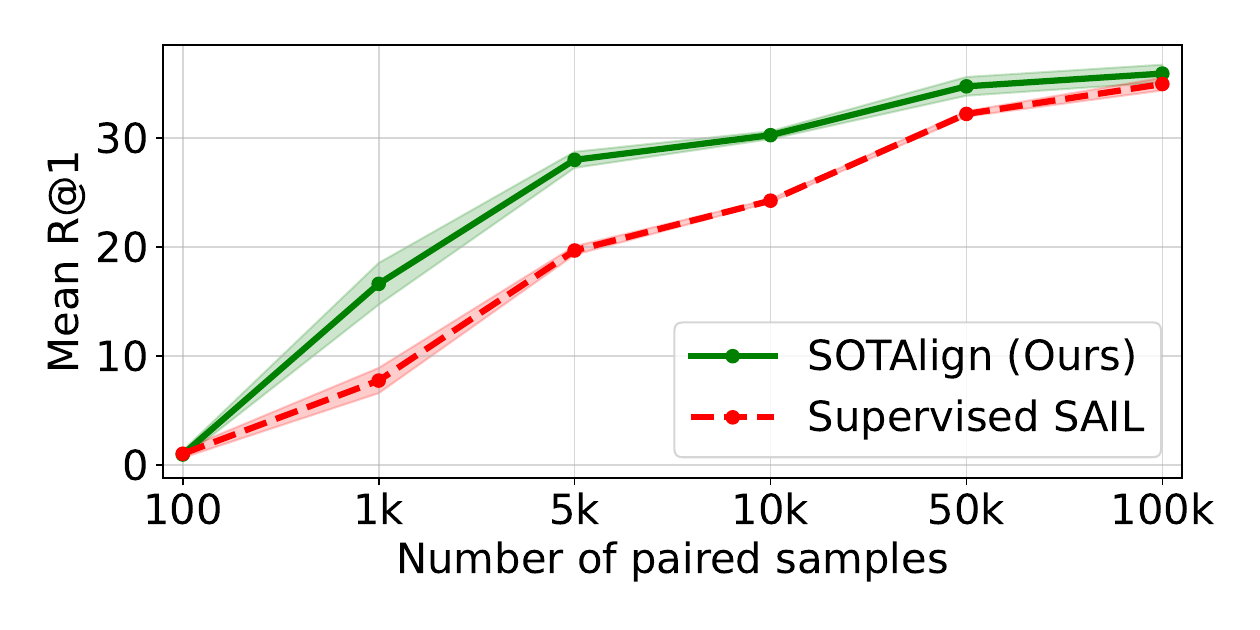}
\hspace*{1em}
\includegraphics[width=0.98\columnwidth]{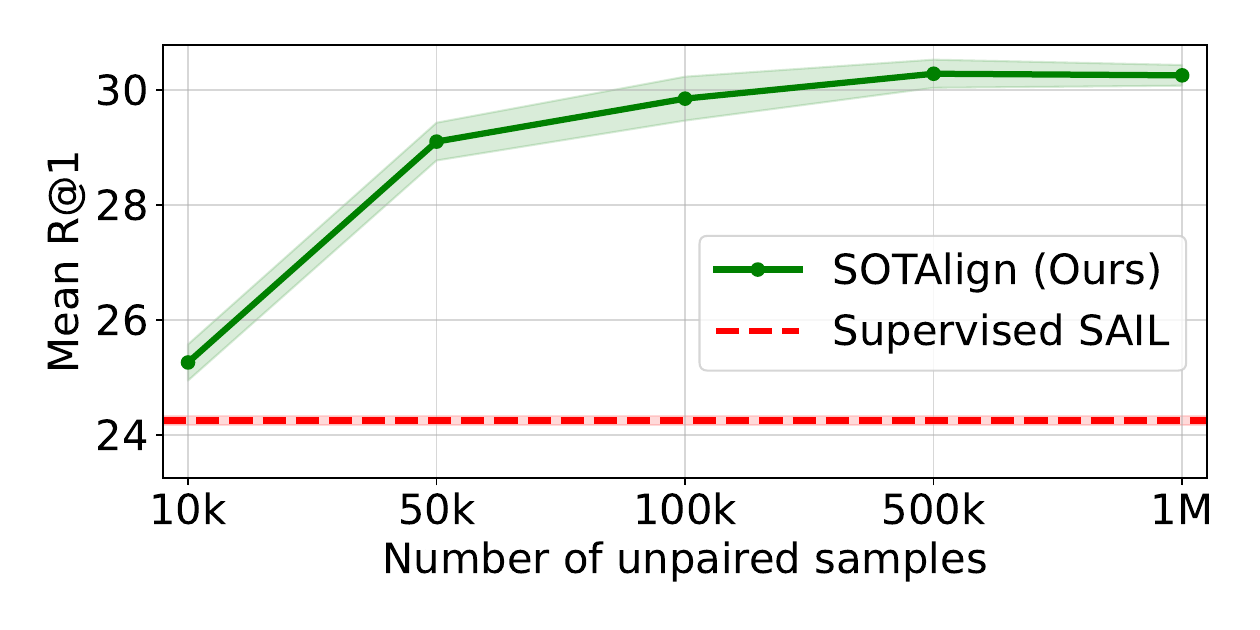}
\caption{Left: Effect of the number of paired samples (while fixing 1M unpaired samples). Right: Effect of the number of unpaired samples (while fixing 10k pairs). We report the zero-shot retrieval (MeanR@1) on COCO. More metrics are reported in Appendix \ref{appendix_additionnal}.}
\label{fig:n_pairs}
\end{figure*}

%% file: content/experiments/robustness.tex
\subsection{Impact of Supervision Regime\label{sec:supervision_regime}}

\mypar{Number of Supervised Pairs.} We first analyze how the number of paired image–text samples affects downstream performance. In this experiment, we fix the amount of unpaired data to 1M images and 1M texts from CC3M, and vary the number of paired samples from $10^2$ to $10^5$. Figure~\ref{fig:n_pairs} reports zero-shot retrieval results. Across all supervision levels, SOTAlign consistently outperforms the supervised SAIL baseline, with gains of up to $+10\%$ accuracy in the intermediate regime between $10^3$ and $10^4$ pairs. As expected, these gains diminish as the number of paired samples increases, and both methods fail under extremely sparse supervision (100 pairs). Overall, SOTAlign reaches the same performances as SAIL with roughly 4 times less supervision.\looseness=-1

\mypar{Number of Unsupervised Samples.}
Next, we investigate the effect of unpaired data on downstream performance. In this experiment, we fix the number of pairs to 10k, and vary the number of additional unpaired image and text samples between $10^4$ and $10^6$.
\cref{fig:n_pairs} shows that our method successfully leverages unpaired data for zero-shot retrieval with consistent gains up to 500k unpaired samples.

\begin{figure}[t]
\centering
\includegraphics[width=\linewidth]{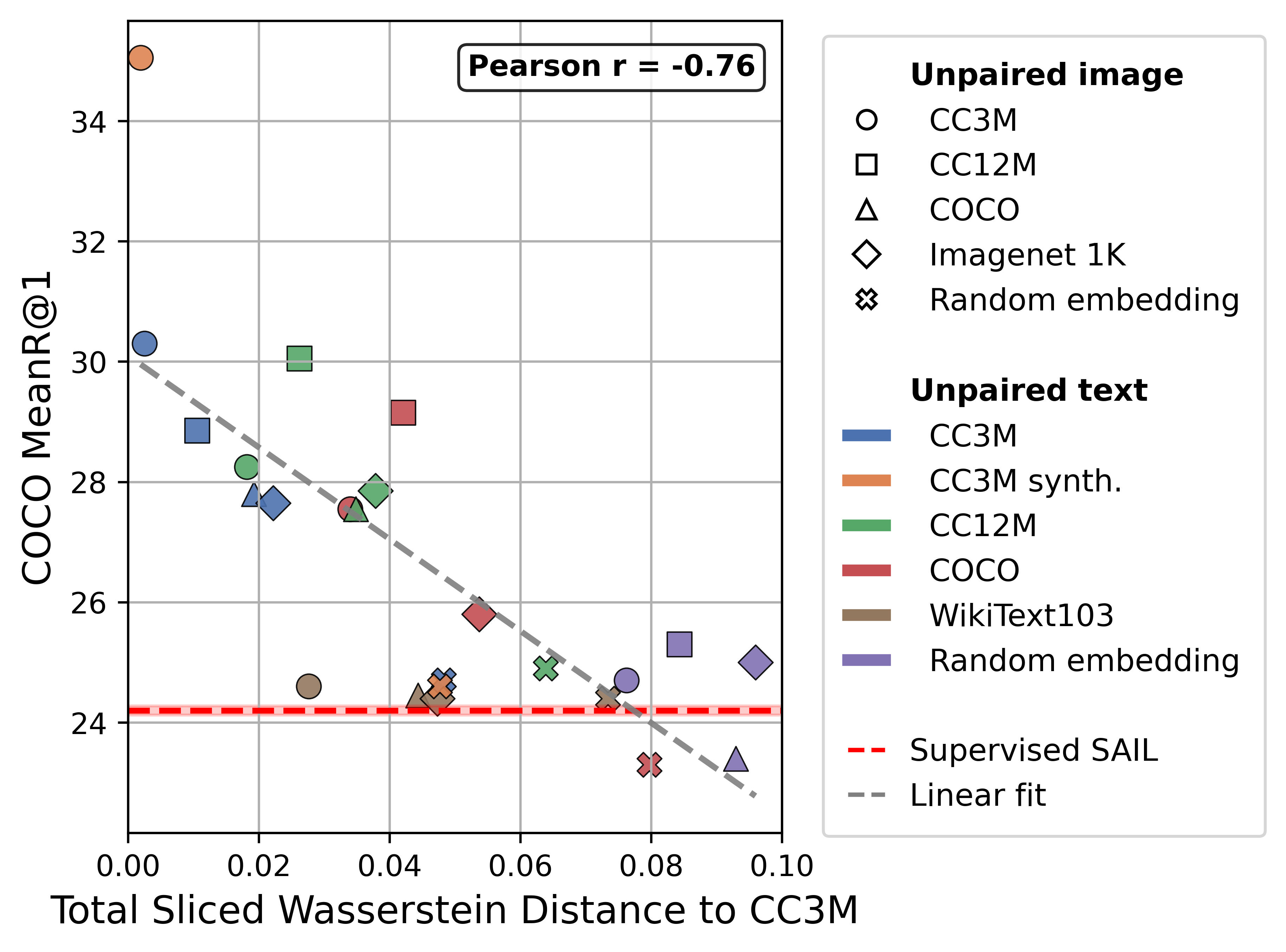}
\caption{Relationship between the total sliced Wasserstein distance between CC3M image/text dataset and unimodal datasets, and the downstream performance of SOTAlign trained on 10k CC3M image–text pairs and up to 1M samples from the corresponding unimodal datasets. As a baseline, we consider random embeddings sampled uniformly on the hyper-sphere.}
\label{fig:sliced_wasserstein}
\end{figure}

\subsection{Impact of Data Source and Domain Shift \label{sec:data_source}}

\mypar{Unsupervised Data Source.} We further evaluate our method in a challenging cross-dataset regime where unpaired images and texts originate from entirely different sources (\cref{tab:dataset_ablation_unpaired_appendix}). Using a fixed set of 10k paired samples from CC3M, we introduce unpaired unimodal data from CC12M, COCO, and ImageNet-1k, as well as synthetic captions. Despite these shifts in the data distributions, our approach consistently outperforms the supervised baseline. Notably, incorporating ImageNet-1k images improves classification performance, while leveraging COCO samples yields retrieval gains by narrowing the gap to the test distribution. These results demonstrate that our framework can effectively exploit unpaired data even when the visual and textual modalities are drawn from disjoint, heterogeneous corpora.

\mypar{Quantifying the Distribution Shift.} Motivated by these results, we next seek to quantify the effect of distribution shift and relate it to the observed performance gains. Given a source of unpaired data $\mathcal{D}=(X,Y)$ and the paired dataset $\mathcal{D}_p=(A,B)$, we define the distribution shift as 
\begin{equation}
\mathrm{d}(\mathcal{D},\mathcal{D}_p)= \mathrm{SSW}(X, A) + \mathrm{SSW}(Y, B),
\end{equation}
where $\mathrm{SSW}$ denotes the Spherical Sliced Wasserstein distance ~\cite{liu2024linear}, with details in Appendix~\ref{appendix_ot}. We adopt this metric because it is scalable to large datasets, well suited to unit-normalized embeddings, and does not require $X$ and $A$ (or $Y$ and $B$) to be aligned. As shown in Figure~\ref{fig:sliced_wasserstein}, this distance strongly correlates (Pearson $r=-0.76$) with downstream performance: unpaired data that are closer to the paired distribution consistently yield larger performance gains when incorporated during training. To probe extreme distribution shift, we replace the unpaired images and text with Gaussian-sampled embeddings. In this setting, the retrieval performance remains close to the supervised SAIL baseline, indicating that even severely mismatched unpaired data do not substantially degrade alignment.

\definecolor{deltagreen}{RGB}{0,150,0}
\newcommand{\deltaUp}[1]{\textcolor{deltagreen}{\tiny+#1}}

\begin{table}[t]
\centering
\caption{SOTAlign compared to supervised SAIL across different paired datasets (10k pairs) and 1M unpaired samples from CC3M. Zero-shot retrieval on COCO (MeanR@1). Green values (\deltaUp{}) represent the absolute gain.}
\label{tab:dataset_ablation_paired}
\small
\begin{tabular}{@{}l l ccc@{}}
\toprule
\textbf{Paired data} & \textbf{Method} & \makecell{\textbf{ImageNet} \\ \textbf{1K}} & \makecell{\textbf{COCO} \\ \textbf{T2I}} & \makecell{\textbf{COCO} \\ \textbf{I2T}} \\
\midrule
\multirow{2}{*}{CC3M}       & SAIL            & 35.6 & 21.0 & 27.4 \\
& SOTAlign& 46.1 \deltaUp{10.5} & 26.5 \deltaUp{5.5} & 34.1 \deltaUp{6.7} \\
\midrule
CC3M  & SAIL            & 36.2 & 28.3 & 37.2 \\
synthetic & SOTAlign& 46.5 \deltaUp{10.3} & 31.3 \deltaUp{3.0} & 43.1 \deltaUp{5.9} \\
\midrule
\multirow{2}{*}{CC12M}      & SAIL            & 38.5 & 20.1 & 27.2 \\
& SOTAlign & \textbf{47.4} \deltaUp{8.9} & 26.1 \deltaUp{6.0} & 36.3 \deltaUp{9.1} \\
\midrule
\multirow{2}{*}{COCO}       & SAIL            & 21.8 & 30.7 & 42.4 \\
& SOTAlign & 35.8 \deltaUp{14.0} & \textbf{34.8} \deltaUp{4.1} & \textbf{46.7} \deltaUp{4.3} \\
\bottomrule
\end{tabular}
\end{table}

\mypar{Supervised Data Source.} We next study the impact of the paired data source on SOTAlign performance (\cref{tab:dataset_ablation_paired}), while fixing the unpaired data to 1M samples from CC3M. Varying the source of the 10k image–text pairs reveals that higher-quality supervision can substantially influence alignment. In particular, using CC3M pairs with synthetic captions yields a notable improvement in retrieval performance (+4.8\% T2I R@1), suggesting that cleaner textual supervision better guides the exploitation of noisy unpaired data. While pairs drawn from the larger CC12M corpus improve ImageNet classification, the strongest retrieval performance is obtained when using COCO pairs.

\newcommand{\up}{{\color{green!70!black}\tiny$\uparrow$}}
\newcommand{\down}{{\color{red!80!black}\tiny$\downarrow$}}

\begin{figure}[t]
\centering
\includegraphics[width=\linewidth]{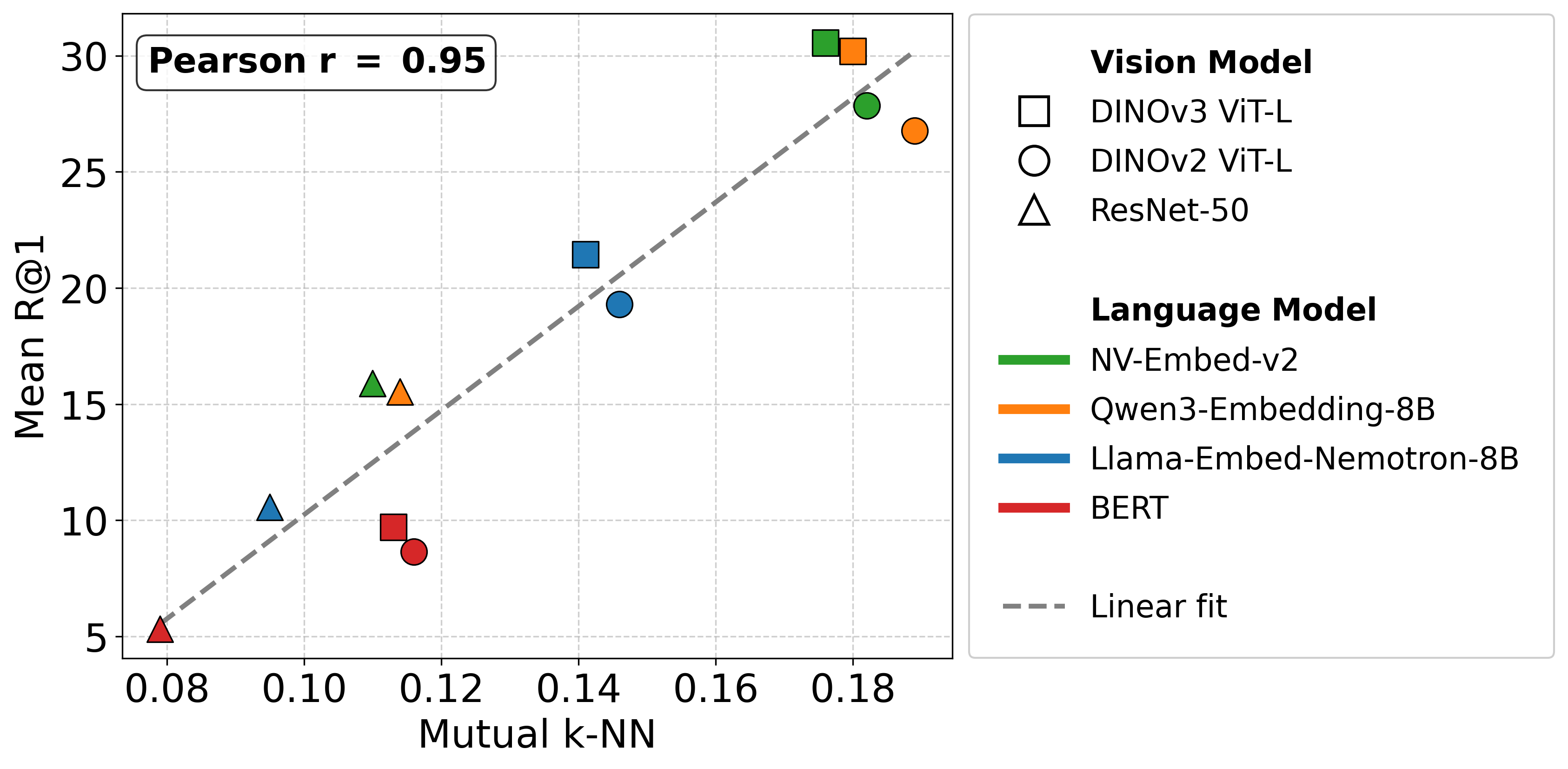}
\caption{Relationship between the representational similarity (mutual kNN) of frozen encoder pairs and their retrieval performance after semi-supervised alignment using SOTAlign.}
\label{fig:mknn_vs_meanr1}
\end{figure}

\subsection{Impact of Unimodal Encoders\label{sec:encoder_choice}}

In the same vein, we examine the impact of the choice of unimodal encoders on zero-shot classification and retrieval performance. We fix the training data to the standard setting of 10k paired samples and 1M unpaired samples from CC3M, and vary only the vision and language encoders. As reported in \cref{tab:model_ablation}, aligning DINOv3 ViT-L with NV-Embed-v2 yields the strongest downstream performance, achieving 46.1\% accuracy on ImageNet and 26.5\% T2I R@1 on COCO. This trend aligns with the \textit{Platonic Representation Hypothesis}~\cite{huh2024platonic}, which suggests that as models scale in data and capacity, their representation spaces increasingly converge. DINOv3's substantially larger pretraining corpus (1.7 billion images, versus 142 million for DINOv2) likely explains its consistent advantage.

We then extend our analysis to weaker encoders, including ResNet-50~\cite{resnet} for images and BERT~\cite{bert} for captions, and measure the representational similarity between frozen vision and text embeddings using mutual $k$-NN, a parameter-free metric that quantifies the overlap in nearest-neighbor structure across modalities prior to any training. As shown in Figure~\ref{fig:mknn_vs_meanr1}, we observe a striking correlation (Pearson $r = 0.95$) between mutual $k$-NN and downstream Mean R@1, across all encoder pairs considered. This result has a practical implication: mutual $k$-NN serves as a reliable and fast diagnostic to predict whether a given pair of encoders is compatible before running SOTAlign.

\begin{table}[t]
\centering
\caption{SOTAlign trained on 10k pairs and 1M unpaired samples from CC3M with different unimodal encoders. Zero-shot classification (top-1 accuracy) and retrieval (Recall@1). Green values represent the absolute gain over supervised SAIL.}
\label{tab:model_ablation}
\small
\renewcommand{\arraystretch}{1.2} 
\begin{tabular}{@{}cl ccc@{}}
\toprule
\makecell[c]{\textbf{Vision} \\ \textbf{Model}} & \makecell[c]{\textbf{Language} \\ \textbf{Model}} & \makecell{\textbf{ImageNet} \\ \textbf{1K}} & \makecell{\textbf{COCO} \\ \textbf{T2I}} & \makecell{\textbf{COCO} \\ \textbf{I2T}} \\
\midrule \addlinespace[0.5ex]
\multirow{3}{*}{\rotatebox[origin=c]{90}{DINOv2}} 
& Nemotron-8B & 32.4 \deltaUp{6.9} & 15.5 \deltaUp{3.8} & 23.3 \deltaUp{5.3} \\
& Qwen3-8B    & 39.5 \deltaUp{7.7} & 20.9 \deltaUp{4.1} & 31.1 \deltaUp{7.3} \\
& NV-Embed-v2 & 42.5 \deltaUp{9.8} & 23.1 \deltaUp{4.1} & 31.1 \deltaUp{7.7} \\ 
\addlinespace[0.5ex] \midrule \addlinespace[0.5ex]
\multirow{3}{*}{\rotatebox[origin=c]{90}{DINOv3}} 
& Nemotron-8B & 35.5 \deltaUp{10.1} & 16.6 \deltaUp{3.8} & 26.2 \deltaUp{5.2} \\
& Qwen3-8B    & 42.7 \deltaUp{9.3} & 24.1 \deltaUp{5.0} & \textbf{35.3} \deltaUp{7.3} \\
& NV-Embed-v2 & \textbf{46.1} \deltaUp{10.5} & \textbf{26.5} \deltaUp{5.5} & 34.1 \deltaUp{6.7} \\ 
\addlinespace[0.5ex] \bottomrule
\end{tabular}
\end{table}

%% file: content/experiments/evaluation.tex
\subsection{Benchmarking Semi-Supervised Alignment \label{sec:evaluation}} 
\mypar{Baselines.} Since the semi-supervised alignment setting we consider is relatively unexplored, there are no established standard baselines. We therefore compare SOTAlign against a range of supervised and semi-supervised methods which we adapt to our setting. The primary supervised baseline is SAIL \citep{zhang2025sail}, trained on paired image–text data with a SigLIP loss; we also propose a semi-supervised variant that incorporates unpaired samples as additional negatives. We further consider STRUCTURE \citep{groger2025structure}, which regularizes joint embeddings to preserve unimodal geometry, and evaluate this term using either paired data only or both paired and unpaired samples. In addition, we include pseudo-labeling approaches that construct synthetic pairs from similarity distributions, including NNCLR \citep{dwibedi2021nnclr} (as used in DeCLIP \citep{li2021declip}) and S-CLIP \citep{mo2023sclip}. Finally, we compare against SUE \citep{yacobi2025sue}, a semi-supervised alignment method restricted to retrieval on a single dataset. Full details of all baselines are provided in Appendix~\ref{appendix_baselines}.

\begin{table}[t]
\centering
\caption{Zero-shot image-text retrieval (Recall@1) on COCO and Flickr30. Comparison of supervised and semi-supervised methods trained with 10k image-text pairs and 1M unpaired samples from CC3M. Upper bound with 1M supervised pairs in grey.}
\label{tab:retrieval_coco_flickr30k}
\small
\begin{tabular}{@{} l l cc cc @{}}
\toprule
& & \multicolumn{2}{c}{\textbf{COCO}} & \multicolumn{2}{c}{\textbf{Flickr30k}} \\
\cmidrule(lr){3-4}\cmidrule(lr){5-6}
& \textbf{Method} & T2I & I2T & T2I & I2T \\
\midrule
\multirow{3}{*}{\rotatebox[origin=c]{90}{Sup.}} 
& \cellcolor{gray!20}SAIL (1M) & \cellcolor{gray!20}35.5 & \cellcolor{gray!20}45.5 & \cellcolor{gray!20}63.1 & \cellcolor{gray!20}75.0 \\
& SAIL                  & 21.0 & 27.4 & 45.7 & 54.1 \\
& STRUCTURE             & 21.0 & 28.7 & 46.8 & 54.0 \\
\midrule
\multirow{5}{*}{\rotatebox[origin=c]{90}{Semi-sup.}} 
& SAIL                  & 20.7 & 26.5 & 44.9 & 53.1 \\
& STRUCTURE             & 20.9 & 28.0 & 45.7 & 56.0 \\
& NNCLR                 & 21.3 & 27.9   & 46.6   & 53.0   \\
& S-CLIP                & 20.4 & 27.8 & 44.5 & 52.6 \\
& \textbf{SOTAlign (Ours)}       & \textbf{26.5} & \textbf{34.1} & \textbf{51.7} & \textbf{60.8} \\
\bottomrule
\end{tabular}
\end{table}

\begin{table}
\centering
\caption{Zero-shot image classification (top-1 accuracy). Comparison of supervised and semi-supervised methods trained with 10k image-text pairs and 1M unpaired samples from CC3M. Upper bound with 1M supervised pairs in grey.}
\label{tab:fine_grained_classification}
\small
\resizebox{\columnwidth}{!}{
\begin{tabular}{@{} l l ccccc @{}}
\toprule
& \textbf{Method} & \rotatebox{70}{\textbf{Food-101}} & \rotatebox{70}{\textbf{CIFAR-10}} & \rotatebox{70}{\textbf{CIFAR-100}} & \rotatebox{70}{\textbf{DTD}} & \rotatebox{70}{\textbf{ImageNet}} \\
\midrule
\multirow{3}{*}{\rotatebox[origin=c]{90}{Sup.}} & \cellcolor{gray!20}SAIL (1M) & \cellcolor{gray!20}63.9 & \cellcolor{gray!20}97.8 & \cellcolor{gray!20}82.3 & \cellcolor{gray!20}53.5 & \cellcolor{gray!20}56.4 \\
 & SAIL & 36.4 & 96.2 & 71.2 & 36.8 & 35.6 \\
 & STRUCTURE & 38.5 & 96.7 & 72.2 & 39.5 & 38.2 \\
\midrule
\multirow{5}{*}{\rotatebox[origin=c]{90}{Semi-sup.}} & SAIL & 36.5 & 96.2 & 71.3 & 35.9 & 35.6 \\
 & STRUCTURE & 37.6 & 96.5 & 70.8 & 38.7 & 36.8 \\
 & NNCLR & 37.9 & 96.5 & 73.0 & 38.8 & 37.4 \\
 & S-CLIP & 35.3 & 95.9 & 69.3 & 37.6 & 36.4 \\
 & \textbf{SOTAlign (Ours)} & \textbf{50.0} & \textbf{97.5} & \textbf{78.3} & \textbf{42.4} & \textbf{46.1} \\
\bottomrule
\end{tabular}
}
\end{table}

\mypar{Zero-Shot Image-Text Retrieval.}
We evaluate SOTAlign against these baselines in T2I and I2T retrieval on COCO \citep{lin2014coco} and Flickr30k \citep{plummer2015flickr30}, and report the results in Table~\ref{tab:retrieval_coco_flickr30k}. In the low-resource regime with 10k image-text pairs, the supervised baseline SAIL reaches 21.0 T2I R@1 and 27.4 I2T R@1. STRUCTURE performs marginally better benefiting from its structure-preservation objective. However, both methods fail to exploit unpaired data, either as additional negatives or for structure preservation. Notably, even the adapted semi-supervised approaches, NNCLR and S-CLIP, are unable to successfully exploit unpaired data. S-CLIP has been originally developed for domain adaptation and appears less robust when confronted with the large diversity of unpaired samples in our setting. Its pseudo-labels are further limited to the small set of paired instances.
In contrast, SOTAlign successfully leverages the 1M unpaired images and texts from CC3M to improve cross-modal alignment. On Flickr30k, our method reaches 51.7 T2I R@1 and 60.8 I2T R@1, yielding gains of +4.9 and +4.8 over the strongest baselines, respectively. For comparison, we include in gray the supervised upper bound obtained by training SAIL with 1M paired examples.

\mypar{Zero-Shot Image Classification.}
We further evaluate SOTAlign in zero-shot classification on ImageNet \citep{deng2009imagenet} and more fine-grained classification datasets. The results are displayed in \cref{tab:fine_grained_classification} and mirror the trends observed in zero-shot retrieval. STRUCTURE outperforms SAIL in the supervised setting, but is not able to leverage unpaired data for additional performance gains. Existing semi-supervised methods like NNCLR and S-CLIP do not show any improvements over the supervised baselines. Only SOTAlign is able to leverage 1M unpaired samples during alignment to improve zero-shot image classification. Our method achieves an ImageNet top-1 accuracy of 46.1\%, which is an improvement of +7.9 over the best baseline. In grey we display the supervised SAIL baseline trained on 1M image-text pairs as an upper bound.

\begin{table}[t]
\centering
\caption{Alignment per dataset. Following the setup of SUE \citep{yacobi2025sue}, we train for alignment per dataset and evaluate image-text retrieval (Recall@5).}
\label{tab:sue}
\setlength{\tabcolsep}{6pt}
\renewcommand{\arraystretch}{1.1}
\small
\resizebox{\columnwidth}{!}{
\begin{tabular}{@{}l cc cc cc@{}}
\toprule
& \multicolumn{2}{c}{\textbf{COCO}} & \multicolumn{2}{c}{\textbf{Flickr30k}} & \multicolumn{2}{c}{\textbf{Polyvore}} \\
& \multicolumn{2}{c}{\small 100 Pairs} & \multicolumn{2}{c}{\small 500 Pairs} & \multicolumn{2}{c}{\small 500 Pairs} \\
\cmidrule(lr){2-3} \cmidrule(lr){4-5} \cmidrule(lr){6-7}
\textbf{Method} & I2T & T2I & I2T & T2I & I2T & T2I \\
\midrule
CSA         & 1.3  & 1.0  & 1.3  & 0.8  & 1.3  & 1.0 \\
Contrastive & 8.5  & 5.8  & 9.5  & 9.8  & 13.8  & 11.5 \\
SUE         & 21.5 & 18.3 & 19.8 & 22.0 & 22.8 & 20.8 \\
\textbf{SOTAlign (Ours)} & \textbf{35.8} & \textbf{35.0} & \textbf{59.8} & \textbf{63.3} & \textbf{55.3} & \textbf{55.3} \\
\bottomrule
\end{tabular}
}
\vspace{-0.1cm}
\end{table}

\mypar{Alignment per Dataset.}
\citet{yacobi2025sue} study semi-supervised vision–language alignment using pretrained encoders, but under a substantially simpler setting than ours: their method operates within a single dataset, with paired and unpaired samples drawn from the same distribution and evaluation restricted to retrieval on small test splits (400 samples). In contrast, our setting involves cross-dataset unpaired data and multiple downstream tasks. Nevertheless, when evaluated in their setting, SOTAlign consistently outperforms SUE \citep{yacobi2025sue} and its baselines, achieving gains of +14.3 I2T R@5 on COCO, +40.0 on Flickr30k, and +32.5 on Polyvore (see Table~\ref{tab:sue}).

\mypar{Generalization across Domains and Modalities.}
While our main experiments focus on general-purpose image-text alignment, we further investigate whether SOTAlign generalizes to domain-specific data and to modalities beyond vision and language.
First, we evaluate SOTAlign in the field of remote sensing using satellite images and multi-object captions from SkyScript~\citep{wang2024skyscript}. Specifically, we train SOTAlign on 10k pairs augmented with 700k unpaired image and text samples, and assess zero-shot retrieval on 10k test pairs. As shown in Table~\ref{tab:domain_modality_generalization}, SOTAlign reaches Mean R@10 = 27.4 in this target domain, surpassing STRUCTURE and significantly outperforming SAIL by +2.3\%. Additional results on SkyScript across varying supervision and evaluation regimes are provided in Appendix \ref{subsection:benchmarking_semisup_alignment}. Second, we examine speech-text alignment on LibriSpeech~\citep{panayotov2015librispeech}, using WavLM-Large~\citep{chen2022wavlm} as speech encoder and NV-Embed-v2 as text encoder. We train SOTAlign on 1k pairs augmented with 100k unpaired speech and text samples, and evaluate zero-shot retrieval on the test set with 2.6k pairs. Table~\ref{tab:domain_modality_generalization} shows that SOTAlign substantially outperforms all baselines, improving Mean R@1 by +14.0 over STRUCTURE and by +20.1 over SAIL. These results suggest that SOTAlign provides an effective framework for semi-supervised alignment across domains and modalities.

\begin{table}[t]
\centering
\small
\caption{
Generalization across domains and modalities. We report R@10 on SkyScript satellite image-text retrieval (10k test pairs) and R@1 on LibriSpeech speech-text retrieval (2.6k test pairs).
}
\label{tab:domain_modality_generalization}
\begin{tabular}{lcccc}
\toprule
& \multicolumn{2}{c}{\textbf{SkyScript}} & \multicolumn{2}{c}{\textbf{LibriSpeech}} \\
\cmidrule(lr){2-3}
\cmidrule(lr){4-5}
\textbf{Method} & T2I & I2T & S2T & T2S \\
\midrule
CCA       & 21.0 & 21.3 & 72.4 & 75.1 \\
SAIL      & 24.4 & 25.7 & 55.8 & 65.5 \\
STRUCTURE & 26.3 & 26.9 & 65.4 & 68.1 \\
\textbf{SOTAlign (Ours)} & \textbf{27.3} & \textbf{27.4} & \textbf{78.4} & \textbf{83.1} \\
\bottomrule
\end{tabular}
\end{table}

%% file: content/conclusion.tex
\section{Conclusion}

In this work, we introduced a semi-supervised setting for aligning pretrained unimodal encoders, which we believe is relevant to many real-world modalities where large-scale paired data are scarce. We argue that vision–language alignment provides an ideal testbed for this problem, as abundant paired data enable systematic exploration of different supervision regimes. To the best of our knowledge, SOTAlign is the first model that can effectively leverage large-scale unimodal data for multimodal alignment in this setting. We hope that the simplicity of SOTAlign will inspire future work on multimodal representation alignment beyond fully supervised regimes.

%% file: content/appendix/appendix_experimental_settings.tex
\section{Experimental Setting}

We outline the experimental setup in \cref{sec:evaluation}. Here, we provide further details on our implementation and baselines.

\subsection{Implementation Details \label{appendix_implementation}}

Following \citet{zhang2025sail}, we create global image representations by concatenating the \texttt{[CLS]} token with the mean of the remaining patch tokens. Text features are computed by averaging all patch tokens. We project both modalities into a shared embedding space of dimensionality $d=1024$ using linear layers $f$ and $g$. We found them to be more robust in our low-supervision regime compared to non-linear layers. 

When performing CCA, we add a regularization $\lambda = 0.1$ to the eigenvalues of matrices that need to be inverted. Our divergence, KLOT, is computed using the Sinkhorn algorithm with $n=100$ iterations in both spaces. We set the entropic regularization to $\epsilon=0.01$ in the reference space and $\epsilon=0.05$ in the joint embedding space.

Our experiments are conducted with 10k paired samples and, when applicable, up to 1M unpaired images and texts drawn from CC3M \citep{sharma2018cc3m}.
We train all models using a maximum batch size of 32k, composed of up to 10k paired samples and completed with unpaired images and text.  If there are less than 32k total samples available in our robustness studies, we adjust the batch size accordingly.
We use the LION optimizer \citep{chen2023symbolic} with a cosine annealing learning-rate schedule, a maximum learning rate of $10^{-4}$, and a weight decay of $10^{-5}$, and train for 2000 iterations.

We mainly employ DINOv3 ViT-L \citep{simeoni2025dinov3} and NV-Embed-v2 \citep{lee2025nvembed} as the pretrained vision and language encoders, respectively. In Section \ref{sec:encoder_choice}, we additionally evaluate SOTAlign with DINOv2 ViT-L \citep{oquab2024dinov2}, Qwen3-Embedding-8B \citep{zhang2025qwen3embedding}, and Llama-Embed-Nemotron-8B \citep{babakhin2025llamaembednemotron}. All of these language models are among the top performing models in the MMTEB benchmark \citep{enevoldsen2025mmtebmassivemultilingualtext}.

Our main evaluation metric is the average of the text-to-image (T2I) and image-to-text (I2T) retrieval (Recall@1) performance on the COCO validation set which we denote MeanR@1. Whenever required, we use a similar score on the CC3M validation split for hyperparameter selection.

All experiments can be run on a single A100 GPU with 80 GB memory.

\subsection{Baselines \label{appendix_baselines}}

In Section \cref{sec:evaluation}, we compare SOTAlign against several supervised and semi-supervised baselines in zero-shot image classification and retrieval. For each baseline, we consider various configurations as detailed below, and report their optimal performance after hyperparameter tuning.

\textbf{SAIL} \citep{zhang2025sail} performs contrastive learning of alignment layers with the SigLIP \citep{zhai2023siglip} loss exclusively on paired data. This method represents a series of recent supervised contrastive methods for the alignment of pretrained unimodal vision and language models \citep{vouitsis2024fusemix, maniparambil2025freezealign, huang2025llm2clip}. Following \citet{zhang2025sail}, we initialize the logit scale to $20$ and the logit bias to $-10$ and allow both parameters to be trained. We examine the extension of SAIL to our semi-supervised setting by incorporating unpaired samples as additional negatives in the SigLIP loss.

\textbf{STRUCTURE} \citep{groger2025structure} aligns pretrained encoders in low-resource regimes by augmenting the contrastive objective with an additional loss that forces the similarity distribution in the joint embedding space to lie between the unimodal similarity distributions. While STRUCTURE focuses on a fully supervised setting with paired data, we also evaluate the strength of its regularization term on unpaired data in our semi-supervised setting. We set the number of levels to 1 and the temperature in the softmax function to $\tau=0.07$. We tune the weight of the structure preservation term over $\lambda \in \{0.1, 1, 10, 100, 1000\}$, and consider both no warmup and a 500-step warmup schedule.

Further semi-supervised techniques can be borrowed from contrastive pretraining and low-resource domain adaptation to construct pseudo-pairs based on similarity distributions in the unimodal or joint embedding spaces.

\textbf{NNCLR} \citep{dwibedi2021nnclr} enriches contrastive learning by retrieving the nearest-neighbors of an instance and using them as additional positives. DeCLIP \citep{li2021declip} has adopted such nearest-neighbor supervision in image-language pre-training. We follow this line of work and utilize unpaired images and text as augmentation for the few paired samples. Specifically, for a given image, we find the closest neighbor of its paired caption in the unimodal language space, which then serves as an additional positive for the image. Similarly, for a given text, we find the closest neighbor of its paired image in the unimodal vision space, and can use it as an additional positive for the text. While NNCLR is often implemented with a queue containing the last few batches during training, we can compute the nearest neighbors for all CC3M samples in the unimodal spaces a priori since we utilize pretrained encoders. When training the alignment layers, we then randomly sample a nearest neighbor from the top $k \in \{1, 5, 10\}$ neighbors, and further perform hyperparameter search for the weights of the contrastive losses with the additional positives: $w_{\text{img}}, w_{\text{text}} \in \{0, 0.1, 1, 10\}$.

\textbf{S-CLIP} \citep{mo2023sclip} addresses the domain adaption of CLIP with pseudo-labeling at the caption and keyword level. We evaluate their caption-level supervision in our setting. Given an unpaired image, S-CLIP computes similarity scores to paired images, and then uses the resulting similarity distribution to determine pseudo positives from the paired text. The pseudo-positives can be chosen in a hard assignment as the single nearest neighbor (argmax of the distribution, similar to NNCLR) or in a soft assignment as a weighted average of representations. A key component of S-CLIP is its use of OT to find the optimal matching between unpaired and paired images. The method is limited by the small pool of positives.
We apply S-CLIP in the unimodal vision and language spaces as well as in the joint-embedding space. We search for pseudo-labels for both unpaired images as well as unpaired text and tune their corresponding weights in the final objective via a grid search: $w_{\text{img}}, w_{\text{text}} \in \{0, 0.1, 1, 10\}$.

\textbf{SUE} \citep{yacobi2025sue} studies the alignment of unimodal encoders on a single image-text dataset. Their approach combines learnable spectral embeddings on unpaired data, with CCA on paired data for linear alignment, and a residual network to further refine the alignment.

\subsection{Datasets}

We construct our semi-supervised setting primarily using CC3M \citep{sharma2018cc3m} with both raw web captions and synthetic captions generated by DreamLIP \citep{zheng2024dreamlip}. We further experiment with disjoint images and texts from CC12M \citep{changpinyo2021cc12m}, COCO \citep{lin2014coco}, ImageNet \citep{deng2009imagenet}, and WikiText \citep{merity2016wikitext}. We select models based on their average text-to-image (T2I) and image-to-text (I2T) retrieval performance on the CC3M validation set.

In Section \ref{sec:evaluation}, we evaluate our model in a zero-shot setting across a diverse suite of classification and retrieval benchmarks.

For \textbf{classification} we consider the following datasets:
\citep{deng2009imagenet}, Food-101 \citep{bossard2014food101}, CIFAR-10 \citep{krizhevsky2009cifar}, CIFAR-100 \citep{krizhevsky2009cifar}, Aircraft \citep{maji2013aircraft},DTD \citep{cimpoi2014dtd} and Flowers \citep{nilsback2008flowers}.

For \textbf{retrieval} we consider: COCO \citep{lin2014coco}, Flickr30 \citep{plummer2015flickr30}, SkyScript \citep{wang2024skyscript} and LibriSpeech \citep{panayotov2015librispeech}.

%% file: content/appendix/appendix_additional_experiments.tex
\clearpage
\section{Additional Experiments \label{appendix_additionnal}}

\subsection{Sinkhorn Backpropagation}
\label{appendix_sinkhorn}

The Sinkhorn algorithm has recently been used as a differentiable layer in a wide range of applications, including reinforcement learning \citep{emami2018learning}, learning to rank \citep{adams2011ranking}, discriminant analysis \citep{flamary2018wasserstein}, graph matching \citep{krzakala2025quest}, and representation learning \citep{van2023snekhorn}. Closer to our setting, several recent works have applied Sinkhorn-based objectives to contrastive learning for vision--language models \citep{mo2023sclip, shi2024otclip}.

While the Sinkhorn algorithm (defined in Appendix~\ref{appendix_ot}) is differentiable in theory, computing its gradient in practice is challenging. The most common approach consists in \emph{unrolling} the Sinkhorn iterations and directly backpropagating through the solver. However, this strategy incurs a large memory overhead, as the full computational graph must be retained for all iterations, causing memory consumption to grow rapidly with the number of Sinkhorn steps.

An alternative is to rely on implicit differentiation, which amounts to solving the linear system defined by the optimality conditions of the entropic OT problem. In the case of Sinkhorn, this system exhibits a particular structure that enables more efficient solvers \citep{cuturi2020supervised, eisenberger2022unified}. While this approach alleviates the memory explosion associated with unrolling, it remains computationally expensive in practice.

In the context of our proposed divergence,
\begin{equation}
\operatorname{KLOT}(K \mid\mid K^*)
= \KL{OT_{\epsilon^*}(K^*)}{OT_{\epsilon}(K)},
\end{equation}
a naive application of the chain rule would suggest that computing the gradient
\[
\nabla_K \operatorname{KLOT}(K \mid\mid K^*)
\]
requires explicitly forming the Jacobian
\[
\frac{\partial \, OT_{\epsilon}(K)}{\partial K},
\]
thereby necessitating either Sinkhorn unrolling or implicit differentiation.

Crucially, Theorem~\ref{th:gradient} shows that this is not required. Instead, the gradient of $\operatorname{KLOT}$ admits a closed-form expression that can be computed directly, without evaluating the Jacobian of the Sinkhorn operator.

To empirically illustrate the efficiency of this result, we extract an $n \times n$ affinity matrix with $n = 10\text{k}$ from a checkpoint of SAIL training and compare three strategies for computing
$\nabla_K \operatorname{KLOT}(K \mid\mid K^*)$: Sinkhorn unrolling, implicit differentiation, and our closed-form gradient. The results are reported in Figure~\ref{fig:time_sinhkorn}. Our approach significantly outperforms both alternatives in terms of memory usage and runtime. In particular, depending on the value of $\epsilon$, which controls the number of Sinkhorn iterations (with convergence scaling as $\mathcal{O}(1/\epsilon^2)$), the proposed method can be up to $100\times$ more memory efficient than unrolling and up to $50\times$ faster than implicit differentiation.

\begin{figure}
    \centering
    \includegraphics[width=0.32\linewidth]{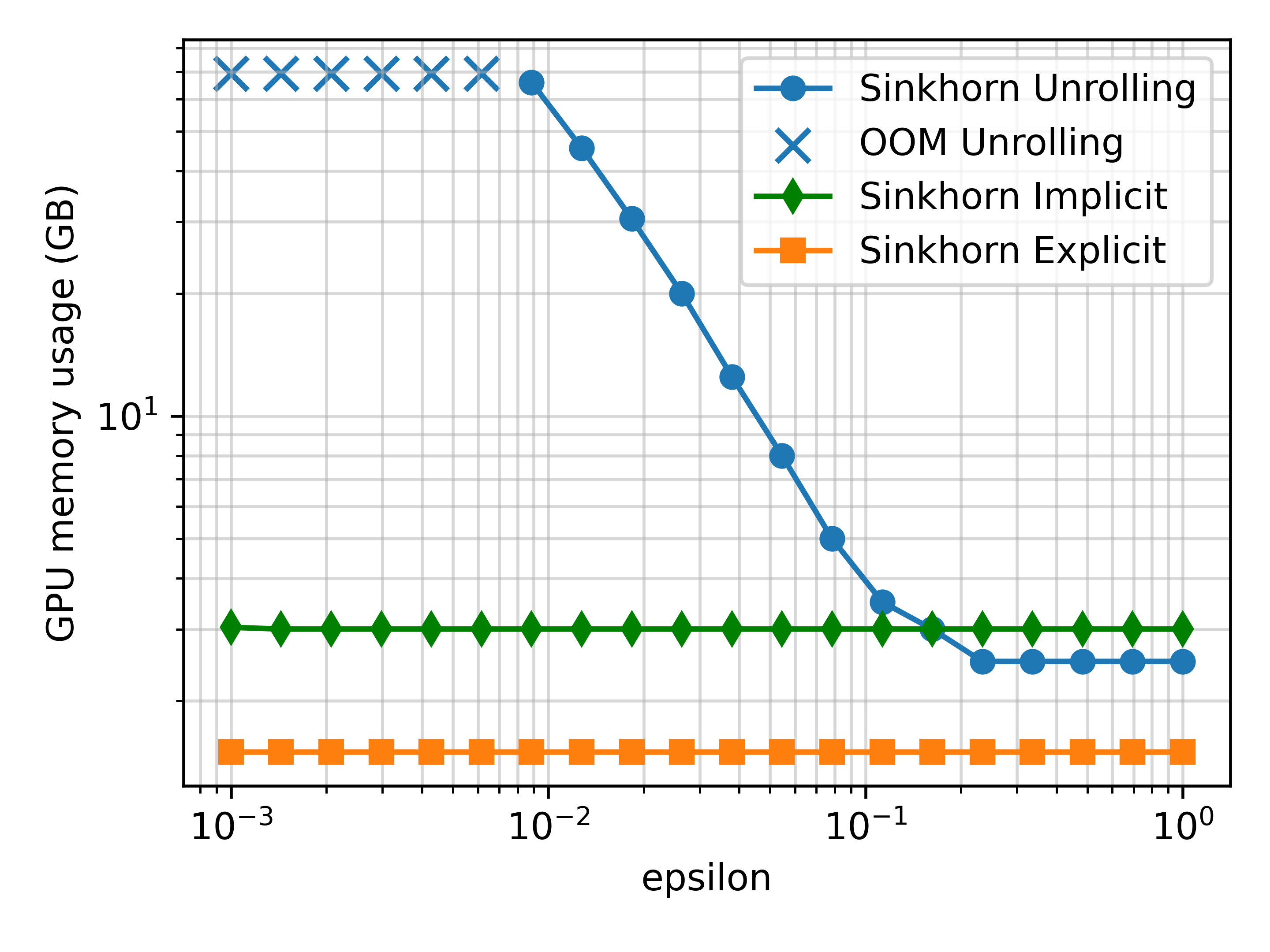}
    \includegraphics[width=0.32\linewidth]{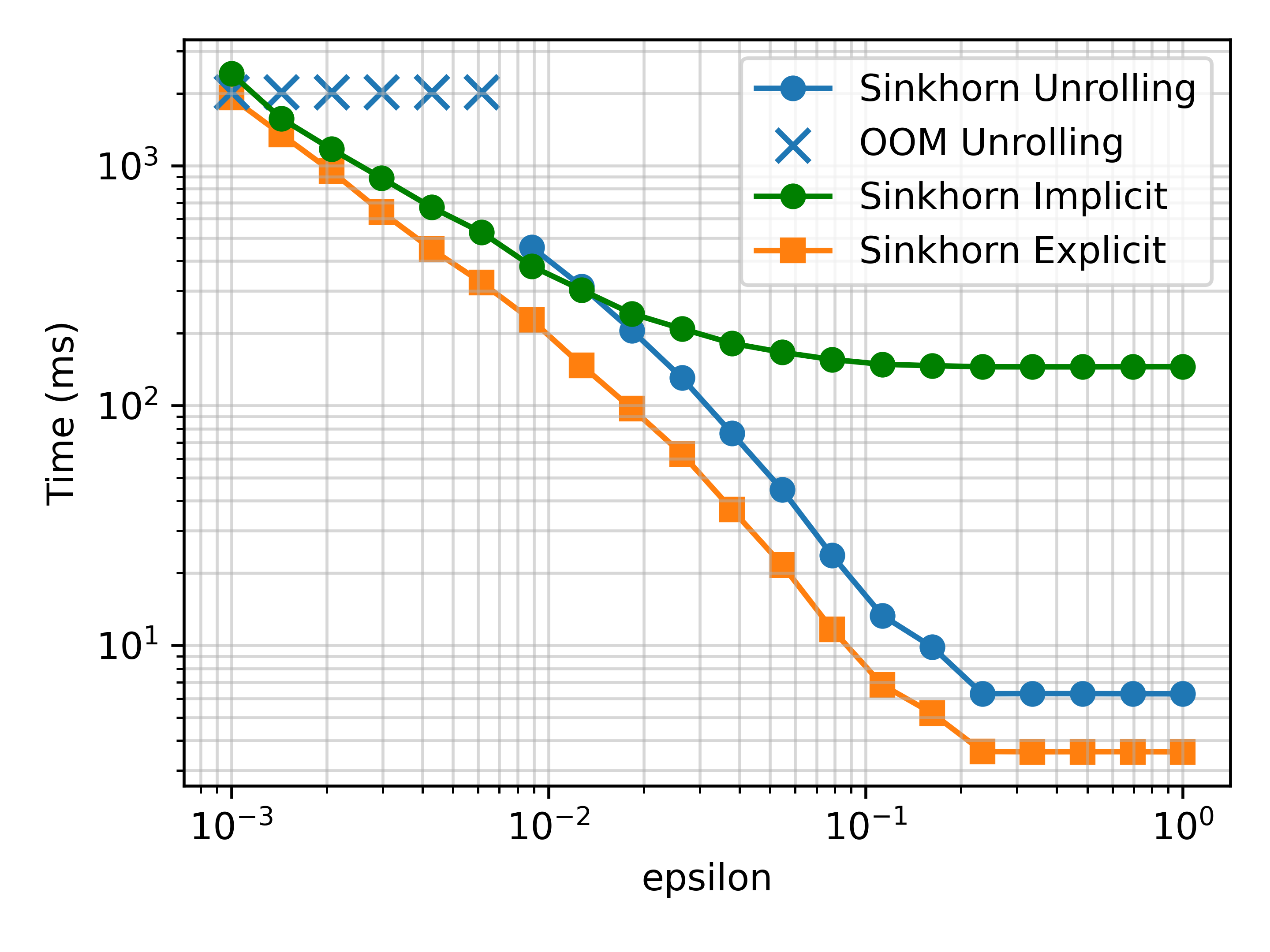}
    \includegraphics[width=0.32\linewidth]{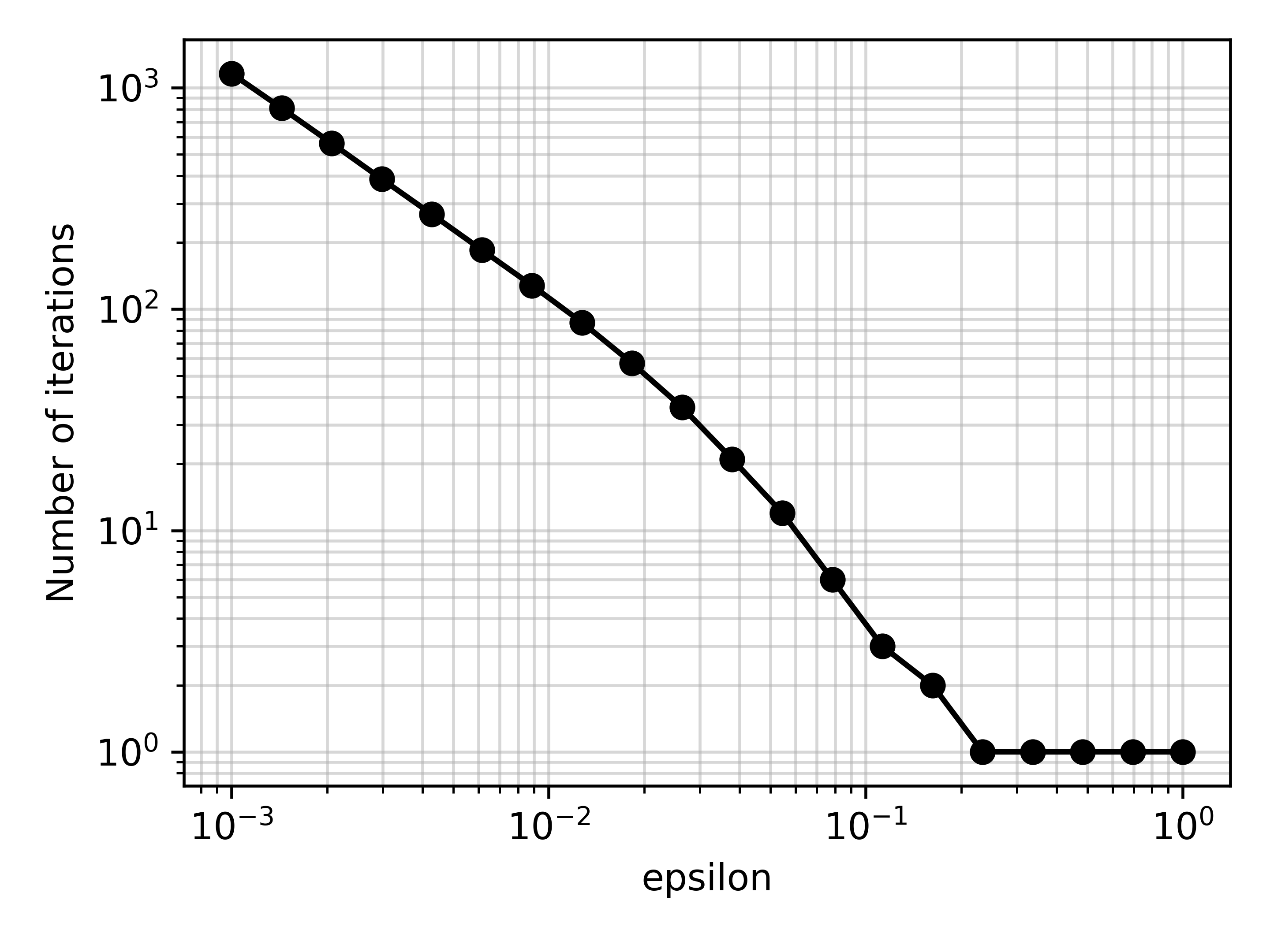}
    \caption{Comparison of memory usage, runtime, and number of Sinkhorn iterations for different gradient computation strategies. We run as many Sinkhorn iterations as required to achieve marginal convergence within a tolerance of $10^{-6}$.}
    \label{fig:time_sinhkorn}
\end{figure}

\begin{figure*}[t]
\centering
\begin{subfigure}[b]{0.33\textwidth}
\centering
\includegraphics[width=\linewidth]{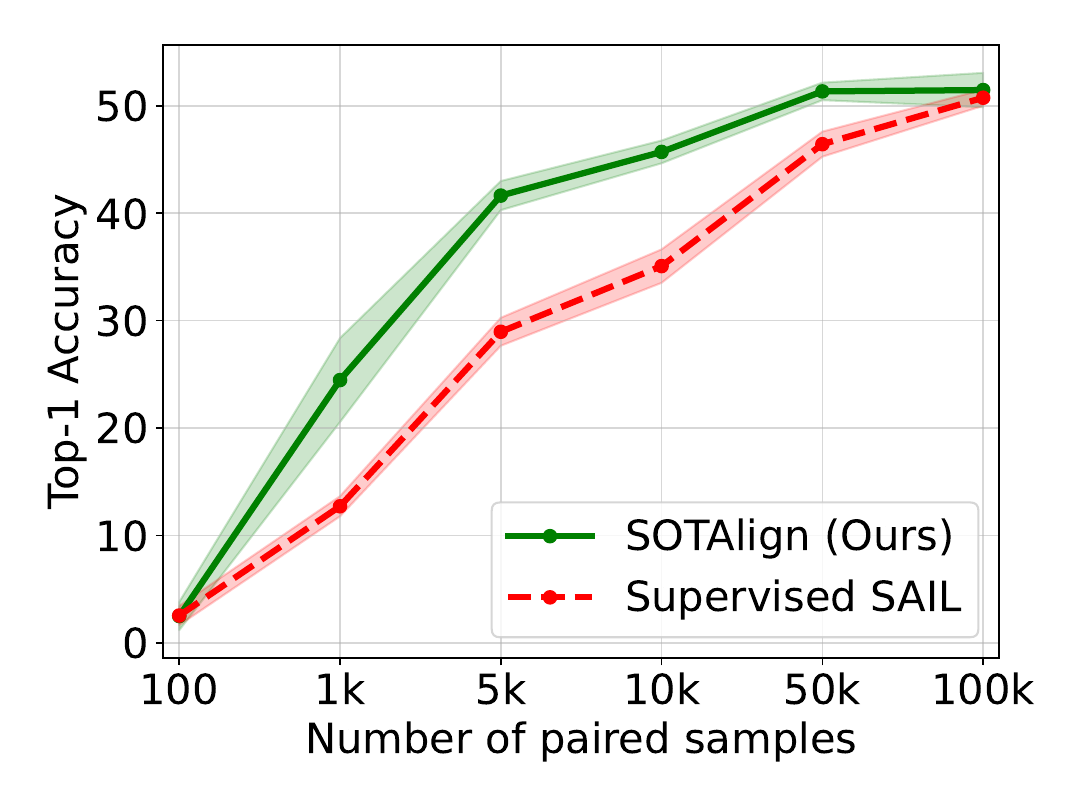} 
\caption{ImageNet classification}
\label{fig:cls_acc_num_anchors}
\end{subfigure}
\hfill
\begin{subfigure}[b]{0.33\textwidth}
\centering
\includegraphics[width=\linewidth]{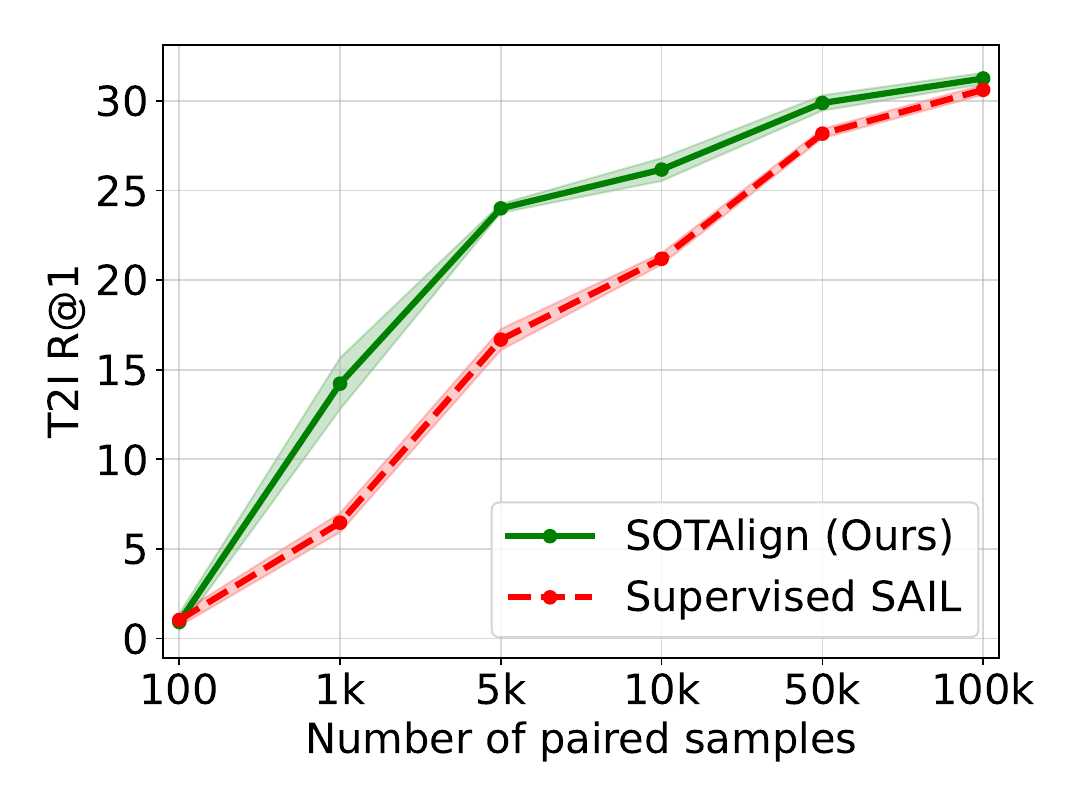}
\caption{COCO T2I retrieval}
\label{fig:t2i_r1_num_anchors}
\end{subfigure}
\hfill
\begin{subfigure}[b]{0.33\textwidth}
\centering
\includegraphics[width=\linewidth]{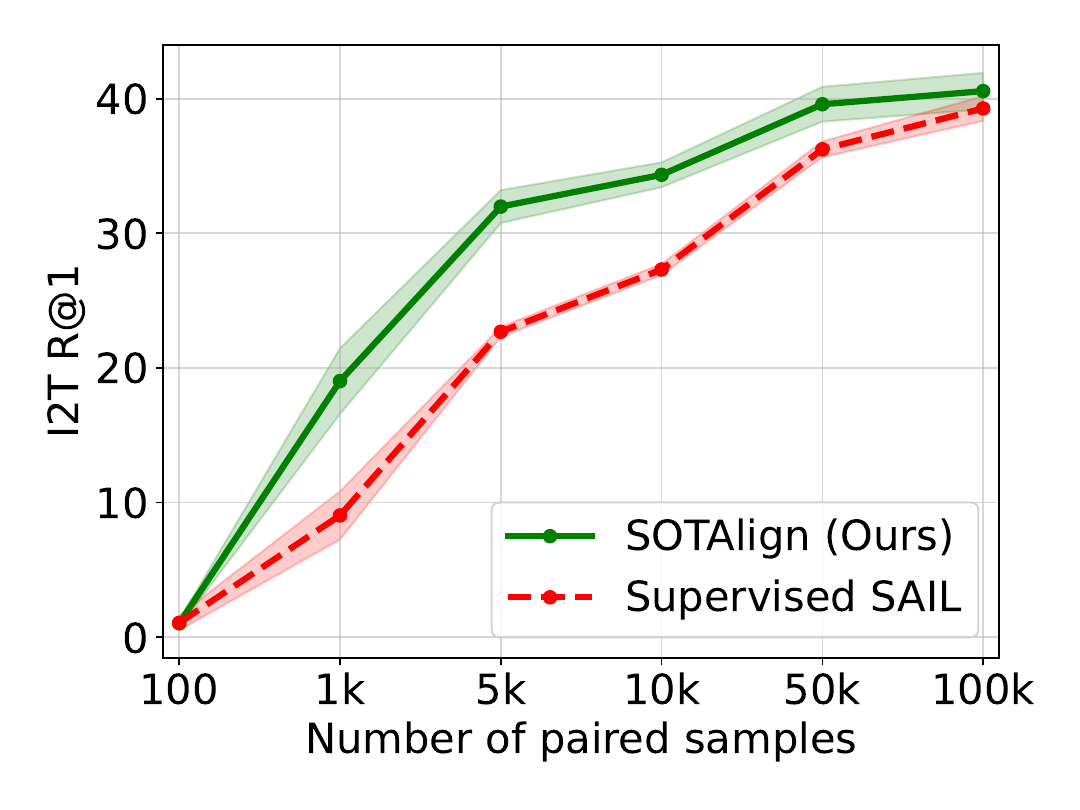}
\caption{COCO I2T retrieval}
\label{fig:i2t_r1_num_anchors}
\end{subfigure}
\caption{Effect of the number of paired samples during alignment on downstream zero-shot classification and retrieval. We fix 1M unpaired samples from CC3M and vary the number of paired samples.}
\label{fig:n_pairs_appendix}
\end{figure*}

\begin{figure*}[t]
\centering
\begin{subfigure}[b]{0.33\textwidth}
\centering
\includegraphics[width=\linewidth]{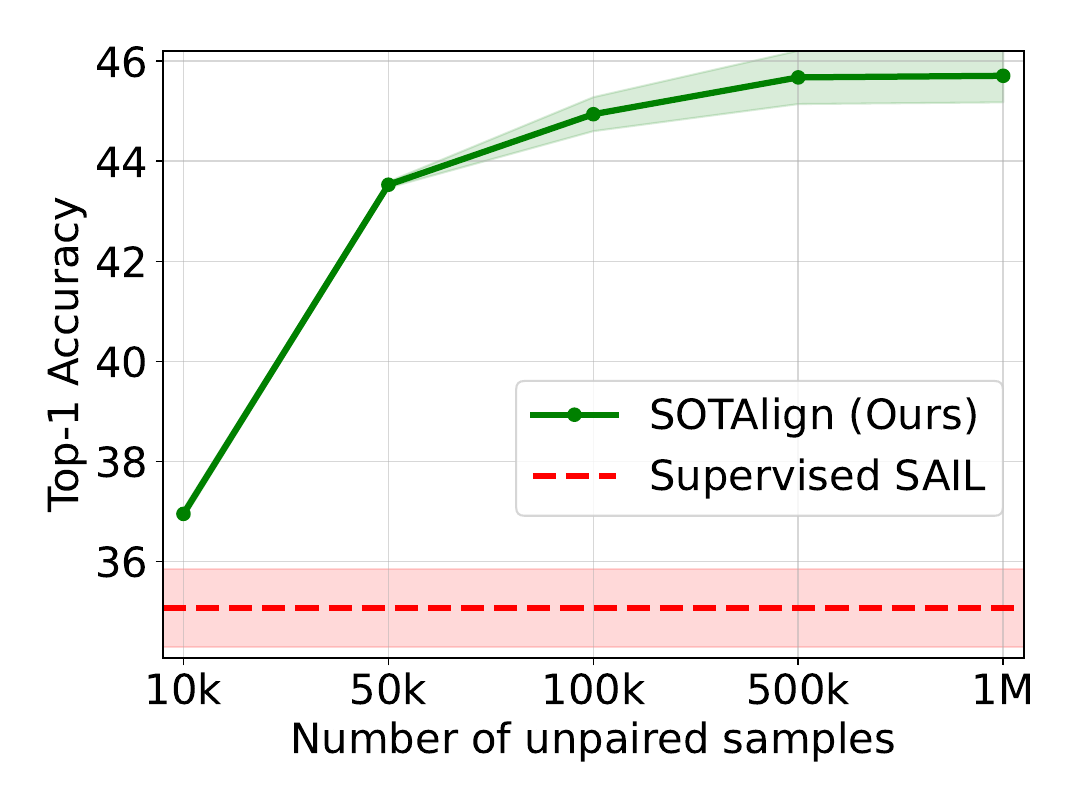} 
\caption{ImageNet classification}
\label{fig:cls_acc_num_unpaired}
\end{subfigure}
\hfill
\begin{subfigure}[b]{0.33\textwidth}
\centering
\includegraphics[width=\linewidth]{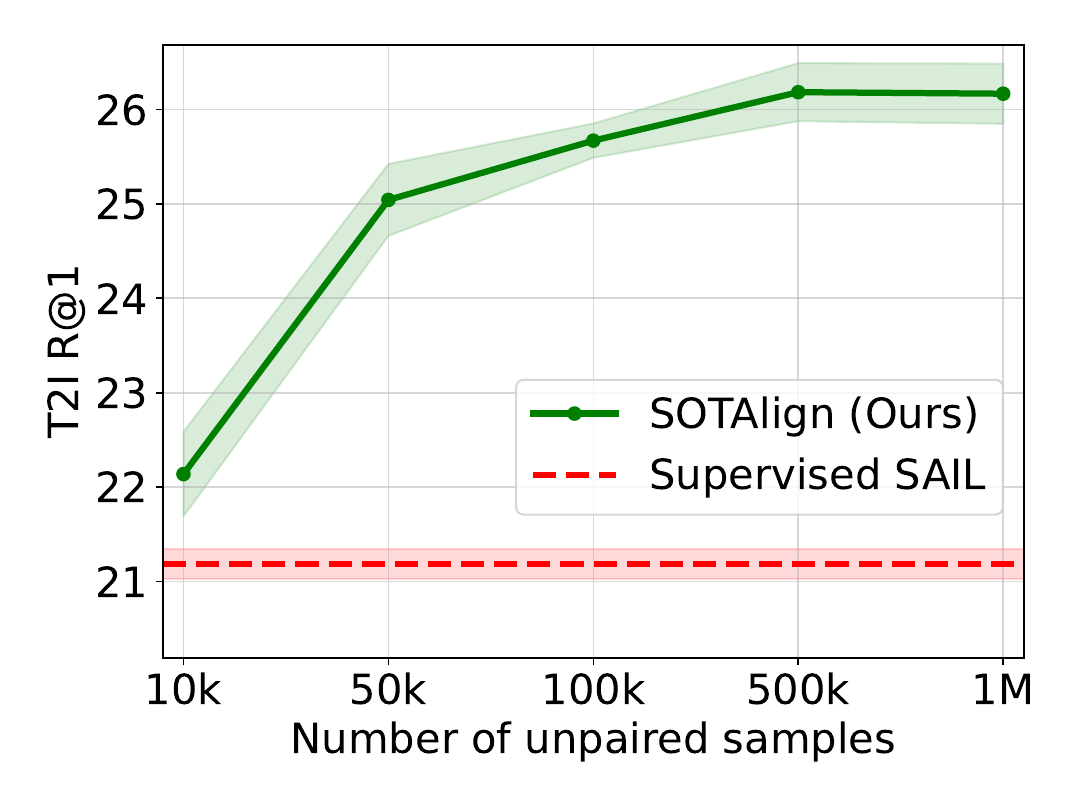}
\caption{COCO T2I}
\label{fig:t2i_r1_num_unpaired}
\end{subfigure}
\hfill
\begin{subfigure}[b]{0.33\textwidth}
\centering
\includegraphics[width=\linewidth]{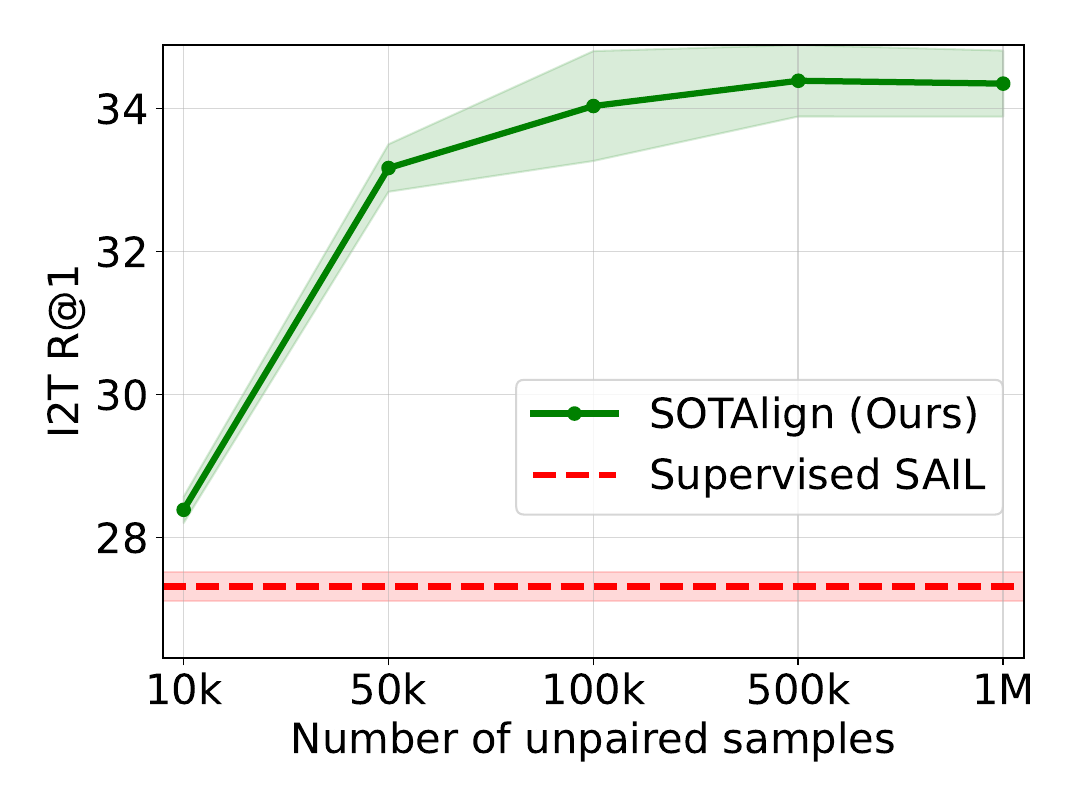}
\caption{COCO I2T}
\label{fig:i2t_r1_num_unpaired}
\end{subfigure}
\caption{Effect of the number of unpaired samples during alignment on downstream zero-shot classification and retrieval. We fix 10k paired samples from CC3M and vary the number of unpaired samples.}
\label{fig:n_unpaired_appendix}
\end{figure*}

\subsection{Robustness of SOTAlign}
\mypar{Number of Samples.} In \cref{sec:supervision_regime} and \cref{sec:data_source}, we analyze the robustness of SOTAlign to variations in both the amount and the source of supervised and unsupervised data. Here, we report the full set of results.
Figure~\ref{fig:n_pairs_appendix} shows how zero-shot classification and retrieval performance vary as a function of the number of paired samples used during alignment. Figure~\ref{fig:n_unpaired_appendix} illustrates the effect of increasing the number of unpaired samples, in comparison to the supervised SAIL baseline. Table~\ref{tab:encoder_ablation_appendix} reports zero-shot classification and retrieval results for different combinations of unimodal vision and language encoders, along with absolute gains over supervised SAIL. %

\mypar{Teacher Quality.} We next evaluate the sensitivity of SOTAlign to the quality of the linear teacher. To this end, we randomly shuffle a fraction $p$ of the image-text pairs used for teacher training, thereby injecting $p \times 100\%$ of incorrect pairs in the training data. As shown in \cref{tab:teacher_noise}, SOTAlign remains substantially above the supervised SAIL baseline even when half of the pairs are corrupted ($p=0.5$). We attribute this robustness to the limited capacity of the linear teacher (which prevents overfitting to noisy pairs) and to the natural invariance of the optimal transport loss.

\mypar{Weight of OT Loss.} We further analyze the sensitivity of SOTAlign to the regularization weight $\alpha$, which controls the contribution of the KLOT divergence on unpaired data to the overall loss. First, we study this sensitivity across different supervision regimes. \Cref{fig:alpha_supervision_regime} reports a grid search over $\alpha \in \{10^{-6}, 10^{-5}, 10^{-4}, 10^{-3}, 10^{-2}\}$ for varying numbers of paired samples. The default value $\alpha = 10^{-4}$ consistently achieves near-optimal performance. Moreover, the optimal $\alpha$ decreases with more supervision, as expected, since the unsupervised regularization should receive less weight when paired data are abundant.
Second, we evaluate the robustness of $\alpha$ when the unpaired data becomes increasingly out of distribution. We interpolate between clean unpaired data from CC3M and a fully OOD distribution by replacing a fraction $p$ of CC3M samples with random Gaussian noise, and then perform a grid search over $\alpha$ for each value of $p$. \Cref{fig:alpha_noisy_teacher} shows that SOTAlign remains robust across the full range of corruptions. The optimal value of $\alpha$ changes substantially only when the unpaired data are entirely OOD ($p=1$), indicating graceful degradation rather than failure under distribution shift.

\mypar{OT Regularization.} We fix $\alpha=0.001$, set the maximum number of Sinkhorn iterations to 500, and vary $\epsilon, \epsilon^* \in \{0.005, 0.01, 0.05, 0.1, 0.5\}$. \Cref{fig:epsilon_1k} and \Cref{fig:epsilon_10k} display this grid search for 1k and 10k supervised pairs, respectively. We recommend setting $\epsilon^* < \epsilon$ so that the teacher distribution has lower entropy than the student, encouraging the student to sharpen toward the teacher's geometry. SOTAlign is more sensitive to $\epsilon$ when supervision is scarce. Importantly, the same guidelines apply across supervision regimes and the optimal ratio $\epsilon / \epsilon^*$ remains consistent.

\begin{figure}[t]
\centering
\begin{subfigure}{0.49\linewidth}
\centering
\includegraphics[width=\linewidth]{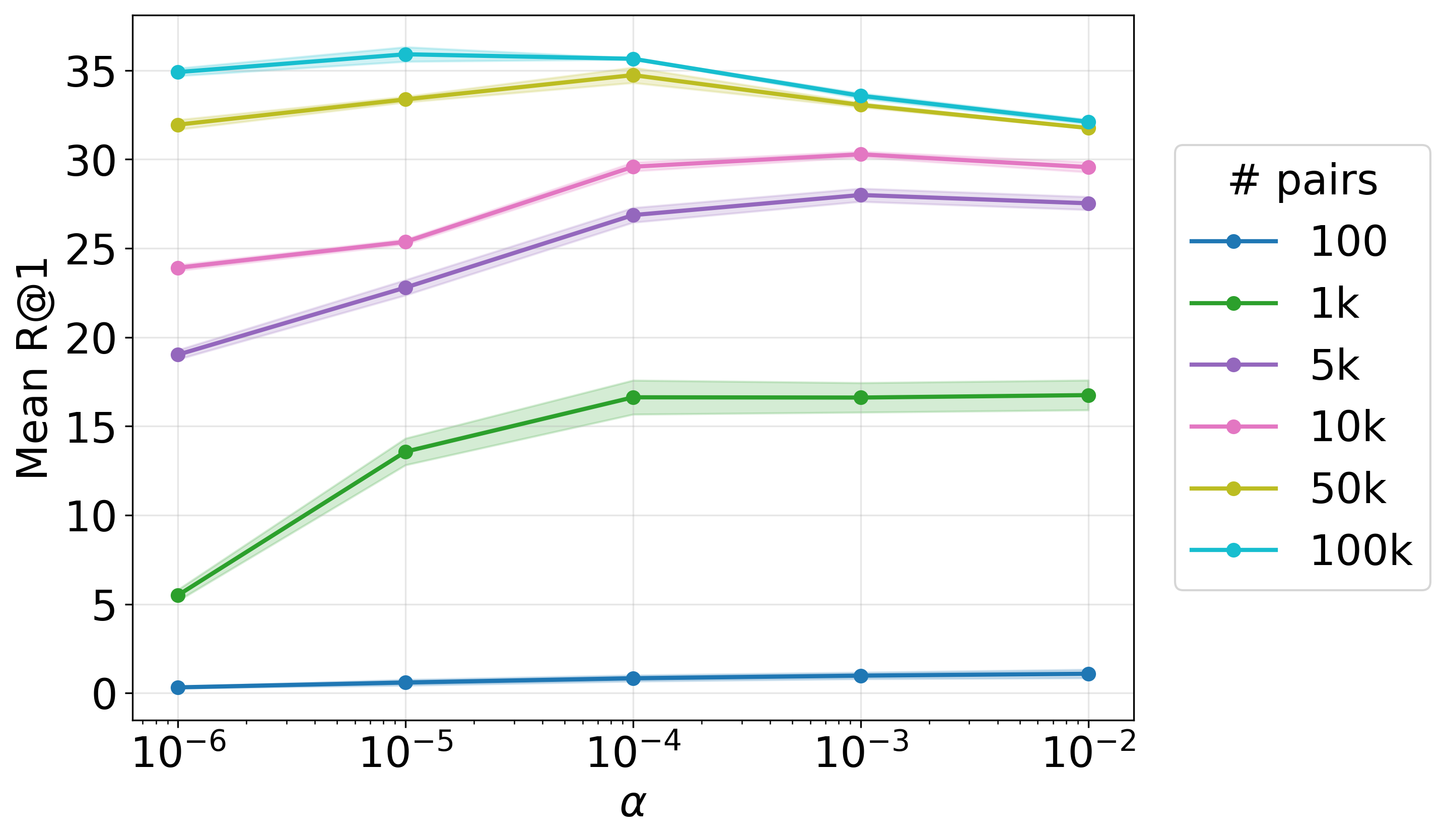}
\caption{Supervision regime}
\label{fig:alpha_supervision_regime}
\end{subfigure}
\hfill
\begin{subfigure}{0.49\linewidth}
\centering
\includegraphics[width=\linewidth]{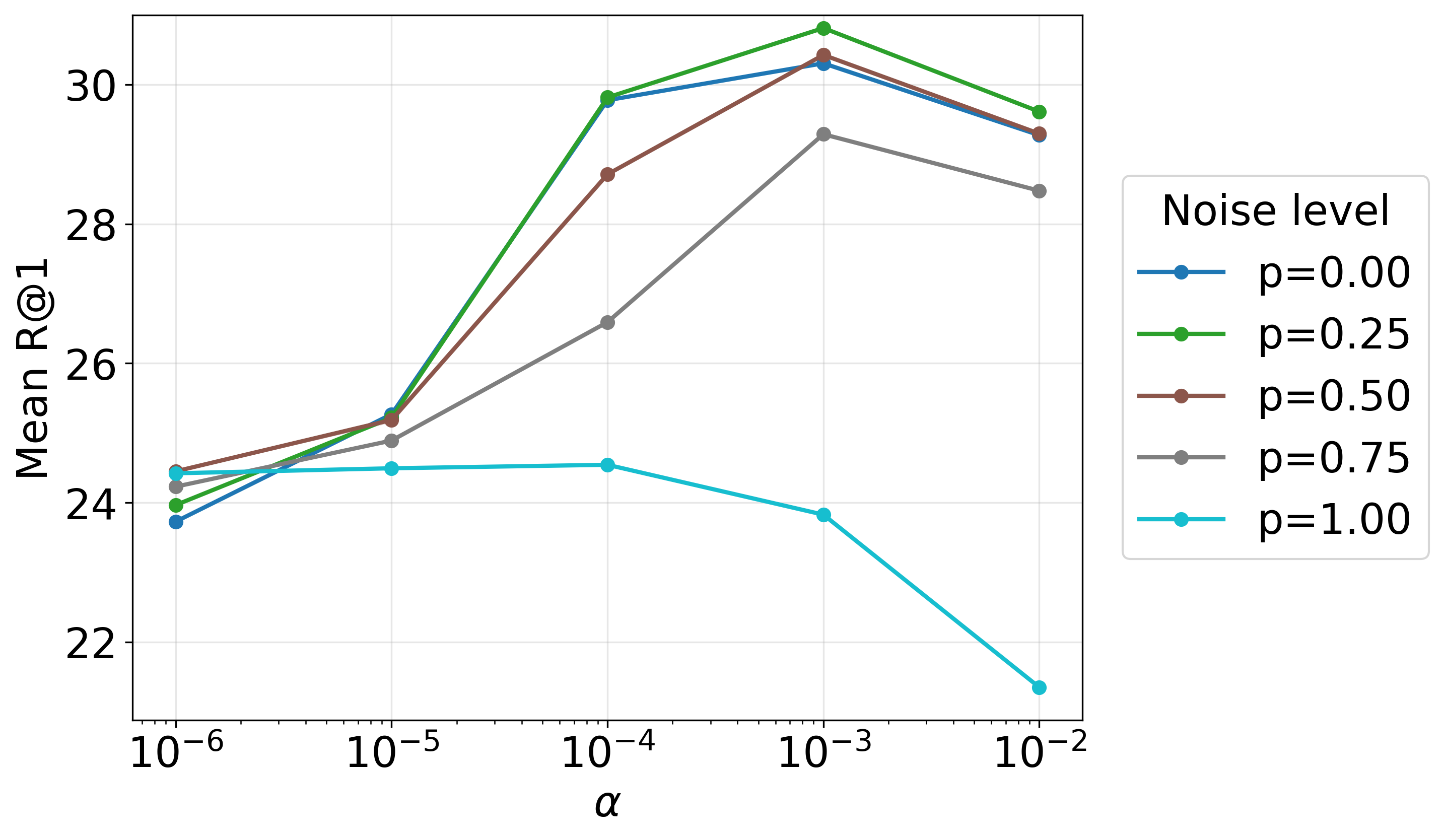}
\caption{Unpaired data noise}
\label{fig:alpha_noisy_teacher}
\end{subfigure}
\caption{Sensitivity of SOTAlign to the regularization weight $\alpha$ under varying supervision regimes (number of supervised pairs) and varying noise level of the unpaired data.}
\label{fig:alpha_sensitivity}
\end{figure}

\begin{table}[t]
\centering
\caption{Robustness of SOTAlign to teacher noise. MeanR@1 on COCO under different teacher noise levels $p$ in comparison to the supervised baseline SAIL.}
\label{tab:teacher_noise}
\begin{tabular}{lcccccc|c}
\toprule
Teacher Noise $p$ & 0.0 & 0.1 & 0.2 & 0.3 & 0.4 & 0.5 & SAIL \\
\midrule
COCO MeanR@1 & 30.8 & 30.1 & 29.8 & 29.0 & 27.9 & 26.6 & 23.6 \\
\bottomrule
\end{tabular}
\end{table}

\subsection{Quantifying the Distribution Shift} 

We study the effect of the distribution shift arising from the use of unpaired data $(X,Y)$ drawn from different sources from those of the paired data $(A,B)$ using the total spherical sliced Wasserstein distance (see Appendix~\ref{appendix_ot}) computed using the Python library POT~\cite{flamary2021pot, flamary2024pot}.
In all experiments, we set $p=2$ and use $N=500$ projection directions. Distances between unimodal datasets are reported in Figure~\ref{fig:distance matrices} averaged over $20$ random seeds corresponding to a different subset of $100\,000$ samples of the dataset and projection set. All distances are computed between embeddings from Dinov3 and NV-Embed-V2 for images and text respectively.

In Figure~\ref{fig:sliced_wasserstein} we exhibit a strong correlation between distance and performance, which provides a good proxy for performance that can be computed without any training or inference.
In addition, this correlation is even stronger when one of the unimodal unpaired dataset is fixed to be CC3M and the other is varied. We show that performance is strongly correlated with the SSW distances between the unimodal datasets, $SSW(B,Y)$ and $SSW(A,X)$ in Figure~\ref{fig: SSW fix image} and \ref{fig: SSW fix text} respectively.

\begin{figure}
\centering
\begin{subfigure}[t]{0.32\linewidth}
    \centering
    \includegraphics[width=\linewidth]{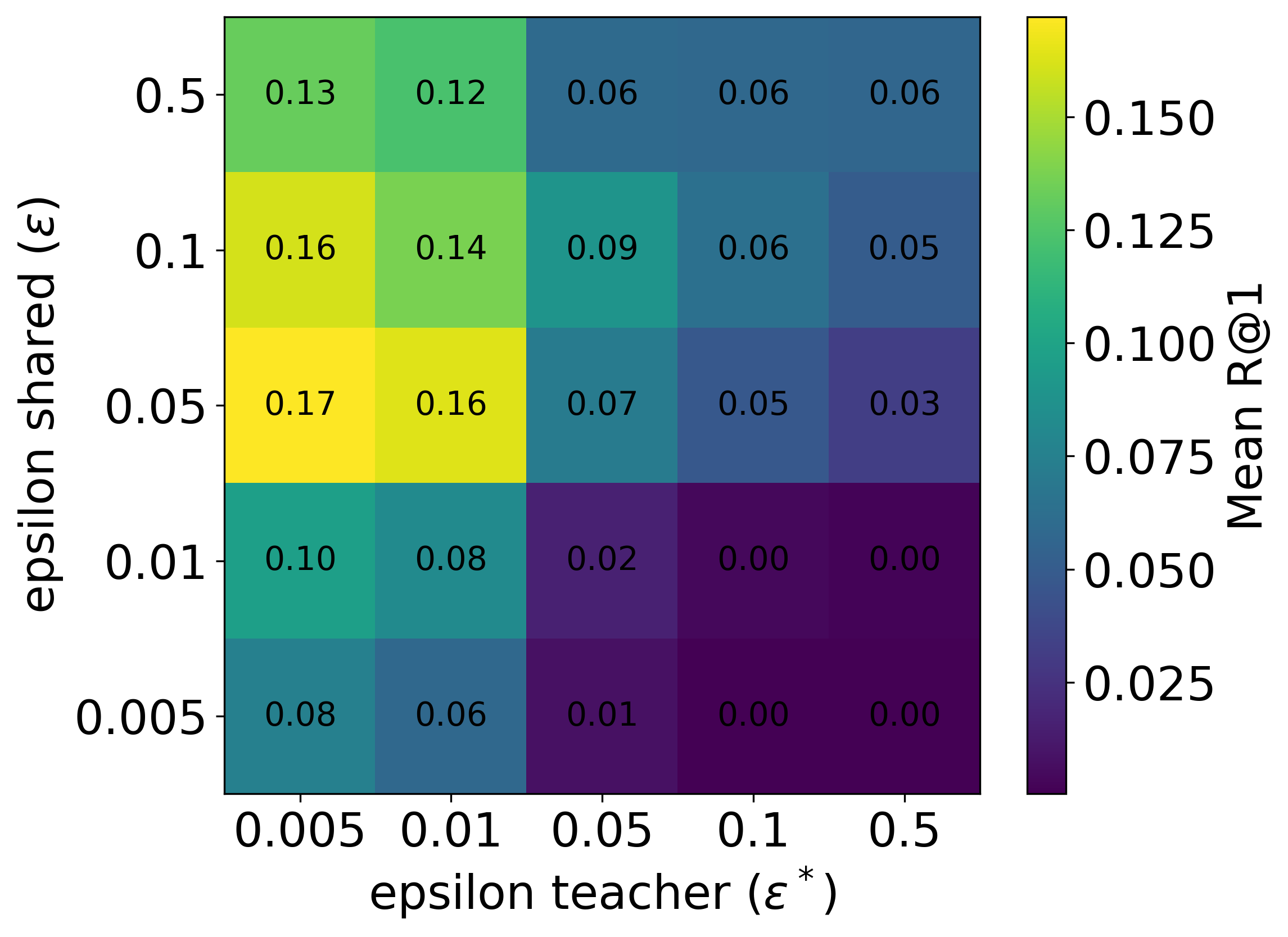}
    \caption{1k pairs.}
    \label{fig:epsilon_1k}
\end{subfigure}
\hfill
\begin{subfigure}[t]{0.32\linewidth}
    \centering
    \includegraphics[width=\linewidth]{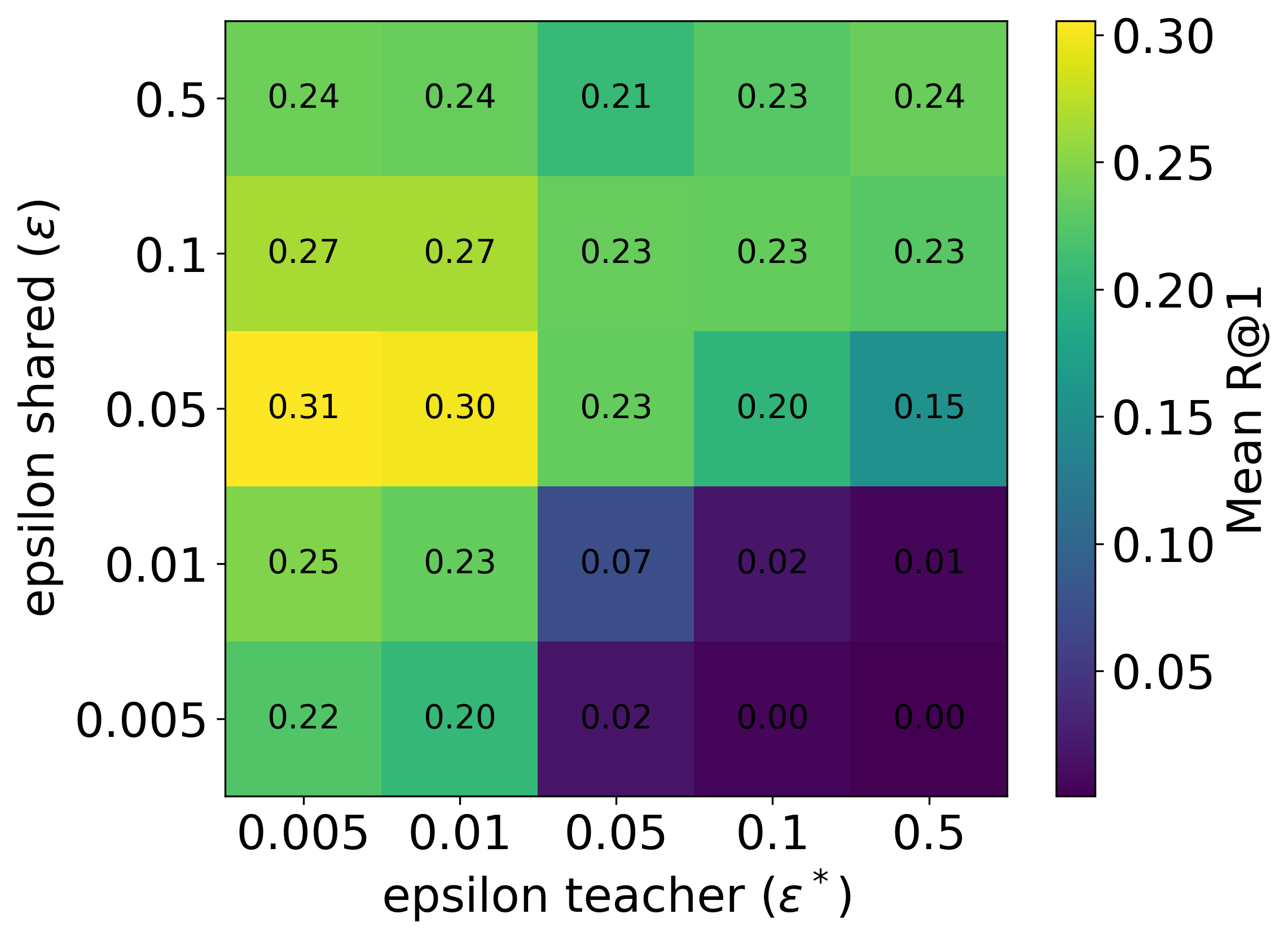}
    \caption{10k pairs.}
    \label{fig:epsilon_10k}
\end{subfigure}
\hfill
\begin{subfigure}[t]{0.32\linewidth}
    \centering
    \includegraphics[width=\linewidth]{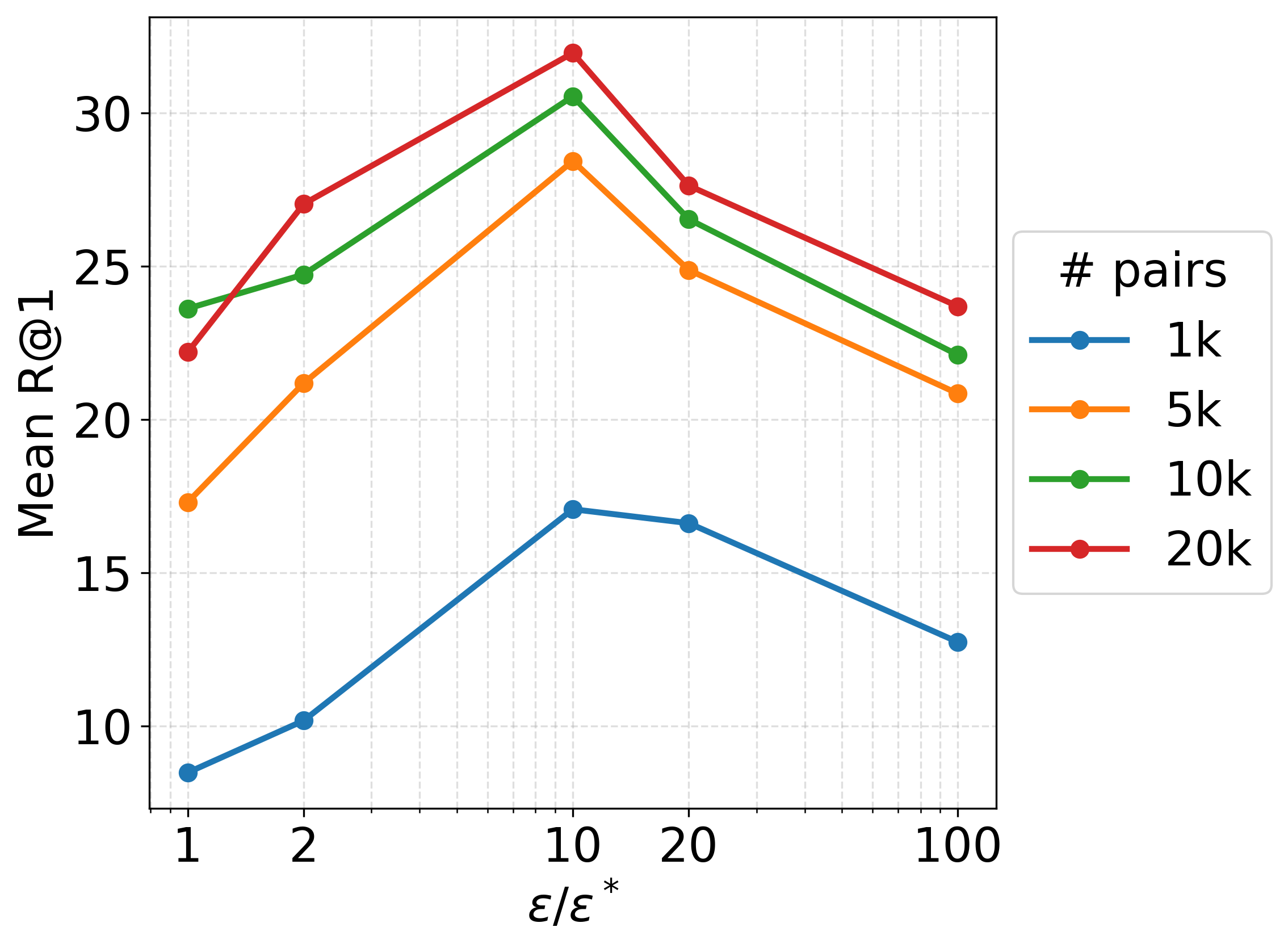}
    \caption{Varying number of pairs.}
    \label{fig:epsilon_varying_k}
\end{subfigure}
\caption{
OT regularization. We fix $\alpha=0.001$, use 1M unpaired samples, and vary $\epsilon$ and $\epsilon^*$ across different supervision regimes.
}
\label{fig:epsilon_ratio}
\end{figure}

\begin{figure}
\centering
\includegraphics[width=0.4\linewidth]{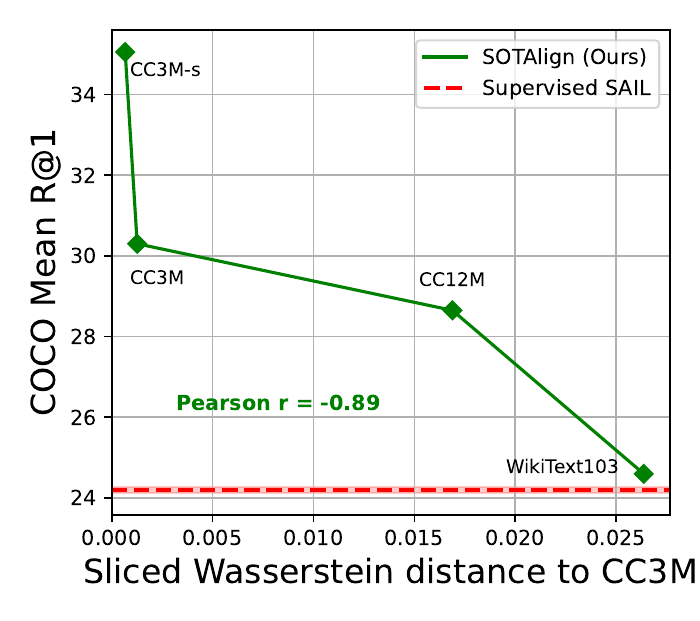}
\includegraphics[width=0.4\linewidth]
{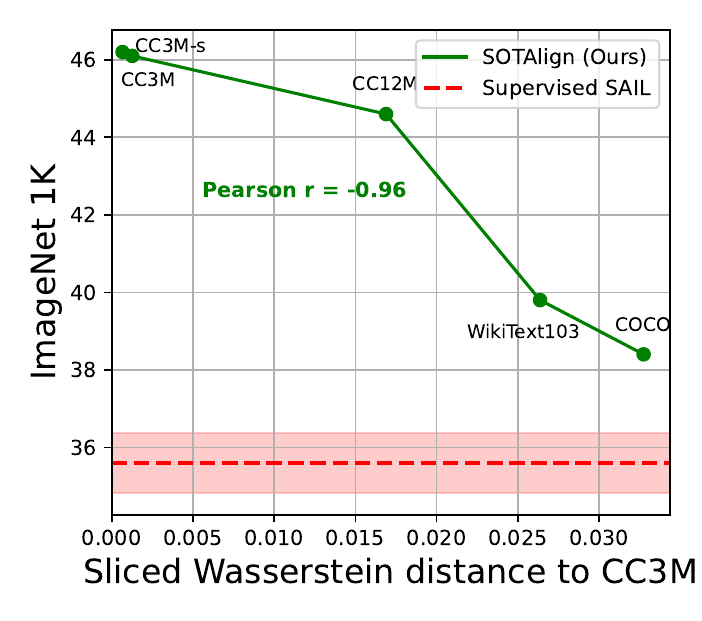}
\caption{Performance when using CC3M as paired data, CC3M text as unpaired text, and other image datasets as unpaired images, together with a comparison to the spherical sliced Wasserstein distance between CC3M image and the other image datasets.}
\label{fig: SSW fix image}
\end{figure}

\begin{figure}
    \centering
    \includegraphics[width=0.4\linewidth]{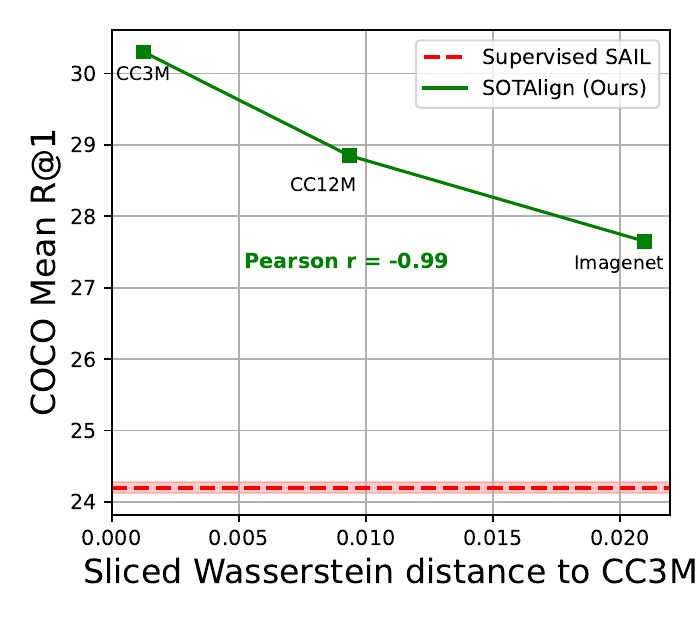}
    \includegraphics[width=0.4\linewidth]
    {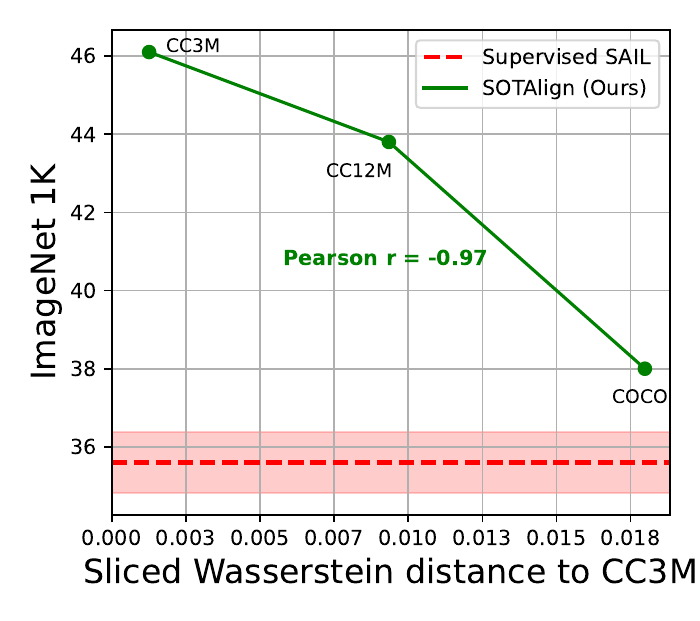}
    \caption{Performance when using CC3M as paired data, CC3M text as unpaired text, and other image datasets as unpaired images, together with a comparison to the spherical sliced Wasserstein distance between CC3M image and the other image datasets.}
    \label{fig: SSW fix text}
\end{figure}

\begin{figure}
    \centering
    \includegraphics[width=0.45\linewidth]{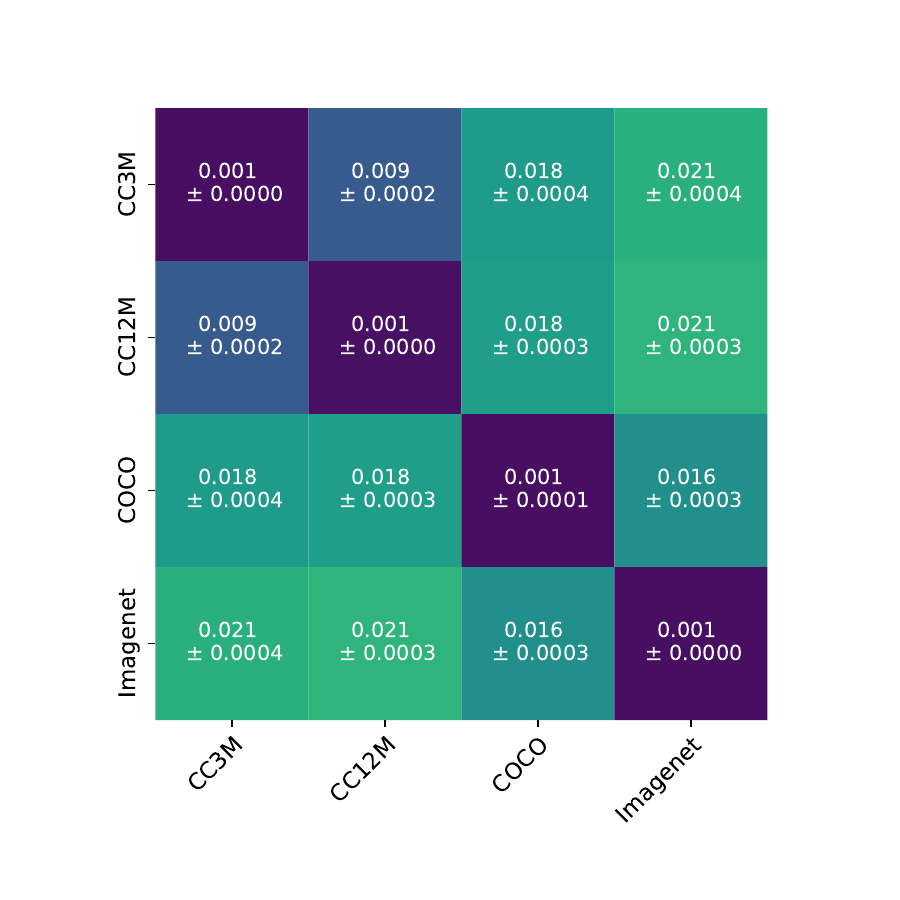}
    \includegraphics[width=0.45\linewidth]{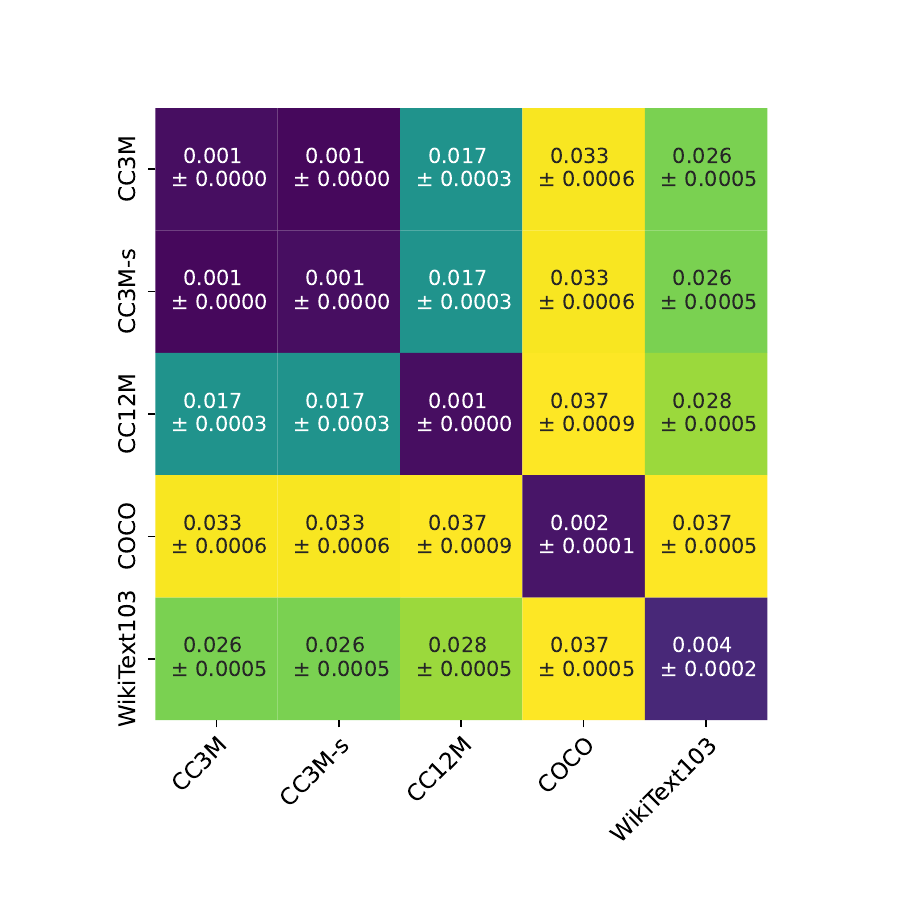}
    \caption{Spherical sliced Wasserstein distances between different image datasets (left) and text datasets (right). We report mean and std of the distances over $20$ seeds.}
    \label{fig:distance matrices}
\end{figure}

\subsection{Benchmarking Semi-Supervised Alignment}
\label{subsection:benchmarking_semisup_alignment}
Table~\ref{tab:retrieval_coco_flickr30k_appendix} reports retrieval performance on COCO and Flickr30k, while Table~\ref{tab:classificatin_appendix} presents zero-shot image classification accuracy across a variety of downstream datasets.

In Section \ref{sec:evaluation}, we evaluate SOTAlign in a semi-supervised alignment setting proposed by \citet{yacobi2025sue}. We train for alignment on a single dataset, and evaluate retrieval on the test split of the same dataset (with only 400 test instances for retrieval). 
The datasets are: COCO \citep{lin2014coco}, Flickr30k \citep{plummer2015flickr30}, and Polyvore \citep{han2017polyvore}. \citet{yacobi2025sue} use MLPs for alignment and an embedding dimensionality of 8. In Table \ref{tab:sue_appendix}, we report the performance of SOTAlign adhering to their architectural choices. Our method achieves gains of +5.5 on COCO, +28.7 on Flickr30k, and +18.2 on Polyvore I2T R@5. However, if we lift these constraints and instead use linear alignment layers with a target dimension of 512, performance increases further, reaching +14.3 on COCO, +40.0 on Flickr30k, and +32.5 on Polyvore.

As shown in \Cref{tab:domain_modality_generalization}, SOTAlign generalizes effectively to the domain-specific SkyScript setting, achieving strong mean R@10 performance when trained with 10k paired samples and 700k unpaired image and text samples. To provide a more comprehensive evaluation, \Cref{fig:skyscript_retrieval_10k} and \Cref{fig:skyscript_retrieval_30k} report retrieval results across different supervision regimes and evaluation metrics, including R@1, R@5, and R@10. These figures further compare performance under different test set sizes with 10k and 30k SkyScript samples, respectively.

\begin{figure}[!ht]
\centering
\includegraphics[width=0.32\linewidth]{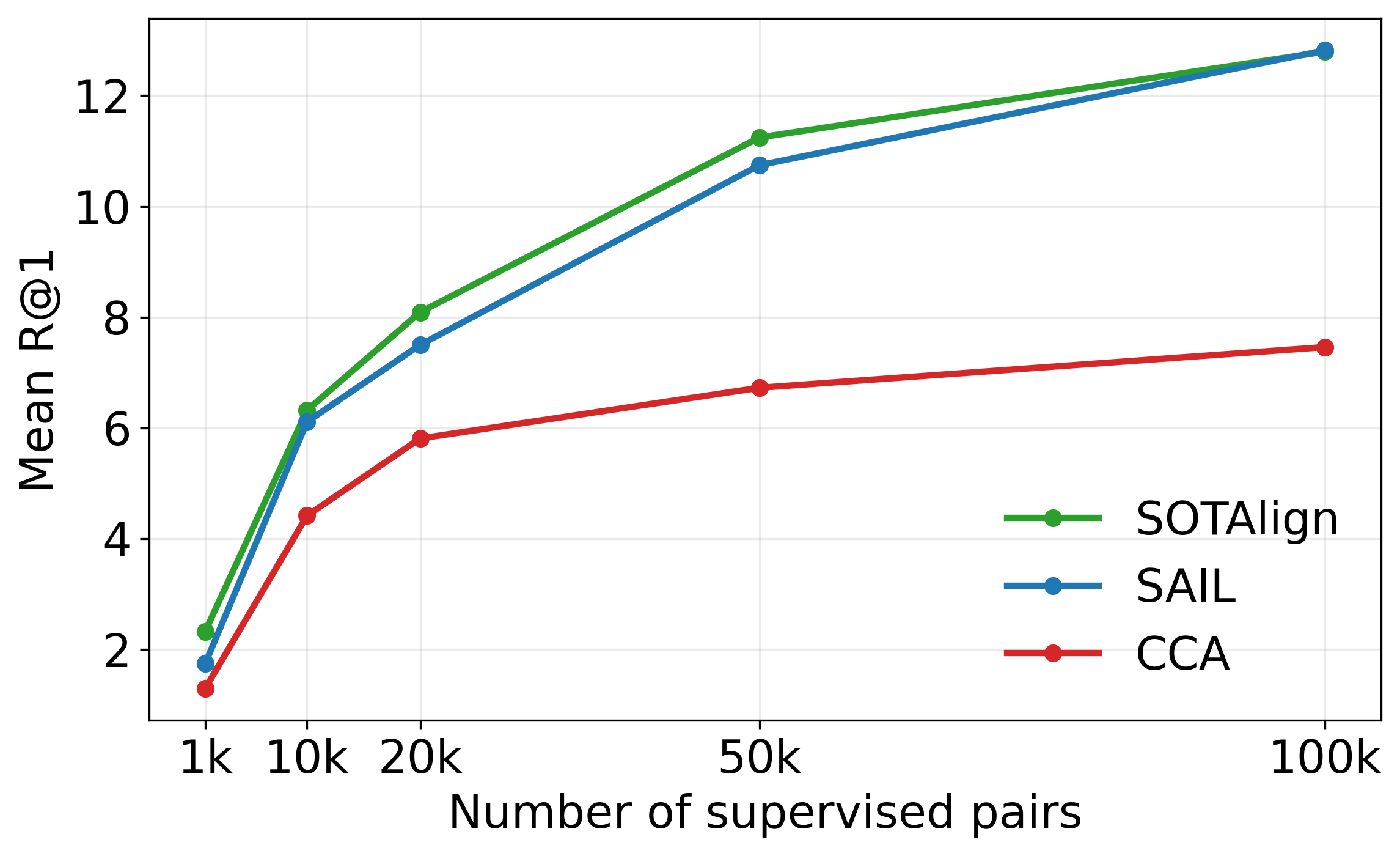}
\hfill
\includegraphics[width=0.32\linewidth]{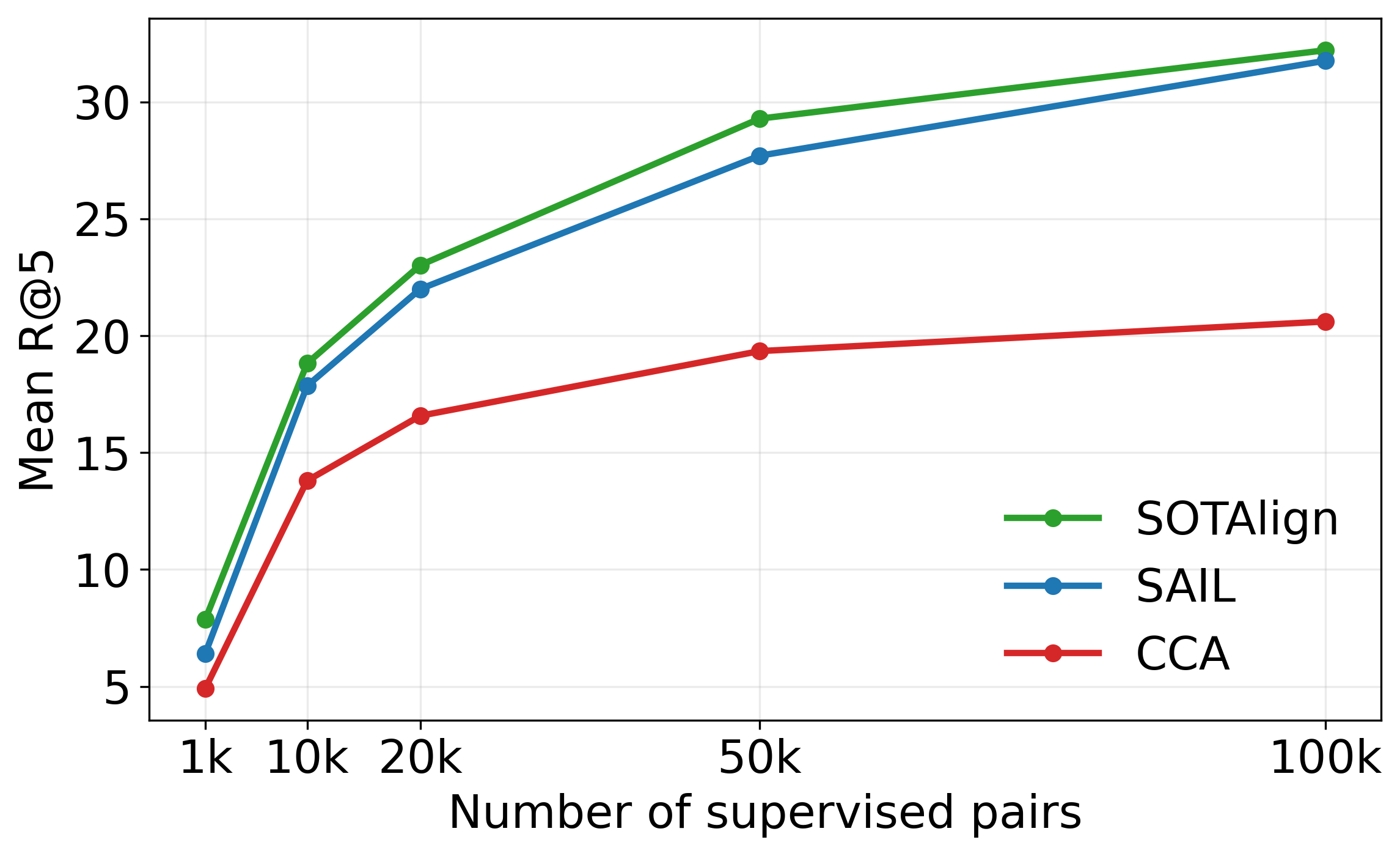}
\hfill
\includegraphics[width=0.32\linewidth]{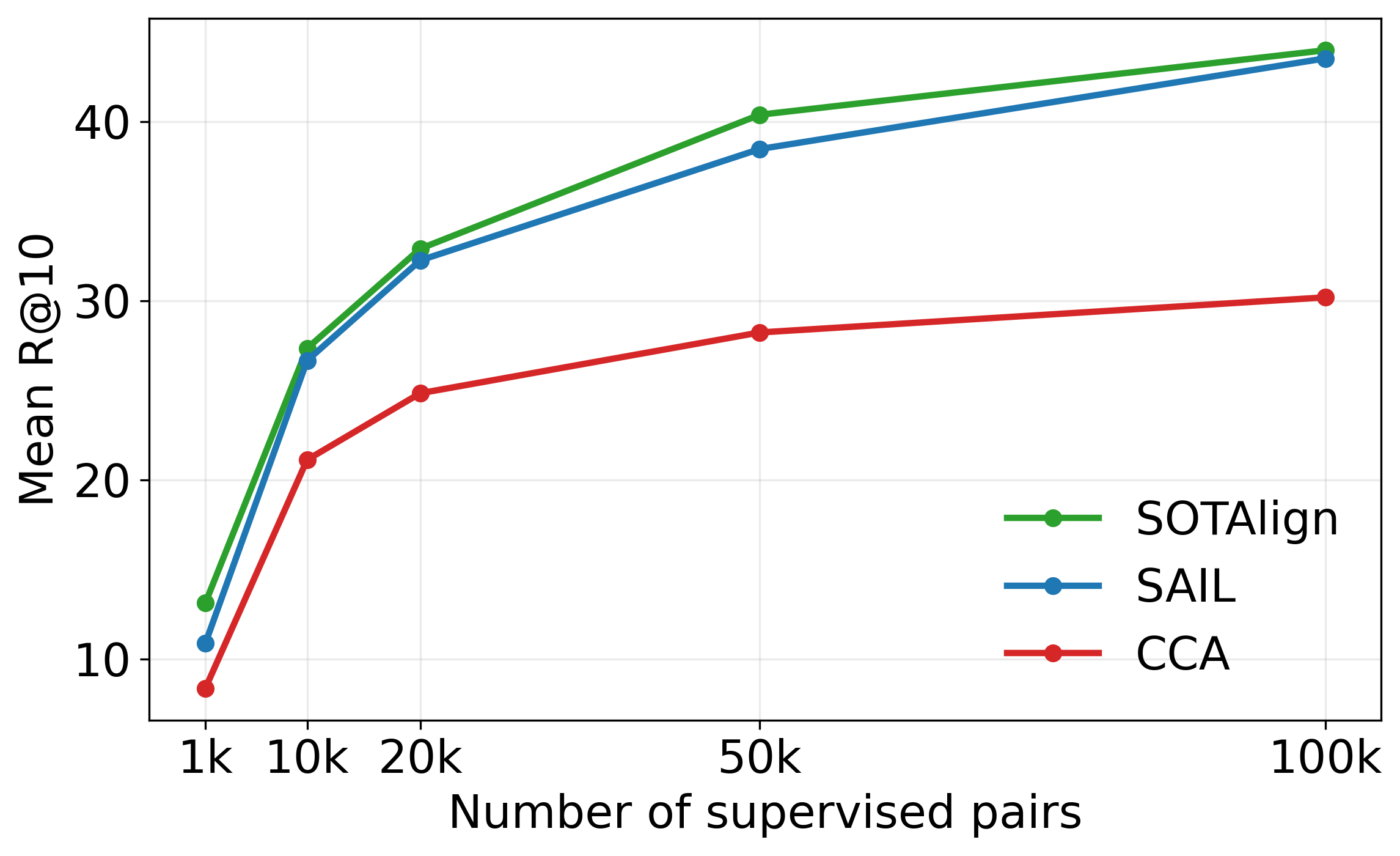}
\caption{Satellite image-text retrieval on SkyScript with 10k test pairs.}
\label{fig:skyscript_retrieval_10k}
\end{figure}

\begin{figure}[!ht]
\centering
\includegraphics[width=0.32\linewidth]{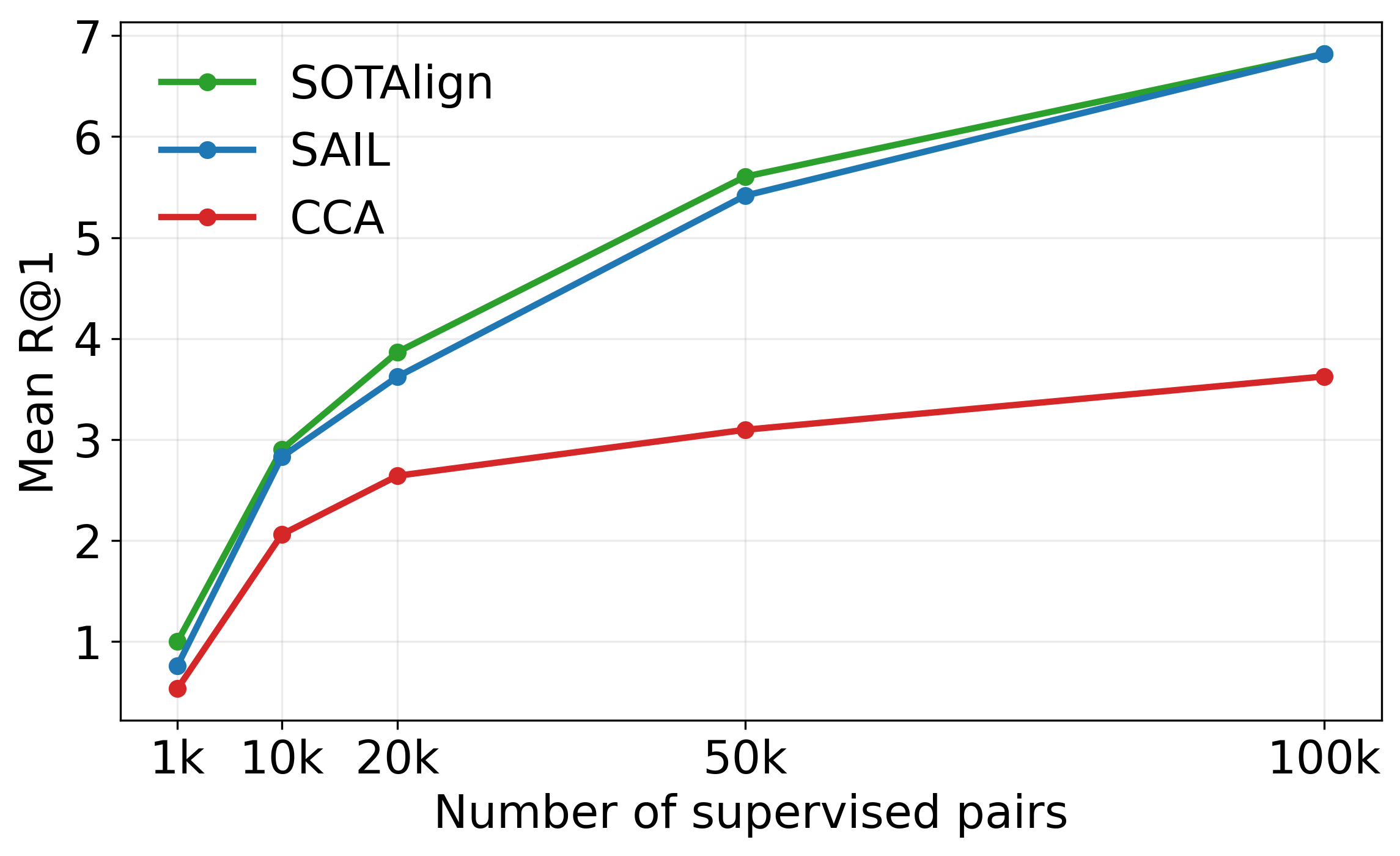}
\hfill
\includegraphics[width=0.32\linewidth]{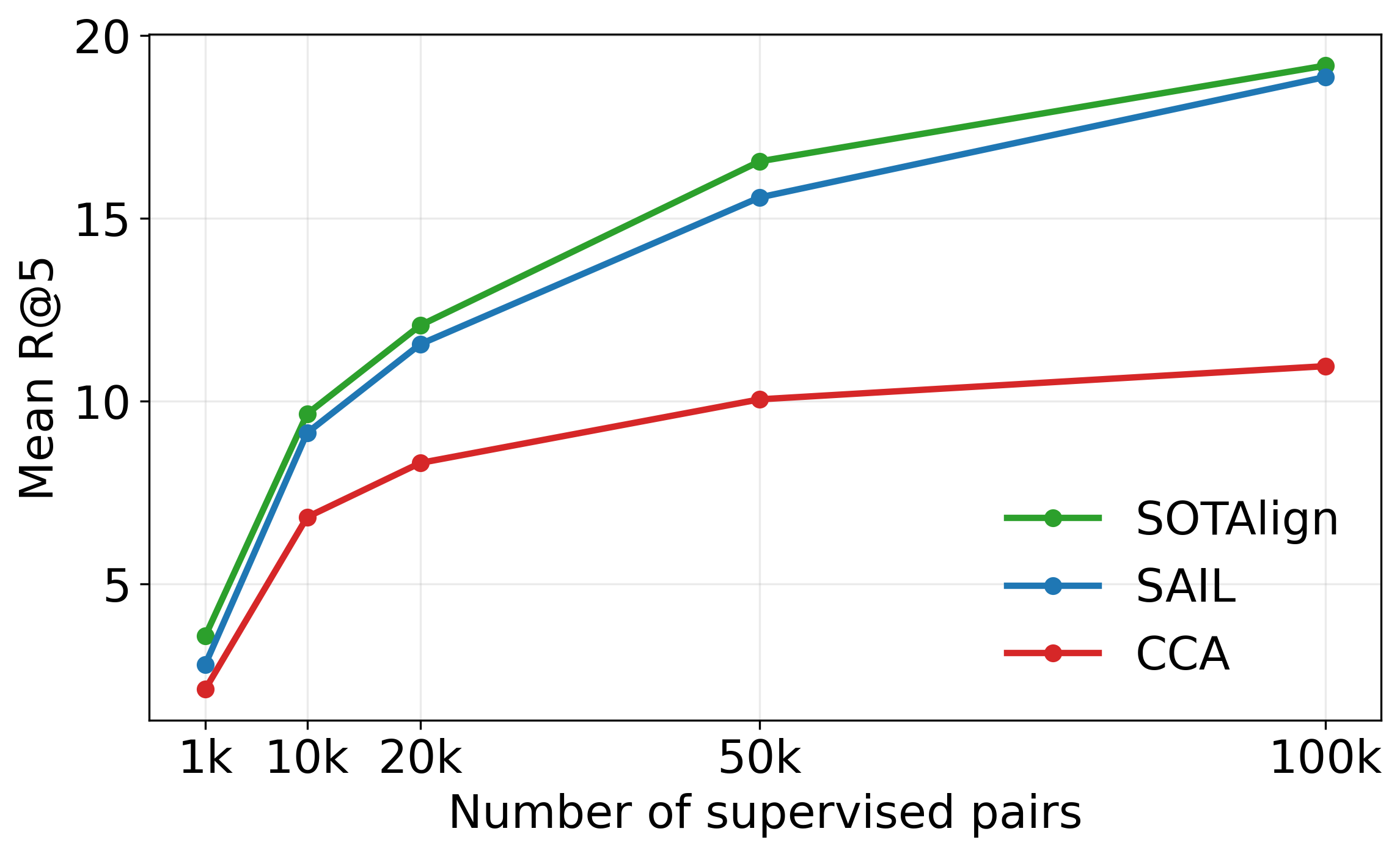}
\hfill
\includegraphics[width=0.32\linewidth]{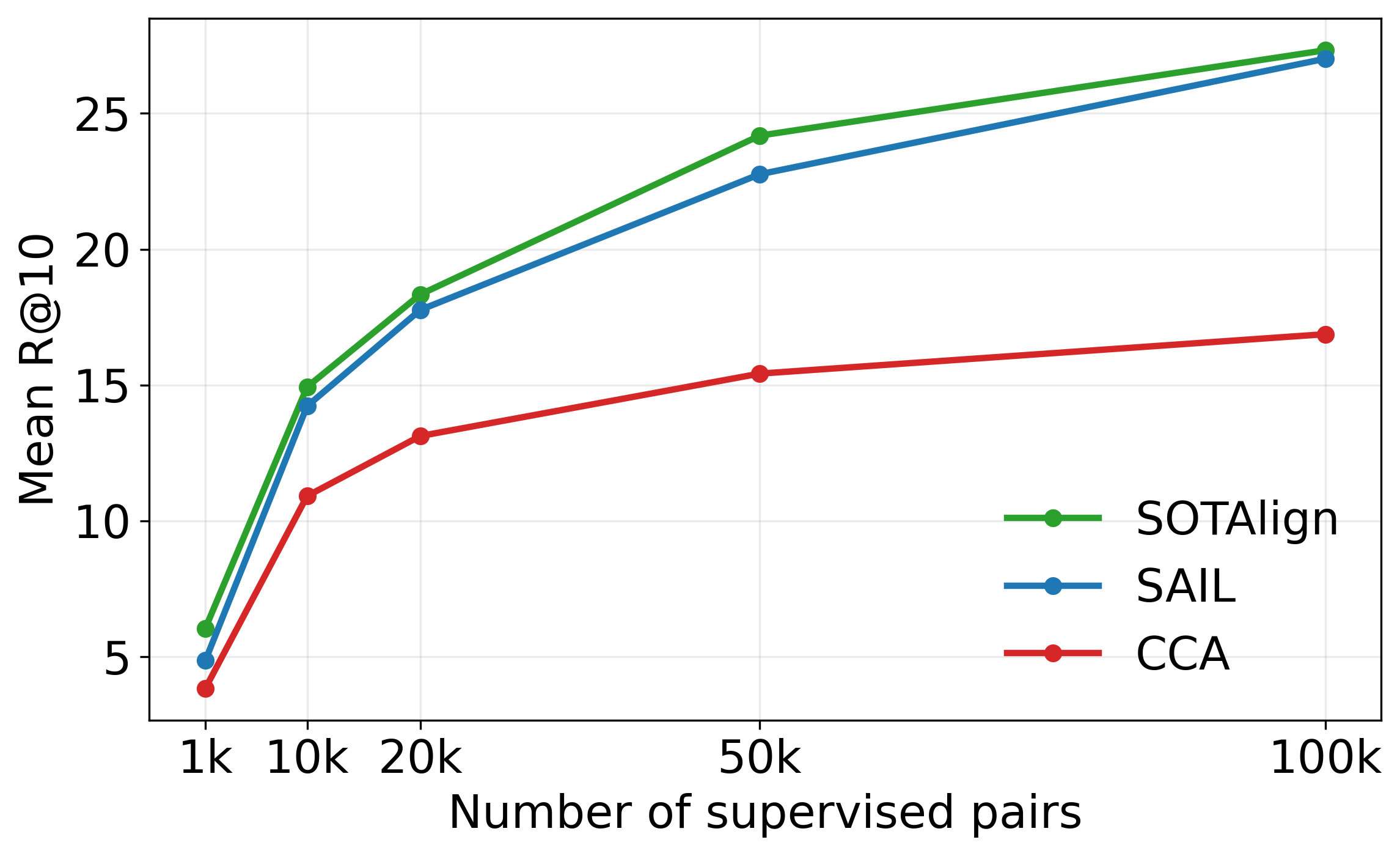}
\caption{Satellite image-text retrieval on SkyScript with 30k test pairs.}
\label{fig:skyscript_retrieval_30k}
\end{figure}

\begin{table}[ht]
\centering
\caption{We train SOTAlign on 10k image-text pairs from CC3M and up to 1M samples from varying unimodal datasets and report the zero-shot classification (top-1 accuracy) and retrieval (R@1). For comparison we report SAIL trained with as many samples and SAIL 1M i.e. the version trained using 1M supervised samples from CC3M (100x more supervision).}
\label{tab:dataset_ablation_unpaired_appendix}
\begin{tabular}{@{}c cc ccc@{}}
\toprule
\textbf{Method} & \makecell{\textbf{Unpaired Images}} & \textbf{Unpaired Text} & \textbf{ImageNet-1K} & \textbf{COCO T2I} & \textbf{COCO I2T}\\
\midrule
\cellcolor{gray!20}SAIL 1M & \cellcolor{gray!20}--- & \cellcolor{gray!20}--- & \cellcolor{gray!20} 56.4 & \cellcolor{gray!20} 35.5 & \cellcolor{gray!20} 45.5 \\ %

SAIL  & ---    & ---     & 35.6 & 21.0 & 27.4 \\
\midrule
SOTAlign & CC3M   & CC3M   & 46.1 & 26.5 & 34.1 \\
\midrule
SOTAlign & CC3M   & \makecell{CC3M \\ synth.}    & 46.2 & \textbf{30.4} & \textbf{39.7} \\
\midrule
SOTAlign & CC3M   & CC12M   & 44.6 & 24.5 & 32.0 \\
SOTAlign & CC12M    & CC3M      & 43.8 & 24.9 & 32.8 \\
SOTAlign & CC12M   & CC12M    & \textbf{46.8} & 25.7 & 34.4 \\
\midrule
SOTAlign & ImageNet & CC3M     & 43.4 & 23.8 & 31.5 \\
SOTAlign & ImageNet & CC12M     & 44.3 & 24.2 & 31.5 \\
\midrule
SOTAlign & CC3M   & COCO   & 38.4 & 28.4 & 26.7 \\
SOTAlign & CC12M  & COCO   & 38.3 & 27.7 & 30.6 \\
SOTAlign & ImageNet & COCO & 40.6 & 25.5 & 26.1 \\
\midrule
SOTAlign & COCO  & CC3M  & 38.0 & 21.7 & 33.9 \\
SOTAlign & COCO  & CC12M  & 38.5 & 22.0 & 33.1 \\
\midrule
SOTAlign & CC3M & WikiText103 & 39.8 & 21.4 & 27.8 \\
SOTAlign & COCO & WikiText103 & 37.1 & 19.5 & 29.4 \\
SOTAlign & ImageNet & WikiText103 & 40.7 & 20.7 & 28.1 \\
\bottomrule
\end{tabular}
\end{table}

\begin{table}[t]
\centering
\caption{SOTAlign trained on 10k pairs and 1M unpaired samples from CC3M with different unimodal encoders. Zero-shot classification (top-1 accuracy) and retrieval (R@1). Green values represent the absolute gain over supervised SAIL.}
\label{tab:encoder_ablation_appendix}
\small
\renewcommand{\arraystretch}{1.2}
\begin{tabular}{@{}cl c l ccc@{}}
\toprule
\makecell[c]{\textbf{Vision}} & \makecell[c]{\textbf{Language}} & \multirow{2}{*}{\makecell{\textbf{Mutual}\\\textbf{k-NN}}} & \multirow{2}{*}{\textbf{Method}} & \makecell{\textbf{ImageNet}} & \makecell{\textbf{COCO}} & \makecell{\textbf{COCO}} \\
\makecell[c]{\textbf{Model}} & \makecell{\textbf{Model}} & & & \makecell{\textbf{1K}} & \makecell{\textbf{T2I}} & \makecell{\textbf{I2T}} \\
\midrule \addlinespace[0.5ex]
\multirow{6}{*}{\rotatebox[origin=c]{90}{DINOv2}} 
& \multirow{2}{*}{Nemotron-8B} & \multirow{2}{*}{14.6} & SAIL     & 25.5 & 11.7 & 18.0 \\
&                              &                       & SOTAlign & 32.4 \deltaUp{6.9} & 15.5 \deltaUp{3.8} & 23.3 \deltaUp{5.3} \\
\cmidrule(lr){2-7}
& \multirow{2}{*}{Qwen3-8B}    & \multirow{2}{*}{18.9} & SAIL     & 31.8 & 16.8 & 23.8 \\
&                              &                       & SOTAlign & 39.5 \deltaUp{7.7} & 20.9 \deltaUp{4.1} & 31.1 \deltaUp{7.3} \\
\cmidrule(lr){2-7}
& \multirow{2}{*}{NV-Embed-v2} & \multirow{2}{*}{18.2} & SAIL     & 32.7 & 19.0 & 23.4 \\
&                              &                       & SOTAlign & 42.5 \deltaUp{9.8} & 23.1 \deltaUp{4.1} & 31.1 \deltaUp{7.7} \\
\midrule \addlinespace[0.5ex]
\multirow{6}{*}{\rotatebox[origin=c]{90}{DINOv3}} 
& \multirow{2}{*}{Nemotron-8B} & \multirow{2}{*}{14.1} & SAIL     & 25.4 & 12.8 & 21.0 \\
&                              &                       & SOTAlign & 35.5 \deltaUp{10.1} & 16.6 \deltaUp{3.8} & 26.2 \deltaUp{5.2} \\
\cmidrule(lr){2-7}
& \multirow{2}{*}{Qwen3-8B}    & \multirow{2}{*}{18.0} & SAIL     & 33.4 & 19.1 & 28.0 \\
&                              &                       & SOTAlign & 42.7 \deltaUp{9.3} & 24.1 \deltaUp{5.0} & \textbf{35.3} \deltaUp{7.3} \\
\cmidrule(lr){2-7}
& \multirow{2}{*}{NV-Embed-v2} & \multirow{2}{*}{17.6} & SAIL     & 35.6 & 21.0 & 27.4 \\
&                              &                       & SOTAlign & \textbf{46.1} \deltaUp{10.5} & \textbf{26.5} \deltaUp{5.5} & 34.1 \deltaUp{6.7} \\
\bottomrule
\end{tabular}
\end{table}

\begin{table*}[t]
\centering
\caption{Zero-shot text-image retrieval (Recall@K) on COCO and Flickr30k. Comparison of supervised and semi-supervised methods trained with 10k image-text pairs and 1M unpaired samples from CC3M. Upper bound with 1M supervised pairs in grey.}
\label{tab:retrieval_coco_flickr30k_appendix}
\setlength{\tabcolsep}{4pt}
\renewcommand{\arraystretch}{0.95}
\small
\begin{tabular}{@{}ll cc cc cc cc@{}}
\toprule
& & \multicolumn{4}{c}{\textbf{COCO}} & \multicolumn{4}{c}{\textbf{Flickr30k}} \\
\cmidrule(lr){3-6}\cmidrule(lr){7-10}
& Method &
\multicolumn{2}{c}{T2I} & \multicolumn{2}{c}{I2T} &
\multicolumn{2}{c}{T2I} & \multicolumn{2}{c}{I2T} \\
\cmidrule(lr){3-4}\cmidrule(lr){5-6}\cmidrule(lr){7-8}\cmidrule(lr){9-10}
& & R@1 & R@5 & R@1 & R@5 & R@1 & R@5 & R@1 & R@5 \\
\midrule
\multirow{3}{*}{\rotatebox[origin=c]{90}{Sup.}} 
& \cellcolor{gray!20}SAIL (1M) & \cellcolor{gray!20}35.5 & \cellcolor{gray!20}61.0 & \cellcolor{gray!20}45.5 & \cellcolor{gray!20}72.0 & \cellcolor{gray!20}63.1 & \cellcolor{gray!20}87.2 & \cellcolor{gray!20}75.0 & \cellcolor{gray!20}94.4 \\
& SAIL           & 21.0 & 44.3 & 27.4 & 51.7 & 45.7 & 75.1 & 54.1 & 81.2 \\
& STRUCTURE      & 21.0 & 43.7 & 28.7 & 52.7 & 46.8 & 74.9 & 54.0 & 82.9 \\
\midrule
\multirow{5}{*}{\rotatebox[origin=c]{90}{Semi-sup.}} 
& SAIL           & 20.7 & 43.6 & 26.5 & 51.7 & 44.9 & 74.2 & 53.1 & 82.1 \\
& STRUCTURE      & 20.9 & 43.5 & 28.0 & 52.2 & 45.7 & 74.9 & 56.0 & 83.3 \\
& NNCLR          & 21.3 & 44.4 & 27.9 & 52.2 & 46.6 & 75.3 & 52.9 & 82.1 \\
& S-CLIP         & 20.4 & 42.6 & 27.8 & 50.5 & 44.5 & 74.4 & 52.6 & 82.3 \\
& \textbf{SOTAlign (Ours)} & \textbf{26.5} & \textbf{49.8} & \textbf{34.1} & \textbf{59.4} & \textbf{51.7} & \textbf{79.2} & \textbf{60.8} & \textbf{85.7} \\
\bottomrule
\end{tabular}
\end{table*}

\begin{table}[t]
\centering
\caption{Zero-shot image classification (top-1 accuracy). Comparison of supervised and semi-supervised methods trained with 10k image-text pairs and 1M unpaired samples from CC3M. Upper bound with 1M supervised pairs in grey..}
\label{tab:classificatin_appendix}
\small
\begin{tabular}{@{} l l ccccccc @{}}
\toprule
& \textbf{Method} & \rotatebox{90}{\textbf{Food-101}} & \rotatebox{90}{\textbf{CIFAR-10}} & \rotatebox{90}{\textbf{CIFAR-100}} & \rotatebox{90}{\textbf{Aircraft}} & \rotatebox{90}{\textbf{DTD}} & \rotatebox{90}{\textbf{Flowers}} & \rotatebox{90}{\textbf{ImageNet}} \\
\midrule
\multirow{3}{*}{\rotatebox[origin=c]{90}{Sup.}} & \cellcolor{gray!20}SAIL (1M) & \cellcolor{gray!20}63.9 & \cellcolor{gray!20}97.8 & \cellcolor{gray!20}82.3 & \cellcolor{gray!20}9.7 & \cellcolor{gray!20}53.5 & \cellcolor{gray!20}47.2 & \cellcolor{gray!20}56.4 \\
 & SAIL & 36.4 & 96.2 & 71.2 & 3.9 & 36.8 & 24.1 & 35.6 \\
 & STRUCTURE & 38.5 & 96.7 & 72.2 & \textbf{5.4} & 39.5 & 23.6 & 38.2 \\
\midrule
\multirow{5}{*}{\rotatebox[origin=c]{90}{Semi-sup.}} & SAIL & 36.5 & 96.2 & 71.3 & 3.8 & 35.9 & 21.1 & 35.6 \\
 & STRUCTURE & 37.6 & 96.5 & 70.8 & 4.9 & 38.7 & 23.2 & 36.8 \\
 & NNCLR & 37.9 & 96.5 & 73.0 & 3.8 & 38.8 & 24.6 & 37.4 \\
 & S-CLIP & 35.3 & 95.9 & 69.3 & 4.4 & 37.6 & 22.5 & 36.4 \\
 & \textbf{SOTAlign (Ours)} & \textbf{50.0} & \textbf{97.5} & \textbf{78.3} & 5.0 & \textbf{42.4} & \textbf{30.1} & \textbf{46.1} \\
\bottomrule
\end{tabular}
\end{table}

\begin{table}
\centering
\caption{Alignment per dataset. Following the setup of SUE \citep{yacobi2025sue}, we train for alignment per dataset and evaluate image-text retrieval (Recall@5). We report SOTAlign results adhering to the architectural choices of SUE (MLP, embedding dimensionality of 8) and without these constraints.}
\label{tab:sue_appendix}
\setlength{\tabcolsep}{6pt}
\renewcommand{\arraystretch}{1.1}
\small
\begin{tabular}{@{}l cc cc cc@{}}
\toprule
& \multicolumn{2}{c}{\textbf{COCO}} & \multicolumn{2}{c}{\textbf{Flickr30k}} & \multicolumn{2}{c}{\textbf{Polyvore}} \\
& \multicolumn{2}{c}{\small 100 Pairs} & \multicolumn{2}{c}{\small 500 Pairs} & \multicolumn{2}{c}{\small 500 Pairs} \\
\cmidrule(lr){2-3} \cmidrule(lr){4-5} \cmidrule(lr){6-7}
\textbf{Method} & I2T & T2I & I2T & T2I & I2T & T2I \\
\midrule
CSA         & 1.3  & 1.0  & 1.3  & 0.8  & 1.3  & 1.0 \\
Contrastive & 8.5  & 5.8  & 9.5  & 9.8  & 13.8  & 11.5 \\
SUE         & 21.5 & 18.3 & 19.8 & 22.0 & 22.8 & 20.8 \\
\midrule
\textbf{SOTAlign} (with SUE constraints) & 27.0 & 28.8 & 48.5 & 48.8 &41.0 & 39.8 \\
\textbf{SOTAlign} (without SUE constraints) & \textbf{35.8} & \textbf{35.0} & \textbf{59.8} & \textbf{63.3} & \textbf{55.3} & \textbf{55.3} \\
\bottomrule
\end{tabular}
\end{table}

%% file: content/appendix/appendix_mathematical_details.tex
\clearpage
\section{Mathematical Details}
\newcommand{\logOTK}{\log{\text{OT}_\epsilon(K)}}

\subsection{Linear Alignment Models \label{appendix:linear_alignement_models}}

We now provide the closed-form solutions for the proposed linear alignment models.

\paragraph{Procrustes.}
The classical Orthogonal Procrustes problem is defined for two point clouds
$A, B \in \mathbb{R}^{n \times d}$ and seeks an orthogonal transformation that best aligns $A$ to $B$. It can be written as
\begin{equation}
\begin{aligned}
    \max_{P \in \mathbb{R}^{d \times d}}
    \; \langle P A^\top, B \rangle
    \quad
    \text{s.t. }
    P P^\top = I_{d}.
\end{aligned}
\end{equation}
This formulation learns a single linear mapping from $A$ to $B$ and implicitly assumes that both point clouds lie in the same ambient space $\mathbb{R}^d$.

However, Procrustes alignment is known to admit flexible generalizations beyond this setting~\citep{gower2004procrustes}. In particular, it can be extended to handle representations of different dimensionalities and to learn projections into a shared lower-dimensional space. We now introduce a natural variant of Procrustes alignment that is better suited to our setting.
 
\begin{proposition}[Closed form solution of Procrustes Alignment]
Let $A \in \mathbb{R}^{n \times d_a}$ and $B \in \mathbb{R}^{n \times d_b}$, and let
$d' \le \min\{d_a, d_b\}$.
Consider the optimization problem
\begin{equation}
\begin{aligned}
    \max_{P \in \mathbb{R}^{d' \times d_a},\, Q \in \mathbb{R}^{d' \times d_b}}
    \; \langle P A^\top, B Q^\top \rangle
    \quad
    \text{s.t. }
    P P^\top = I_{d'}, \;
    Q Q^\top = I_{d'} .
\end{aligned}
\end{equation}
Let the singular value decomposition of $A^\top B$ be
\[
A^\top B = U \Sigma V^\top ,
\]
with singular values in non-increasing order.
Then an optimal solution is given by
\[
W_x = U_{:,1:d'}, 
\qquad
W_y = V_{:,1:d'} .
\]
\end{proposition}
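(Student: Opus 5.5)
The plan is to reduce the problem to a trace maximization and bound it with von Neumann's trace inequality. Two conventions need fixing first. The objective pairs the $d'\times n$ matrix $PA^\top$ with the $n\times d'$ matrix $BQ^\top$, so I read $\langle PA^\top, BQ^\top\rangle$ as $\operatorname{tr}(PA^\top B Q^\top)$; this agrees with the Frobenius pairing of $AP^\top$ and $BQ^\top$ used in \eqref{eq:procruste}. Setting $M = A^\top B \in \mathbb{R}^{d_a\times d_b}$, the objective becomes
\[
\operatorname{tr}(P M Q^\top) = \operatorname{tr}\big(M\, Q^\top P\big).
\]
Also, the stated optimizer is to be understood as $P = W_x^\top = U_{:,1:d'}^\top$ and $Q = W_y^\top = V_{:,1:d'}^\top$. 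The transposes are needed for the shapes $d'\times d_a$ and $d'\times d_b$ to match the constraints $PP^\top = QQ^\top = I_{d'}$.

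\textbf{Upper bound.} Let $R = Q^\top P \in \mathbb{R}^{d_b\times d_a}$. Since $P$ and $Q$ have orthonormal rows, $\|R\|_{op}\le \|Q^\top\|_{op}\|P\|_{op} = 1$, so every singular value of $R$ is at most $1$. Moreover $\operatorname{rank} R \le d'$, so at most $d'$ of its singular values are nonzero. Von Neumann's trace inequality gives $\operatorname{tr}(MR) \le \sum_i \sigma_i(M)\sigma_i(R)$, with both sequences sorted non-increasingly. Combining the two facts about $R$ yields
\[
\operatorname{tr}(PMQ^\top) \le \sum_{i=1}^{d'} \sigma_i(M).
\]

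\textbf{Attainment.} Write $U_1 = U_{:,1:d'}$ and $V_1 = V_{:,1:d'}$, and take $P = U_1^\top$, $Q = V_1^\top$. The constraints hold because $U_1^\top U_1 = V_1^\top V_1 = I_{d'}$. Substituting the SVD,
\[
\operatorname{tr}(U_1^\top U\Sigma V^\top V_1) = \operatorname{tr}\big([I_{d'}\;0]\,\Sigma\,[I_{d'}\;0]^\top\big) = \sum_{i=1}^{d'}\sigma_i .
\]
This matches the upper bound, so this choice is optimal.

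\textbf{Main obstacle.} The delicate step is the upper bound. Von Neumann's inequality must be applied to rectangular matrices of possibly different dimensions, and the rank and operator-norm control of $Q^\top P$ must be stated carefully. The hypothesis $d'\le\min\{d_a,d_b\}$ is what guarantees that $U_1$ and $V_1$ exist and that the feasible set is nonempty. If one prefers to avoid von Neumann, an alternative is to extend $P$ and $Q$ to square orthogonal matrices, or to argue via Ky Fan's maximum principle.
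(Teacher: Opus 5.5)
Your proof is correct and complete, and it takes a different, stronger route than the paper's.

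The paper uses the same change of variables ($\tilde P = PU$, $\tilde Q = QV$) to reduce the objective to $\langle \tilde P\Sigma, \tilde Q\rangle$. It then applies a global Cauchy--Schwarz bound $\|\tilde P\Sigma\|_F\|\tilde Q\|_F$ and a projector/diagonal argument to obtain $\sqrt{d'}\,(\sum_{i\le d'}\sigma_i^2)^{1/2}$.

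That bound is not attained by $P = U_{:,1:d'}^\top$, $Q = V_{:,1:d'}^\top$, which give $\sum_{i\le d'}\sigma_i$. The two coincide only when $\sigma_1=\dots=\sigma_{d'}$. For example, $d'=2$ with $(\sigma_1,\sigma_2)=(2,0)$ gives a bound of $2\sqrt{2}$ but a value of $2$. So the paper's closing claim that ``the bound is tight'' fails in general, and its argument does not certify optimality.

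Your von Neumann bound $\sum_{i\le d'}\sigma_i(A^\top B)$ is sharp. It uses both $\|Q^\top P\|_{op}\le 1$ and $\operatorname{rank}(Q^\top P)\le d'$, which is exactly the information the global Cauchy--Schwarz step discards. So your argument actually proves the proposition.

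The cost is invoking von Neumann's inequality for rectangular matrices. If you want to stay elementary inside the paper's change of variables, a column-wise estimate also works. Write $\operatorname{tr}(\tilde P\Sigma\tilde Q^\top)=\sum_i \sigma_i\langle \tilde p_i,\tilde q_i\rangle\le\sum_i\sigma_i w_i$, where $\tilde p_i,\tilde q_i$ are the $i$-th columns of $\tilde P,\tilde Q$ and $w_i=\tfrac12(\|\tilde p_i\|^2+\|\tilde q_i\|^2)$. Then $w_i\in[0,1]$ and $\sum_i w_i\le d'$, which gives the same sharp bound.

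Your remarks on reading the objective as $\operatorname{tr}(PA^\top BQ^\top)$ and on the missing transposes in $W_x, W_y$ are also correct. The paper leaves both implicit; its CCA proposition does use $U_{:,1:d'}^\top$.
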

\begin{proof}
We introduce the change of variables
\[
\tilde{P} = P U,
\qquad
\tilde{Q} = Q V .
\]
Since $U$ and $V$ are orthogonal, $\tilde{P}$ and $\tilde{Q}$ also satisfy
$\tilde{P}\tilde{P}^\top = \tilde{Q}\tilde{Q}^\top = I_{d'}$.
Using invariance of the Frobenius inner product under orthogonal transformations, the objective rewrites as
\[
\langle P A^\top, B Q^\top \rangle
= \langle \tilde{P} \Sigma, \tilde{Q} \rangle .
\]

By the Cauchy--Schwarz inequality,
\[
\langle \tilde{P} \Sigma, \tilde{Q} \rangle
\le \| \tilde{P} \Sigma \|_F \, \| \tilde{Q} \|_F .
\]
Since $\tilde{Q} \in \mathbb{R}^{d' \times d_b}$ has orthonormal rows, we have
\[
\| \tilde{Q} \|_F^2 = \mathrm{tr}(\tilde{Q}\tilde{Q}^\top) = d'.
\]

We now bound $\| \tilde{P} \Sigma \|_F^2$.
By definition,
\[
\| \tilde{P} \Sigma \|_F^2
= \mathrm{tr}(\tilde{P} \Sigma^2 \tilde{P}^\top)
= \mathrm{tr}(\Sigma^2 \tilde{P}^\top \tilde{P}),
\]
where we used cyclic invariance of the trace.
Since $\tilde{P}$ has orthonormal rows, the matrix
\[
\Pi = \tilde{P}^\top \tilde{P}
\]
is an orthogonal projector of rank $d'$, with eigenvalues in $\{0,1\}$ and $\mathrm{tr}(\Pi)=d'$.

Let $\Sigma^2 = \mathrm{diag}(\sigma_1^2,\dots,\sigma_r^2)$ with
$\sigma_1 \ge \sigma_2 \ge \cdots \ge \sigma_r \ge 0$.
Then
\[
\mathrm{tr}(\Sigma^2 \Pi)
= \sum_{i=1}^r \sigma_i^2 \, \Pi_{ii}.
\]
Because $0 \le \Pi_{ii} \le 1$ for all $i$ and $\sum_i \Pi_{ii} = d'$, the sum is maximized by assigning weight $1$ to the $d'$ largest diagonal entries of $\Sigma^2$. Therefore,
\[
\mathrm{tr}(\Sigma^2 \Pi)
\le \sum_{i=1}^{d'} \sigma_i^2.
\]
Combining the above bounds yields
\[
\langle \tilde{P} \Sigma, \tilde{Q} \rangle
\le \sqrt{d'} \left(\sum_{i=1}^{d'} \sigma_i^2 \right)^{1/2},
\]
and the bound is tight when $\tilde{P} = \tilde{Q} = I_{d'}$ which concludes the proof.
\end{proof}

\paragraph{Canonical Correlation Analysis (CCA).}
Canonical Correlation Analysis (CCA) is a classical tool for studying linear relationships between two sets of variables and is widely used in multivariate statistics and representation learning. In this work, CCA is already defined in \eqref{eq:CCA}; we briefly recall its formulation here in a form that is convenient for deriving its closed-form solution and for highlighting its connection to Procrustes alignment.

Denoting
\[
\Sigma_{x,x} = A^\top A, 
\qquad
\Sigma_{x,y} = A^\top B,
\qquad
\Sigma_{y,y} = B^\top B,
\]
the CCA problem \eqref{eq:CCA} can be equivalently rewritten as
\begin{equation}
\label{eq:CCA2}
\begin{aligned}
    (W_x, W_y)
    &= \arg\max_{P, Q} \; \langle P \Sigma_{x,y}, Q \rangle \\
    &\text{s.t.} \quad
    P \Sigma_{x,x} P^\top = I_{d'}, 
    \qquad
    Q \Sigma_{y,y} Q^\top = I_{d'} .
\end{aligned}
\end{equation}

We now present a standard derivation of the closed-form solution, included for completeness, which makes explicit the relationship between CCA and the Procrustes problem introduced above.

\begin{proposition}[Closed-form solution of CCA]
Let
\[
\Sigma_{x,x}^{-1/2} \Sigma_{x,y} \Sigma_{y,y}^{-1/2}
= U \Sigma V^\top
\]
be the singular value decomposition, with singular values in non-increasing order.
Then an optimal solution to \eqref{eq:CCA2} is given by
\[
W_x = U_{:,1:d'}^\top \Sigma_{x,x}^{-1/2},
\qquad
W_y = V_{:,1:d'}^\top \Sigma_{y,y}^{-1/2}.
\]
\end{proposition}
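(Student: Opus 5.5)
The plan is to reduce the CCA problem \eqref{eq:CCA2} to the Procrustes problem solved in the preceding proposition, using a whitening change of variables. Assume $\Sigma_{x,x}$ and $\Sigma_{y,y}$ are positive definite, which is what the regularization $\lambda$ in the implementation guarantees. Set
\[
\tilde P = P\,\Sigma_{x,x}^{1/2}, \qquad \tilde Q = Q\,\Sigma_{y,y}^{1/2}.
\]
Since the square roots are invertible, this map is a bijection between $(P,Q)$ and $(\tilde P,\tilde Q)$.

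The first step is to rewrite the constraints. We have $P\Sigma_{x,x}P^\top = \tilde P\tilde P^\top$, so the constraint becomes $\tilde P\tilde P^\top = I_{d'}$, and likewise $\tilde Q\tilde Q^\top = I_{d'}$. Thus the feasible set becomes exactly the orthonormal-rows constraint of the Procrustes proposition.

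The second step is to rewrite the objective:
\[
\langle P\Sigma_{x,y}, Q\rangle = \mathrm{tr}\bigl(P\Sigma_{x,y}Q^\top\bigr) = \mathrm{tr}\bigl(\tilde P\, M\, \tilde Q^\top\bigr) = \langle \tilde P M, \tilde Q\rangle,
\]
where
\[
M = \Sigma_{x,x}^{-1/2}\Sigma_{x,y}\Sigma_{y,y}^{-1/2}.
\]
This matches the Procrustes objective $\langle P A^\top, B Q^\top\rangle = \mathrm{tr}(P A^\top B Q^\top)$ with $A^\top B$ replaced by $M$.

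The third step is to apply the Procrustes proof verbatim. That proof only used the SVD of the cross matrix $A^\top B = U\Sigma V^\top$ and never the factorization into $A$ and $B$. Applied to $M = U\Sigma V^\top$, it gives the optimum $\tilde P = U_{:,1:d'}^\top$ and $\tilde Q = V_{:,1:d'}^\top$; the transposes account for the $d'\times d$ shape convention. Undoing the change of variables gives
\[
W_x = U_{:,1:d'}^\top \Sigma_{x,x}^{-1/2}, \qquad W_y = V_{:,1:d'}^\top\Sigma_{y,y}^{-1/2}.
\]

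The main obstacle is making the appeal to the Procrustes proposition legitimate. Its argument has two ingredients: the rotation $\tilde P U$, $\tilde Q V$, and the trace bound with the rank-$d'$ projector. Both go through for an arbitrary matrix in place of $A^\top B$, so this step is checking, not new mathematics.

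It is also worth re-verifying tightness directly. With $\tilde P = U_{:,1:d'}^\top$ and $\tilde Q = V_{:,1:d'}^\top$, the objective equals
\[
\mathrm{tr}\bigl(U_{:,1:d'}^\top U\Sigma V^\top V_{:,1:d'}\bigr) = \sum_{i\le d'}\sigma_i .
\]
This is the correct maximal value: it follows from the von Neumann trace inequality, which gives $\mathrm{tr}(\tilde P M\tilde Q^\top) \le \sum_{i\le d'}\sigma_i$ for matrices with orthonormal rows. I would cite that inequality as the cleanest justification rather than rely on the Cauchy--Schwarz bound used earlier, since the Cauchy--Schwarz bound is not attained in general.
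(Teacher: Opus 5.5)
Your proof is correct and follows the paper's route: the whitening substitution $\tilde P = P\,\Sigma_{x,x}^{1/2}$, $\tilde Q = Q\,\Sigma_{y,y}^{1/2}$ turns the constraints into orthonormal rows and the objective into $\langle \tilde P M, \tilde Q\rangle$, and the Procrustes result then gives $\tilde P = U_{:,1:d'}^\top$, $\tilde Q = V_{:,1:d'}^\top$. Your remark on the Procrustes step is also right and worth keeping: the paper's Cauchy--Schwarz bound $\sqrt{d'}\,(\sum_{i\le d'}\sigma_i^2)^{1/2}$ equals the attained value $\sum_{i\le d'}\sigma_i$ only when the top $d'$ singular values coincide, so it is the von Neumann trace inequality (equivalently, the Ky Fan $d'$-norm bound) that actually proves optimality --- not applying that proof ``verbatim'' as your third step suggests.
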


\begin{proof}
We introduce the change of variables
\[
\tilde{P} = P \Sigma_{x,x}^{1/2},
\qquad
\tilde{Q} = Q \Sigma_{y,y}^{1/2}.
\]
Under this transformation, the constraints become
\[
\tilde{P} \tilde{P}^\top = I_{d'},
\qquad
\tilde{Q} \tilde{Q}^\top = I_{d'},
\]
and the objective rewrites as
\begin{align*}
\langle P \Sigma_{x,y}, Q \rangle
&= \left\langle 
\tilde{P} \, \Sigma_{x,x}^{-1/2} \Sigma_{x,y} \Sigma_{y,y}^{-1/2},
\tilde{Q}
\right\rangle .
\end{align*}

Thus, the CCA problem reduces to an orthogonal Procrustes problem:
\[
\max_{\tilde{P} \tilde{P}^\top = \tilde{Q} \tilde{Q}^\top = I_{d'}}
\left\langle 
\tilde{P} M, \tilde{Q}
\right\rangle,
\quad
\text{where }
M = \Sigma_{x,x}^{-1/2} \Sigma_{x,y} \Sigma_{y,y}^{-1/2}.
\]
Let $M = U \Sigma V^\top$ be its singular value decomposition. By the Procrustes result, the maximum is attained for
\[
\tilde{P} = U_{:,1:d'}^\top,
\qquad
\tilde{Q} = V_{:,1:d'}^\top.
\]
Substituting back yields
\[
P = U_{:,1:d'}^\top \Sigma_{x,x}^{-1/2},
\qquad
Q = V_{:,1:d'}^\top \Sigma_{y,y}^{-1/2},
\]
which concludes the proof.
\end{proof}

\subsection{Optimal Transport \label{appendix_ot}}

\paragraph{Introduction to Optimal Transport}
We briefly recall the discrete optimal transport (OT) problem and its entropic relaxation. We refer to \cite{peyre2019computational} for more details. Let $n \in \mathbb{N}$ and denote by $\mathcal{P}_n$ the set of permutation matrices,
\[
\mathcal{P}_n = \{ P \in \{0,1\}^{n \times n} \mid P \mathbf{1} = \mathbf{1},\; P^\top \mathbf{1} = \mathbf{1} \},\tag{Permutations Matrices}
\]
and by $\Pi_n$ the set of bistochastic matrices,
\[
\Pi_n = \{ T \in \mathbb{R}_{+}^{n \times n} \mid T \mathbf{1} = \mathbf{1},\; T^\top \mathbf{1} = \mathbf{1} \tag{Transport Plans} \}.
\]
We further define the (negative) entropy of a transport plan $T \in \Pi_n$ as
\[
H(T) = \sum_{i,j} T_{ij} \log T_{ij}. \tag{Negative Entropy}
\]

In the discrete OT setting, we are given two sets of points indexed by $i,j \in \{1,\dots,n\}$ and a cost matrix $C \in \mathbb{R}^{n \times n}$, where $C_{ij}$ denotes the cost of transporting mass from point $i$ to point $j$.
The classical Monge formulation seeks the permutatin minimizing the total transport cost,
\begin{equation}
    \min_{P \in \mathcal{P}_n} \langle P, C \rangle. \tag{Monge}
\end{equation}
This formulation enforces a one-to-one matching and is combinatorial in nature.

Kantorovich proposed a convex relaxation of this problem by allowing fractional transport plans,
\begin{equation}
    \min_{T \in \Pi_n} \langle T, C \rangle, \tag{Kantorovich}
\end{equation}
which can be shown to be equivalent to the Monge formulation in the discrete balanced setting \citep{peyre2019computational}, while being more flexible and amenable to generalizations such as non-uniform marginals and continuous measures.

When the cost matrix is defined as $C_{i,j}=d(x_i,y_j)^p$, where $d$ is a distance on the underlying space and $p\geq1$, the optimal value of the Kantorovich problem induces the p-Wasserstein distance, defined as 

\begin{equation}
    W_p = \left(\min_{T \in \Pi_n} \langle T, C \rangle \right)^{1/p}. \tag{Wasserstein distance}
\end{equation}

To further improve computational tractability, \citet{cuturi2013sinkhorn} introduced the entropic regularized OT problem, also known as the Sinkhorn relaxation,
\begin{equation}
    \min_{T \in \Pi_n} \; \langle T, C \rangle + \varepsilon H(T),
\end{equation}
where $\varepsilon > 0$ controls the strength of the regularization. This formulation yields a strictly convex objective and can be efficiently solved using the Sinkhorn algorithm.

\paragraph{Sliced Wasserstein distance} Although entropically regularized optimal transport can be efficiently solved using the Sinkhorn algorithm, its computational complexity remains $O(n^2)$, which becomes prohibitive when comparing distributions supported on millions of high-dimensional points. To address this limitation, the sliced Wasserstein distance (SW) was introduced \cite{bonneel2015sliced}. The key observation underlying this approach is that the Wasserstein distance between one-dimensional distributions admits a closed-form solution obtained by sorting the samples and matching them monotonically. The sliced Wasserstein distance exploits this property by projecting high-dimensional distributions onto multiple one-dimensional subspaces and averaging the resulting one-dimensional Wasserstein distances.

Let $\mu = \sum_{i=1}^n a_i \delta_{x_i}$ and $\nu = \sum_{j=1}^m b_i \delta_{y_j}$ be two discrete probability measures with $x_i, y_j \in \mathbb{R}^d$. For a given projection $\theta \in \mathbb{S}^{d-1}$, we define the projected one dimensional distributions as

\begin{equation}
\mu^\theta=\sum_{i=1}^n a_i \delta_{\left\langle x_i, \theta  \right\rangle}, \quad \nu^\theta=\sum_{j=1}^m b_j \delta_{\left\langle y_j, \theta  \right\rangle}.
\end{equation}

Given a set of projection directions $(\theta_1,...,\theta_N)$, the p-SW distance is defined as 

\begin{equation}
    SW_p(\mu,\nu)=\left(\frac{1}{N}\sum_{i=1}^N W_p^p(\mu^{\theta_i},\nu^{\theta_i})\right)^{1/p}.
\end{equation}

When data are constrained to the unit sphere, the spherical sliced Wasserstein (SSW) distance \cite{liu2024linear} replaces linear projections with angular projections and computes optimal transport on the circle, thereby respecting the intrinsic geometry of directional data.

\paragraph{Total sliced Wasserstein distance} We introduce the total spherical sliced Wasserstein distance $\text{d}$ as a measure of the distribution shift between a source of unpaired data $\mathcal{D}=(X,Y)$ and the paired dataset $\mathcal{D}_p=(A,B)$. Using the spherical sliced Wasserstein distance in place of the standard sliced Wasserstein distance ensures that the resulting distances between text distributions and between image distributions are computed on a comparable scale, enabling fair comparison across modalities.
The total SSW distance is defined as 

\begin{equation}
    \text{d}(\mathcal{D},\mathcal{D}_p)=\text{SSW}(X,A) +\text{SSW}(Y,B).
\end{equation}

\paragraph{Theoretical Results.}
We now present the theoretical contribution underlying our proposed divergence and its efficient differentiation.
Throughout, we work with an affinity matrix $K \in \mathbb{R}^{n \times n}$ rather than a cost matrix, following the convention $C = -K$ for consistency with the rest of the paper.

For any transport plan $T \in \Pi_n$, we define the entropic OT objective
\begin{equation}
    W_\epsilon(T, K) = - \langle T, K \rangle  + \epsilon H(T),
\end{equation}
and the associated optimal value
\begin{equation}
    W_\epsilon(K) = \min_{T \in \Pi_n} W_\epsilon(T, K).
\end{equation}
Finally, we denote
\begin{equation}
    \mathrm{OT}_\epsilon(K) = \argmin_{T \in \Pi_n} W_\epsilon(T, K)
\end{equation}
the corresponding optimal transport plan.

Importantly, we recall the following fundamental result in entropic optimal transport states that there exist dual potentials $u,v \in \mathbb{R}^n$ such that the optimal transport plan admits the decomposition
\begin{equation}
    \logOTK = u \mathbf{1}^\top + \frac{K}{\epsilon} + \mathbf{1} v^\top,
\end{equation}
see e.g. \citet{peyre2019computational}. This characterization allows us to establish the following lemma.

\begin{lemma}
For any transport plan $T \in \Pi_n$,
\begin{equation}
    \langle T, \logOTK \rangle
    = \frac{\langle T, K \rangle + W_\epsilon(K)}{\epsilon}.
\end{equation}
\label{lemma1}
\end{lemma}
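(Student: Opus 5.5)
The plan is to plug the dual decomposition $\log \mathrm{OT}_\epsilon(K) = u\mathbf{1}^\top + K/\epsilon + \mathbf{1}v^\top$ into both sides of the lemma. On the left, this decomposition turns the inner product into terms involving the marginals of $T$. Evaluating the same decomposition at $T^\star = \mathrm{OT}_\epsilon(K)$ itself will identify the constant $u^\top\mathbf{1} + \mathbf{1}^\top v$ in terms of $W_\epsilon(K)$.

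\emph{Step 1: expand for an arbitrary plan.} For any $T \in \Pi_n$, use $T\mathbf{1} = \mathbf{1}$ and $T^\top\mathbf{1} = \mathbf{1}$. This gives $\langle T, u\mathbf{1}^\top\rangle = u^\top T\mathbf{1} = u^\top \mathbf{1}$ and $\langle T, \mathbf{1}v^\top\rangle = \mathbf{1}^\top T v = \mathbf{1}^\top v$. Hence
\[
\langle T, \log \mathrm{OT}_\epsilon(K)\rangle = \frac{\langle T, K\rangle}{\epsilon} + u^\top\mathbf{1} + \mathbf{1}^\top v .
\]
The potential terms do not depend on the particular $T$; this is where the marginal constraints of $\Pi_n$ are essential.

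\emph{Step 2: identify the constant.} Apply Step 1 with $T = T^\star$. The left-hand side becomes $\langle T^\star, \log T^\star\rangle = H(T^\star)$, which gives
\[
u^\top\mathbf{1} + \mathbf{1}^\top v = H(T^\star) - \frac{\langle T^\star, K\rangle}{\epsilon} = \frac{-\langle T^\star, K\rangle + \epsilon H(T^\star)}{\epsilon} = \frac{W_\epsilon(K)}{\epsilon}.
\]
The last equality holds because $T^\star$ is the minimizer, so $W_\epsilon(T^\star, K) = W_\epsilon(K)$. Substituting this constant back into Step 1 yields exactly $\bigl(\langle T, K\rangle + W_\epsilon(K)\bigr)/\epsilon$.

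There is no real obstacle. The only points needing care are well-definedness issues. First, $\log T^\star$ is finite because $T^\star$ has strictly positive entries, which follows from the exponential form of the plan. Second, $H(T^\star) = \langle T^\star, \log T^\star\rangle$ matches the paper's convention for negative entropy. Third, the potentials are determined only up to a shift $u + c\mathbf{1}$, $v - c\mathbf{1}$, and this shift leaves the sum $u^\top\mathbf{1} + \mathbf{1}^\top v$ invariant. These points should be stated explicitly, with the rest treated as direct computation.

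This lemma then feeds into Theorem~\ref{th:gradient}. Write $\mathrm{KLOT} = \langle P^*, \log P^*\rangle - \langle P^*, \log \mathrm{OT}_\epsilon(K)\rangle$ with $P^* = \mathrm{OT}_{\epsilon^*}(K^*)$ and apply the lemma to the second term. By the envelope theorem, $\nabla_K W_\epsilon(K) = -\mathrm{OT}_\epsilon(K)$, which gives the stated gradient $\bigl(\mathrm{OT}_\epsilon(K) - \mathrm{OT}_{\epsilon^*}(K^*)\bigr)/\epsilon$.
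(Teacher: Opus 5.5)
Your proof is correct and matches the paper's argument: you expand $\langle T, \log \mathrm{OT}_\epsilon(K)\rangle$ with the dual decomposition, use the bistochastic marginals to reduce the potential terms to the $T$-independent constant $u^\top\mathbf{1} + \mathbf{1}^\top v$, and then evaluate at $T = \mathrm{OT}_\epsilon(K)$ to identify that constant as $W_\epsilon(K)/\epsilon$. Your remarks on the strict positivity of the plan and on the invariance of $u^\top\mathbf{1} + \mathbf{1}^\top v$ under shifts of the potentials are useful details that the paper leaves implicit.
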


\begin{proof}
For any $T \in \Pi_n$, we have
\[
\langle T, \logOTK \rangle
= \langle T, u \mathbf{1}^\top \rangle
+ \langle T, \mathbf{1} v^\top \rangle
+ \frac{1}{\epsilon} \langle T, K \rangle.
\]
Since $T$ is bistochastic, $\langle T, u \mathbf{1}^\top \rangle = \langle \mathbf{1}, u \rangle$ and $\langle T, \mathbf{1} v^\top \rangle = \langle \mathbf{1}, v \rangle$, yielding
\[
\langle T, \logOTK \rangle
= \langle \mathbf{1}, u + v \rangle
+ \frac{1}{\epsilon} \langle T, K \rangle.
\]
In particular, setting $T = \mathrm{OT}_\epsilon(K)$ yields
\[
H(\mathrm{OT}_\epsilon(K)) = \langle \mathbf{1}, u + v \rangle
+ \frac{1}{\epsilon} \langle \mathrm{OT}_\epsilon(K), K \rangle.
\]
which recovers a classical duality result
\[
 \langle \mathbf{1}, u + v \rangle = \frac{W_\epsilon(K)}{\epsilon}.
\]
which gives the result.
\end{proof}

Our main theoretical result follows by combining Lemma~\ref{lemma1} with the envelope theorem, yielding an explicit expression for the gradient of the proposed divergence.

\begin{theorem}
For any transport plan $T \in \Pi_n$,
\begin{equation}
    \nabla_K \, \mathrm{KL}\!\left(T \,\|\, \mathrm{OT}_\epsilon(K)\right)
    = \frac{\mathrm{OT}_\epsilon(K) - T}{\epsilon}.
\end{equation}
\end{theorem}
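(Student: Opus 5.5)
We expand the KL divergence and isolate its only $K$-dependent part, which Lemma~\ref{lemma1} rewrites in closed form. Since $T$ is fixed,
\[
\mathrm{KL}\!\left(T \,\|\, \mathrm{OT}_\epsilon(K)\right) = \langle T, \log T\rangle - \langle T, \logOTK \rangle - \langle T,\mathbf{1}\mathbf{1}^\top\rangle + \langle \mathrm{OT}_\epsilon(K), \mathbf{1}\mathbf{1}^\top\rangle .
\]
This uses the generalized KL with mass terms. The mass terms are both equal to $n$ because both matrices lie in $\Pi_n$, so they are constant in $K$. The first term is also independent of $K$. Hence only $-\langle T, \logOTK\rangle$ must be differentiated.

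Applying Lemma~\ref{lemma1} to the fixed plan $T \in \Pi_n$ gives
\[
-\langle T, \logOTK\rangle = -\frac{\langle T, K\rangle + W_\epsilon(K)}{\epsilon}.
\]
This identity holds for every $K$. It removes any need to differentiate the Sinkhorn fixed point itself. The gradient of $\langle T, K\rangle$ with respect to $K$ is simply $T$.

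It remains to compute $\nabla_K W_\epsilon(K)$. Here we use the envelope theorem (Danskin's theorem). $W_\epsilon(K) = \min_{T' \in \Pi_n} -\langle T', K\rangle + \epsilon H(T')$ is a minimum over a compact convex set of functions that are affine in $K$. The objective is strictly convex in $T'$ for $\epsilon>0$, so the minimizer $\mathrm{OT}_\epsilon(K)$ is unique. Therefore $W_\epsilon$ is differentiable, with
\[
\nabla_K W_\epsilon(K) = -\mathrm{OT}_\epsilon(K).
\]
Combining the pieces,
\[
\nabla_K \mathrm{KL} = -\frac{T - \mathrm{OT}_\epsilon(K)}{\epsilon} = \frac{\mathrm{OT}_\epsilon(K)-T}{\epsilon}.
\]
Taking $T = \mathrm{OT}_{\epsilon^*}(K^*)$, which does not depend on $K$, yields Theorem~\ref{th:gradient}.

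The main point requiring care is the envelope step. Danskin's theorem needs uniqueness of the minimizer, which strict convexity of the entropy on $\Pi_n$ provides. Alternatively, one can argue through the dual: $W_\epsilon(K)/\epsilon = \langle \mathbf{1}, u+v\rangle$ with smooth potentials, and differentiating the dual objective gives the same result.

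A second point is that Lemma~\ref{lemma1} relies on the representation $\log \mathrm{OT}_\epsilon(K) = u\mathbf{1}^\top + K/\epsilon + \mathbf{1}v^\top$. For the gradient, we only ever use the identity of Lemma~\ref{lemma1} as a function of $K$, so the possible non-uniqueness of the potentials $(u,v)$ up to a constant shift is harmless.
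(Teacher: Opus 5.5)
Your proof is correct and follows the paper's argument: you isolate $-\langle T, \log \mathrm{OT}_\epsilon(K)\rangle$ as the only $K$-dependent term, rewrite it via Lemma~\ref{lemma1}, and differentiate $W_\epsilon(K)$ with the envelope theorem to obtain $\nabla_K W_\epsilon(K) = -\mathrm{OT}_\epsilon(K)$. The mass terms of the generalized KL (constant on $\Pi_n$) and your explicit appeal to uniqueness of the minimizer for Danskin's theorem are harmless refinements of that same route.
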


\begin{proof}
By definition,
\begin{align*}
\mathrm{KL}\!\left(T \,\|\, \mathrm{OT}_\epsilon(K)\right)
&= \left\langle T, \log \frac{T}{\mathrm{OT}_\epsilon(K)} \right\rangle \\
&= \left\langle T, \log T \right\rangle - \left\langle T, \log \mathrm{OT}_\epsilon(K) \right\rangle
\end{align*}

and only the second term depends on $K$. Differentiating yields
\[
\nabla_K \, \mathrm{KL}\!\left(T \,\|\, \mathrm{OT}_\epsilon(K)\right)
= - \nabla_K \langle T, \logOTK \rangle.
\]
Applying Lemma~\ref{lemma1} gives
\[
\nabla_K \, \mathrm{KL}\!\left(T \,\|\, \mathrm{OT}_\epsilon(K)\right)
= -\frac{1}{\epsilon} \nabla_K \big( \langle T, K \rangle + W_\epsilon(K) \big).
\]
Since $W_\epsilon(K)$ is defined as the minimum of $W_\epsilon(T,K)$ over $T \in \Pi_n$ which is a strongly convex problem, the envelope theorem implies
\[
\nabla_K W_\epsilon(K) = \nabla_K W_\epsilon(\mathrm{OT}_\epsilon(K),K) = - \mathrm{OT}_\epsilon(K)
\]
from which the result follows.
\end{proof}

\subsection{Centered Kernel Alignment (CKA)}

Centered Kernel Alignment (CKA)~\citep{cristianini2001kernel} is a widely used measure of similarity between representation spaces, defined in terms of their associated kernel (or Gram) matrices.
Let $H = I_n - \frac{1}{n}\mathbf{1}\mathbf{1}^\top$ denote the centering matrix and $\|\cdot\|_F$ the Frobenius norm.
Given two kernel matrices $K_1, K_2 \in \mathbb{R}^{n \times n}$, CKA is defined as
\begin{equation}
    \mathrm{CKA}(K_1, K_2)
    = \frac{\langle K_1 H, H K_2 \rangle}
    {\sqrt{\langle K_1 H, H K_1 \rangle \, \langle K_2 H, H K_2 \rangle}} .
\end{equation}
For the sake of completeness we now share a few classical results regarding CKA.

\paragraph{Kernel centering.}
The matrix $H$ plays the role of centering the data in feature space.
We define the \emph{centered kernel} as
\begin{equation}
    \bar{K} = H K H .
\end{equation}
This operation corresponds to centering the underlying representations before computing pairwise similarities.
Indeed, in the linear case where $K = X X^\top$ for data matrix $X \in \mathbb{R}^{n \times d}$, we have
\begin{equation}
    \bar{K} = \bar{X} \bar{X}^\top,
\end{equation}
where $\bar{X}$ denotes the centered features $\bar{X}_i = X_i - \frac{1}{n} \sum_{j=1}^n X_j$, i.e., $\bar{X} = HX$.

\paragraph{CKA as a cosine affinity.} A well known property of CKA is that it can be interpreted as a cosine similarity between centered kernels, viewed as vectors in $\mathbb{R}^{n^2}$.

\begin{proposition}
Let $\bar{K}_1 = H K_1 H$ and $\bar{K}_2 = H K_2 H$.
Then CKA can be written as
\begin{equation}
    \mathrm{CKA}(K_1, K_2)
    = k\!\left( \mathrm{vec}(\bar{K}_1), \mathrm{vec}(\bar{K}_2) \right),
\end{equation}
where $k(\cdot,\cdot)$ denotes the cosine affinity and $\mathrm{vec}(\cdot)$ denotes matrix vectorization.
\end{proposition}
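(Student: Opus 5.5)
The plan is to expand both sides in terms of the centered kernels and observe that they coincide term by term. Throughout, I use the Frobenius inner product on $n\times n$ matrices, which equals the Euclidean inner product of the vectorizations, $\langle \mathrm{vec}(A),\mathrm{vec}(B)\rangle = \langle A,B\rangle = \mathrm{tr}(A^\top B)$.

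\textbf{Step 1 (properties of $H$).} The centering matrix $H = I_n - \frac{1}{n}\mathbf{1}\mathbf{1}^\top$ satisfies $H^\top = H$ and $H^2 = H$. Both follow from a one-line computation using $\mathbf{1}^\top\mathbf{1} = n$.

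\textbf{Step 2 (numerator).} Using cyclicity of the trace and the two properties above,
\[
\langle \bar K_1, \bar K_2\rangle = \mathrm{tr}(H K_1^\top H H K_2 H) = \mathrm{tr}(K_1^\top H K_2 H).
\]
On the other hand,
\[
\langle K_1 H, H K_2\rangle = \mathrm{tr}(H K_1^\top H K_2) = \mathrm{tr}(K_1^\top H K_2 H).
\]
So the numerator of $\mathrm{CKA}$ equals $\langle \mathrm{vec}(\bar K_1), \mathrm{vec}(\bar K_2)\rangle$.

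\textbf{Step 3 (denominator).} Applying Step 2 with $K_2 = K_1$ gives $\langle K_1 H, H K_1\rangle = \|\bar K_1\|_F^2 = \|\mathrm{vec}(\bar K_1)\|^2$. The same holds for $K_2$. Hence the denominator equals $\|\mathrm{vec}(\bar K_1)\|\,\|\mathrm{vec}(\bar K_2)\|$.

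\textbf{Step 4 (conclusion).} Dividing the numerator by the denominator gives exactly the cosine affinity $k$ defined in the Notations, which is the claim.

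The argument holds for general (not necessarily symmetric) kernels, because $K_1^\top$ is carried through the trace identities. There is no real obstacle. The only point requiring care is the trace bookkeeping in Step 2, in particular checking that two copies of $H$ merge into one via idempotence. I also implicitly assume $\bar K_1, \bar K_2 \neq 0$, which is needed for both sides to be well defined.
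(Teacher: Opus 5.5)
Your proof is correct and follows the same route as the paper. Symmetry and idempotence of $H$ together with cyclicity of the trace give $\langle \bar K_1,\bar K_2\rangle=\langle K_1H,HK_2\rangle$, and specializing to $K_1=K_2$ turns the denominator into $\|\bar K_1\|_F\|\bar K_2\|_F$. Carrying $K_1^\top$ through the traces is a small improvement, because the paper writes its traces without transposes and so tacitly assumes $K_1$ is symmetric, yet it applies CKA to cross-modal affinity matrices $K[f(X),g(Y)]$, which need not be symmetric.
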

\begin{proof}
Recall that $H = I_n - \frac{1}{n}\mathbf{1}\mathbf{1}^\top$ is symmetric and idempotent, i.e., $H^\top = H$ and $H^2 = H$.
We compute
\begin{align*}
\langle H K_1 H, H K_2 H \rangle
&= \mathrm{tr}\!\left( H K_1 H \, H K_2 H \right) \\
&= \mathrm{tr}\!\left( H K_1 H K_2 H \right) \\
&= \mathrm{tr}\!\left( K_1 H K_2 H \right) \\
&= \langle K_1 H, H K_2 \rangle ,
\end{align*}
where we used cyclic invariance of the trace and the idempotence of $H$.

In particular, setting $K_1 = K_2 = K$ yields
\begin{equation}
\langle H K H, H K H \rangle = \| H K H \|_F^2 .
\end{equation}
Combining these identities proves that CKA is exactly the cosine similarity between the vectorized centered kernels.
\end{proof}

\paragraph{Computational Complexity.} 
We conclude this section by providing the computational complexity of CKA for linear kernels, as considered in this work.

\begin{proposition}\label{th:cka}
Assume that 
\[
K_1 = X_1 X_1^\top \quad \text{with } X_1 \in \mathbb{R}^{n \times d_1},
\qquad
K_2 = X_2 X_2^\top \quad \text{with } X_2 \in \mathbb{R}^{n \times d_2},
\]
and denote \(d = \max(d_1, d_2)\).
Then the memory complexity of computing \(\mathrm{CKA}(K_1, K_2)\) is
\[
\mathcal{O}\big(n d + d^2 \big).
\]
\end{proposition}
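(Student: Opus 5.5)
The plan is to avoid ever forming the $n\times n$ matrices $K_1$, $K_2$ or $H$. Every quantity appearing in $\mathrm{CKA}(K_1,K_2)$ will be rewritten as a Frobenius norm of a small $d_i\times d_j$ cross-covariance matrix. We start from the previous proposition, which gives $\langle K_1H, HK_2\rangle = \langle HK_1H, HK_2H\rangle$. Substituting the linear kernels and using $H^\top=H$, $H^2=H$ yields
\[
HK_iH = (HX_i)(HX_i)^\top = \bar X_i \bar X_i^\top ,
\]
where $\bar X_i = HX_i$ is the column-centered data. Computing $\bar X_i$ only requires subtracting the column means, $\bar X_i = X_i - \tfrac1n \mathbf{1}\mathbf{1}^\top X_i$. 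This costs $\mathcal{O}(nd_i)$ memory and never materializes $H$.

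The key identity is then obtained by cyclic invariance of the trace:
\[
\langle \bar X_1\bar X_1^\top, \bar X_2\bar X_2^\top\rangle = \mathrm{tr}\big(\bar X_1\bar X_1^\top \bar X_2\bar X_2^\top\big) = \mathrm{tr}\big((\bar X_1^\top \bar X_2)(\bar X_2^\top\bar X_1)\big) = \|\bar X_1^\top \bar X_2\|_F^2 .
\]
Applying the same identity with $K_1=K_2$ gives the two normalizing terms, $\|\bar X_1^\top\bar X_1\|_F^2$ and $\|\bar X_2^\top\bar X_2\|_F^2$. Hence
\[
\mathrm{CKA}(K_1,K_2) = \frac{\|\bar X_1^\top\bar X_2\|_F^2}{\|\bar X_1^\top\bar X_1\|_F\,\|\bar X_2^\top\bar X_2\|_F}.
\]

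For the memory accounting, storing $X_i$ and $\bar X_i$ costs $\mathcal{O}(nd)$. The three cross-covariance matrices have sizes $d_1\times d_2$, $d_1\times d_1$ and $d_2\times d_2$, so they cost $\mathcal{O}(d^2)$. Forming each product $\bar X_i^\top\bar X_j$ by a standard matrix multiplication needs no intermediate storage beyond the output. The squared norms and the final ratio are scalars. Summing these contributions gives $\mathcal{O}(nd+d^2)$.

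The only delicate step is making sure no implicit $n\times n$ object appears. In particular, the centering must be applied to the features rather than to the kernel, and the trace must be rearranged before any product is evaluated. Both are settled by the identities above, so I expect the argument to be short and essentially bookkeeping.
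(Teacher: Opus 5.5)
Your proposal is correct and follows the same route as the paper. You center the features in $\mathcal{O}(nd)$ memory, use trace cyclicity to turn the three inner products into $\|\bar X_1^\top \bar X_2\|_F^2$, $\|\bar X_1^\top \bar X_1\|_F^2$ and $\|\bar X_2^\top \bar X_2\|_F^2$, and then count storage for the data and the $d_i\times d_j$ Gram matrices; your explicit step $HK_iH=\bar X_i\bar X_i^\top$ is slightly more careful than the paper's shortcut, which writes $\|K_1H\|_F^2$ in place of $\langle K_1H,HK_1\rangle$ and relies implicitly on $K_iH=K_i$ for centered data.
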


\begin{proof}
Assume that \(X_1\) and \(X_2\) are centered, which can be done in \(\mathcal{O}(nD)\) time and memory.
Using the identities established above, we have
\[
\langle K_1 H, H K_2 \rangle
= \langle X_1 X_1^\top, X_2 X_2^\top \rangle
= \langle X_1^\top X_2, \, X_1^\top X_2 \rangle
= \| X_1^\top X_2 \|_F^2 .
\]
Thus, computing the numerator only requires storing the \(d_1 \times d_2\) matrix \(X_1^\top X_2\).

Similarly,
\[
\| K_1 H \|_F^2 = \| X_1^\top X_1 \|_F^2, 
\qquad
\| K_2 H \|_F^2 = \| X_2^\top X_2 \|_F^2,
\]
which require storing only the \(d_1 \times d_1\) and \(d_2 \times d_2\) Gram matrices, respectively.

Therefore, the overall memory complexity is dominated by storing \(X_1, X_2\) and the associated Gram matrices, yielding
\[
\mathcal{O}(nD + d^2),
\]
as claimed.
\end{proof}